\newtheorem{definition}{Definition}
\newtheorem{theorem}{Theorem}
\newtheorem{prop}{Proposition}
\newtheorem{remark}{Remark}
\begin{document}

\title{Epicasting: An Ensemble Wavelet Neural Network for Forecasting Epidemics}

\author{Madhurima Panja\textsuperscript{1,}
\footnote[0]{\textit{Equal Contributions}}, 
Tanujit Chakraborty\textsuperscript{1, 0, 3}
\footnote[2]{\textit{Corresponding author}: \textit{Mail}: tanujit.chakraborty@sorbonne.ae},
Uttam Kumar\textsuperscript{1},
Nan Liu\textsuperscript{3}\\
{\scriptsize \textsuperscript{1} Spatial Computing Laboratory, Center for Data Sciences, International Institute of Information Technology Bangalore, India.}\\
{\scriptsize \textsuperscript{2} Department of Science and Engineering, Sorbonne University Abu Dhabi, UAE.} \\
{\scriptsize \textsuperscript{3} Sorbonne Center for Artificial Intelligence, Paris and Abu Dhabi.} \\
{\scriptsize \textsuperscript{4} Duke-NUS Medical School, National University of Singapore, Singapore.}}

\begin{abstract}
Infectious diseases remain among the top contributors to human illness and death worldwide, among which many diseases produce epidemic waves of infection. The lack of specific drugs and ready-to-use vaccines to prevent most of these epidemics worsens the situation. These force public health officials and policymakers to rely on early warning systems generated by accurate and reliable epidemic forecasters. Accurate forecasts of epidemics can assist stakeholders in tailoring countermeasures, such as vaccination campaigns, staff scheduling, and resource allocation, to the situation at hand, which could translate to reductions in the impact of a disease. Unfortunately, most of these past epidemics exhibit nonlinear and non-stationary characteristics due to their spreading fluctuations based on seasonal-dependent variability and the nature of these epidemics. We analyze various epidemic time series datasets using a maximal overlap discrete wavelet transform (MODWT) based autoregressive neural network and call it Ensemble Wavelet Neural Network (EWNet) model. MODWT techniques effectively characterize non-stationary behavior and seasonal dependencies in the epidemic time series and improve the nonlinear forecasting scheme of the autoregressive neural network in the proposed ensemble wavelet network framework. From a nonlinear time series viewpoint, we explore the asymptotic stationarity of the proposed EWNet model to show the asymptotic behavior of the associated Markov Chain. We also theoretically investigate the effect of learning stability and the choice of hidden neurons in the proposal. From a practical perspective, we compare our proposed EWNet framework with twenty-two statistical, machine learning, and deep learning models for fifteen real-world epidemic datasets with three test horizons using four key performance indicators. Experimental results show that the proposed EWNet is highly competitive compared to the state-of-the-art epidemic forecasting methods. 
\end{abstract}
\begin{keyword}
Wavelet methods, MODWT, epidemiology, neural networks, time series forecasting.
\end{keyword}
\maketitle
\section{Introduction} \label{sec:intro}
Epidemiological modeling is a centuries-old field of research; however, still handy in guiding decision-making and devising appropriate interventions that mitigate the impacts of epidemics \cite{snow1855mode, hamer1906epidemic, mckendrick1914studies}. Most recently, epidemiological modeling and forecasting have become an immediate choice for designing policies for public health officials during outbreaks \cite{ferguson2001foot, keeling2001dynamics, funk2018real}. Epidemiological forecasting models (we will henceforth refer to as epicasters) can be used to forecast the total number of confirmed cases to define intervention strategies (e.g., \cite{thompson2019detection}). Recent examples of real-time modeling during epidemic outbreaks can be drawn from vector-borne diseases such as Malaria \cite{rouamba2020addressing}, Dengue \cite{johnson2018phenomenological}, the flu (Influenza) \cite{rangarajan2019forecasting}, viral infection (Hepatitis) \cite{wang2018comparison}, and most recent Covid-19 pandemic \cite{chakraborty2020real,chakraborty2020nowcasting}. Despite tremendous progress in public health practice in the 21st century, infectious diseases caused by microorganisms are still the leading cause of morbidity and mortality on the global level. Out of many causes of mortality, deaths due to infectious diseases (more precisely, epidemics and pandemics) are one of the leading causes of death in the last centennial \cite{jemal2005trends}. 
Since many of these epidemics were not foreseen or predicted thus, their untimely outbreak results in the mass destruction of limited resources and the collapse of the economy \cite{bhatt2013global}. This problem is pivotal in developing countries, particularly with the concurrent rising trends in the occurrence of epidemics. Therefore, early knowledge of epidemic timing, intensity, and mortality rates are crucial in designing countermeasures to reduce the impact of such cumbersome outbreaks. However, these early warning systems are usually designed following two strategies: ``nowcasting" and ``forecasting". While the former helps develop situational awareness by predicting the disease incidence at a time near the available data \cite{wu2020nowcasting, chakraborty2020nowcasting}, the latter is designed for formulating control response strategies well ahead of time to handle large-scale emergencies \cite{johansson2019open,chakraborty2019forecasting}. In our research, we combine the tasks of nowcasting and forecasting for predicting the disease incidence (specifically epidemics) at a time near and after the available data and collectively designate it as ``epicasting". The primary goal of the epicasting models is to accurately forecast the disease dynamics for formulating real-time outbreak management decisions and developing informed future response policy \cite{roosa2020real, mcroberts2019using}. 

Within the scope of epidemic modeling and forecasting, several mechanistic (or deterministic) and phenomenological models have been proposed. Amongst the available deterministic methodologies, compartmental models are widely used to study the changes in the characteristics (e.g., age, gender) and state (e.g., susceptible to, infectious with, or recovering from a particular disease) of the population by segregating them into several ``compartments'' \cite{brauer2008compartmental}. The simple SIR (susceptible-infected-recovered) model, consisting of a system of three coupled non-linear ordinary differential equations, yields several fundamental insights into outbreaks of infectious diseases and their control \cite{weiss2013sir}. Despite these mechanistic models' vast applicability, they are more suitable for ``understanding'' the disease dynamics rather than real-time forecasting the outbreak, which is one of the primary motivations for epicasting \cite{keeling2011modeling}. To overcome the problem of limited predictability of the mechanistic approaches, several attempts to anticipate the infectious disease dynamics with statistical and machine learning approaches have been adopted \cite{clayton2013statistical,chakraborty2019forecasting,chakraborty2020nowcasting}. Some examples of epicasting models are as follows: Modified version of autoregressive (AR) model for forecasting dengue epidemic datasets \cite{deb2022ensemble}; Bayesian methodology for analyzing malaria outbreak \cite{rouamba2020addressing}; Autoregressive likelihood ratio for forecasting influenza incidence \cite{rangarajan2019forecasting} amongst many others. While statistical models focus on parametric methods for predicting disease outbreaks, modern machine learning, and deep learning methodologies have been used to learn temporal disease dynamics in a purely data-driven approach \cite{santosh2020lstm,wu2020deep}. Several other statistical forecasters have been developed in the recent literature; among them, the most popular models are Random Walk (RW) \cite{pearson1905problem}, Random Walk with Drift (RWD) \cite{entorf1997random}, Autoregressive Integrated Moving Average (ARIMA) \cite{box2015time}, Exponential Smoothing State Space (ETS) \cite{hyndman2008forecasting}, Theta Model \cite{assimakopoulos2000theta}, Wavelet-based ARIMA (WARIMA) \cite{aminghafari2007forecasting}, Self-exciting Threshold Autoregressive (SETAR) \cite{tong2009threshold}, Trigonometric Box-Cox ARIMA Trend seasonality (TBATS) \cite{de2011forecasting}, Bayesian Structural Time Series (BSTS)  \cite{scott2014predicting}, and Hybrid ARIMA-WARIMA (we call it Hybrid-1) \cite{chakraborty2020real}. With the increasing data availability and computation power, machine learning and deep learning architectures have become a vital part of epidemic forecasting and are widely used as individual forecasters or in a hybridized environment \cite{wang2019defsi,chakraborty2019forecasting,johansson2019open}. A non-exhaustive list of such machine learning and deep learning models are Artificial Neural Networks (ANN) \cite{rumelhart1986learning}, Autoregressive Neural Networks (ARNN) \cite{faraway1998time}, Support Vector Regression (SVR) \cite{smola2004tutorial}, Long Short-term Memory (LSTM) network \cite{hochreiter1997long}, NBeats \cite{oreshkin2019n}, Deep AR \cite{salinas2020deepar}, Temporal Convolutional Networks (TCN) \cite{chen2020probabilistic}, Transformers \cite{wu2020deep}, Hybrid ARIMA-ANN (we call it Hybrid-2) \cite{zhang2003time}, Hybrid ARIMA-ARNN (we call it Hybrid-3) \cite{chakraborty2019forecasting}. Applying leading-edge research concerning epicasting of dengue, malaria, influenza, and other infectious disease confirmed cases, recovered cases, and mortality using the above-mentioned compartmental, statistical, machine learning, and deep learning methods are given in Table \ref{Table1}. 

\begin{table*}
\caption{Related works on epidemiological forecasting}
  \centering
\tiny
\begin{tabular}{|p{2.5cm}|p{1cm}|p{1.8cm}|p{3.3cm}|p{3.3cm}|p{3.5cm}|}
\hline
\textbf{Research Topic}      
& \textbf{Disease}   
& \textbf{Countries} 
&\textbf{Model}      
& \textbf{Results}   
& \textbf{Conclusion} 
\\\hline
Forecasting epidemics based on geographical hierarchy \cite{gibson2021improving}
& Influenza
& United States
& Weighted combination of forecasts for different regions where the weights are selected relative to the population size -  a probabilistic coherence approach.
& The proposed approach is 79\% more efficient in predicting influenza incidence for multiple seasons.
& National incidence is a weighted average of region-wise incidence and selecting the weights based on the demography of regions is an essential consideration in improving forecasts.
\\ \hline
Parameter identification in  epidemic forecasting \cite{mummert2019parameter}
& Influenza
& United States
& Local lagged adapted generalized method of moments (LLGMM) for  parameter identification in compartmental SEIRS model.
& The model shows a good qualitative fit for long-term forecasts.
& The LLGMM parameter estimation technique shows promising results in forecasting the incidence rate and can be further improved by considering more complex models than SEIRS.
\\ \hline
Forecasting epidemics with sparse representation \cite{rangarajan2019forecasting}
& Dengue and influenza
& Brazil, Mexico, Singapore,
Taiwan, Thailand, and the United States
& Autoregressive Likelihood Ratio (ARLR) Methodology. 
& The forecasts generated by the ARLR model reduce the RMSE and MAE scores by 18\% compared to traditional forecasting techniques.
& Electronic health records, historical incidence data, and frequency of internet search terms on Google trends provide valuable information for epicasting.
\\\hline
Epidemic analysis and forecasting \cite{ho2015time}
& Dengue
& Malaysia
& Seasonal and Trend decomposition with Loess method (STL), Holt Method, ARIMA, and STL-ETS.
& MAE, RMSE, and MASE scores are the least for the STL method.
& The dengue data exhibits trend and seasonality and can be best forecasted with the STL model.
\\\hline
Overcoming the challenges in epidemic forecasting due to data scarcity \cite{rouamba2020addressing}
& Malaria
& Burkina Faso
& Bayesian methodology for spatio-temporal prediction.
& 6-months ahead forecasts have actual cases within 95\% credible interval.
& Spatial fractional variance value suggested a strong spatial dependence of malaria incidence. \\\hline
Early detection of epidemic outbreak \cite{deb2022ensemble}
& Dengue
& San Juan and Iquitos
& A weighted ensemble of negative binomial regression, seasonal ARIMA, and generalized linear ARMA models, with weights, selected relative to the  performance on training data. 
& Ensemble method is most suitable for forecasting outbreaks compared to its components as evident from the MAE score.
& Climate and terrain factors provide useful information for forecasting the dengue outbreak in these regions.
\\\hline
Predicting epidemic incidence with Baidu search-engine data \cite{liu2019dengue}
& Dengue
& South China
& Generalized Additive Mixed (GAMX) Model. 
& GAMX showed 72\% and 10\% improvement in RMSE and $R^2$ compared to the Generalized Additive Model (GAM) for generating 6-months ahead forecasts.
& Historical incidence data along with climatic conditions played an essential role in accurately forecasting dengue incidence in South China. 
\\\hline
Forecasting Dengue \cite{buczak2018ensemble}
& Dengue
& San Juan and Iquitos
& Ensemble framework including two-dimensional method of analogs, additive Holt Winter's method with and without wavelet smoothing.
& Ensemble model forecasts a maximum number of weekly cases and total case count with minimum RMSE score compared to traditional forecasters.
& Their method scored the maximum rank in predicting weekly maximum count and total count in the 2015 NOAA Dengue Challenge.
\\\hline
Modelling epidemic transmission \cite{jing2018imported}
& Dengue
&  Guangzhou, China
& ARIMA with exogenous variables (ARIMAX).
& The forecasts generated  by the model report an RMSE value of 0.6445 and a consistency rate of 0.7917.
&  Imported cases and climatic conditions are key determinants of modeling local epidemic transmission.
\\\hline
Hybrid methodology for epicasting \cite{chakraborty2019forecasting}
& Dengue
& Peru, Philippines, Puerto Rico
& Remodeling the ARIMA residuals with an ARNN model and hybridizing the ARIMA and ARNN outputs for forecasting dengue cases.
& Hybrid model produces the best forecast with a one-year lead time based on MAE, RMSE, and sMAPE scores.
& Hybrid ARIMA-ARNN model is best suited for long-term forecasting.
\\ \hline
Modelling trajectories of Dengue \cite{johnson2018phenomenological} 
& Dengue
& Iquitos and San Juan
& Gaussian process (GP) regression model. 
& The GP approach predicts the future by memorizing historical data and performs superior to the generalized linear model (GLM) techniques that model the lagged observations along with climatic conditions.
& This method is advantageous in situations with a lack of ancillary covariates. \\ \hline
Modelling and forecasting epidemics \cite{wang2018comparison}
& Hepatitis B 
& China
& Seasonal ARIMA and grey model (GM). 
& RMSE \& MAE scores of the SARIMA model were lower than the GM model in forecasting the future trajectory.
& Utilizing SARIMA model forecasts is a supporting tool for health officials to control hepatitis outbreaks in China.
\\ \hline
Malaria forecasting data from 1994 to 1999 \cite{PMID:14521780}
& Malaria
& Honghe State, China
& Artificial Neural network (ANN).
& ANN model has been used and decreased the error of statistical models. 
& Neural network model was effective for forecasting malaria. It has the ability for more accurate forecasting and easy applicability. 
\\\hline
Prediction of the spread of influenza epidemics \cite{viboud2003prediction}
& Influenza-like illness (ILI)
& France
& Naive method.
& Ten weeks ahead forecast for the temporal and spatial spread of influenza was generated.
& Their method proved appropriate for forecasting both national and regional ILI incidences during the epidemic and pre-epidemic periods.\\\hline
Deep learning approach for modeling epidemic \cite{santosh2020lstm}
& Malaria
& Telangana, India
& Long short-term memory (LSTM) model.
& 12-months ahead prediction was evaluated based on several accuracy measures.
& LSTM successfully forecasts the endemic periods in the upcoming year for four different regions in Telangana.
\\\hline
Machine learning-based algorithm to determine epidemic transmission \cite{ch2014support}
& Malaria
& Rajasthan, India
& Hybridized Support Vector Machine with Fire Fly Algorithm (SVM-FFA).
& One step ahead forecast was evaluated based on different performance indicators.
& The coupled SVM-FFA approach exhibited better accuracy in predicting malaria incidence than several benchmark forecasters. \\ \hline

\end{tabular}
\label{Table1}
\end{table*}

Albeit the applicability of statistical models for epicasting, these models impose some restrictions on the data characteristics before their application. For example, real-world epidemic datasets show complex, noisy, non-stationary, and nonlinear behavior owing to the changing population size and climatic conditions \cite{weiss2013sir, duncan1996whooping}. In such a scenario, pre-processing the complex time series with suitable mathematical transformations has often yielded satisfactory results \cite{cazelles2007time}. One such widespread mathematical transformation is log transformation which effectively analyzes skewed data and reduces variability. Log transformation generally makes the transformed dataset conform more closely to the normal distribution. In recent literature, log-transformed time series data is modeled using a linear AR model, followed by the inverse transformation of the forecasts \cite{lutkepohl2012role}. However, this transformation changes the symmetric measurement errors on the original scale to asymmetric errors on the log scale because the linear fit is performed on the log-scaled data. Log transformation is also highly impacted by outliers or peaks in the time series datasets visible in most epidemic data. Another popularly used transformation in time series literature is the Fourier transformation. Although Fourier transforms are ideal for periodic signals, their performance for non-periodic signals and signals with changing characteristics over time (i.e., non-stationary time series) is unsatisfactory as this transformation will generally give the averaged data. Hence, the direct use of Fourier transformation to pre-process the non-stationary real-world epidemic signals is avoided \cite{brunton2022data}. 
To overcome this problem, wavelet transform has been considered as an efficient mathematical tool for the past three decades \cite{percival1997analysis,percival2000wavelet,walden2001wavelet}. Wavelet transformations are in many ways a generalization of the Fourier transform that allows the independent choice of time and frequency resolution at different times and frequencies \cite{brunton2022data}. The ability of the wavelet transformation to decompose the original series into many high and low-frequency coefficients allows for the appropriate extraction of signal from noise \cite{percival2000wavelet}. In the literature, most wavelet decomposition included a discrete wavelet transform (DWT) followed by a statistical or machine learning approach to generate forecast \cite{mabrouk2008wavelet, saadaoui2019wavelet, zhu2014modwt, chakraborty2019forecasting}. However, the restriction on signal length imposed by the DWT approach led to the application of a maximal overlap discrete wavelet transform (MODWT), which has similar properties to DWT but is free from the limitations \cite{percival2000wavelet}.
Moreover, the MODWT approach provides increased resolution for noisy data, and unlike DWT, the number of coefficients at each level is the same as that of the original series. Applications of the MODWT-based autoregressive moving average (ARMA) model and hybrid ARIMA-WARIMA (based on error correction approach) have been proposed for meteorological forecasting and epidemic forecasting \cite{zhu2014modwt,chakraborty2019forecasting}. Recent studies have also focused on the application of MODWT-based deep learners, Wavelet Transformers (W-Transformers) and Wavelet NBeats (W-NBeats) for modeling real-world time series and stock-price datasets, respectively \cite{sasal2022w, singhal2022fusion}. Several studies have also attempted to model MODWT decomposed coefficients with an artificial neural network for predicting electricity price \cite{saadaoui2019wavelet}, generating weather forecasts \cite{nury2017comparative}, analyzing the wholesale price of agricultural commodities \cite{anjoy2019comparative}, forecasting the occurrence of flood \cite{nanda2016wavelet}, and foretelling the daily river discharge \cite{quilty2021maximal}. These studies suggest that the wavelet-based neural network model generates more accurate forecasts than the multilayered perceptrons. However, these wavelet neural networks \cite{alexandridis2014wavelet} have less application in the epidemic incidence prediction owing to the unavailability of a vast amount of historical data and discrepancy regarding the choice of the hidden neurons in wavelet neural network resulting in an unstable learning algorithm. Another major disadvantage of the previously built wavelet neural networks is that they lack the desired theoretical properties like asymptotic stationarity, which makes long-term forecasts unstable and inaccurate. To mitigate these concerns, this paper attempts to design a novel ensemble of wavelet neural networks, and we call it EWNet, that can handle epicasting problems and generate short, medium, and long-term forecasts that are more reliable and accurate as compared to state-of-the-art methods from statistics and machine learning literature. EWNet is first built theoretically with the help of the MODWT algorithm combined with ARNN models in an ensemble setup and further used to solve the epicasting problem. More precisely, our proposed EWNet model initially decomposes the epidemic datasets into several ``details'' (describing high-frequency variations at a particular time scale) and ``smooth'' (describing low-frequency variations) using a MODWT-based additive decomposition. In the subsequent step, EWNet models the ``details'' and ``smooth'' segments of the data with a series of autoregressive feedforward neural networks having pre-defined architecture specified in the theoretical sections. Finally, an ensemble approach is applied to ensure the reduction of bias in the overall forecast.

The main contributions of the paper can be summarized in the following manner:
\begin{enumerate}
\item We present a novel formulation of the proposed EWNet model designed to handle nonlinear, non-stationarity, and long-range dependency of real-world epidemic datasets. We analyze several theoretical properties of the proposed framework, including asymptotic stationarity, ergodicity, irreducibility, and learning stability. 
\item The proposed EWNet model has a solid mathematical basis and is more explainable and reliable than modern deep learning techniques. In addition, the model does not have growing variance over time and exhibits better long-range forecastability for epidemic datasets. 
\item From a practitioner's viewpoint, we extensively study the global characteristics of fifteen real-world infectious disease datasets covering influenza, malaria, dengue, and hepatitis B from different regions. We demonstrate the epicasting ability of the proposed EWNet model on all the fifteen epidemic datasets by a rolling window approach having three test horizons - short, medium, and long-term and measure their performance using four accuracy metrics, namely Root Mean Squared Error (RMSE), Mean Absolute Error (MAE), Mean Absolute Scaled Error (MASE), and symmetric Mean Absolute Percent Error (sMAPE).
\item We check the efficacy of the proposed model by comparing its performance indicators with a total of 22 state-of-the-art forecasters ranging from traditional time series models to the most recent deep learning algorithms. We show that our proposal can generate a better long-term forecast and outperform most forecasters on average. Moreover, we report the robustness of the forecast generated by our proposed EWNet method using a non-parametric test. Finally, the statistical significance of the experimental results and the potential threats to validate the results provide a strong justification for the multi-disciplinary usability of the proposed EWNet model in future studies.
\end{enumerate}

The remaining sections of this paper are structured as follows. Section \ref{ensemble} provides a detailed description of the formulation of the proposed EWNet model. Then, in Section \ref{prop_EWNet}, we provide the statistical properties of the proposed EWNet model describing its stable learning, geometric ergodicity, and asymptotic stationarity, along with the practical implications of these theoretical results. A detailed summary of the real-world epidemic data characteristics, performance measures used in this study, and forecast evaluation of the proposed methodology with other state-of-the-art forecasters are provided in Section \ref{sec:result}. Finally, Section \ref{stat_signif_test} evaluates the statistical significance of the improvements in forecasts due to the application of the proposed EWNet model and investigates the unexpected threats to the validity of these results. We conclude this paper in Section \ref{sec:Discussion} with some discussion and the future scope of research.

\section{Method}\label{ensemble}
This section gives an overview of the maximal overlap discrete wavelet transform (MODWT) approach. We then present the detailed formulation of the EWNet model. The key of the ensemble wavelet neural network (EWNet) model is the wavelet decomposition of time series and the construction of an ensemble of autoregressive neural networks.

\subsection{\label{formulation}Wavelet Transformations and DWT Approach}

In our study, we utilize a discrete wavelet transformation (DWT) approach to denoise the epidemiological data (time-indexed) followed by an autoregressive neural network architecture \cite{faraway1998time}. In particular, we concentrate on `maximal overlapping' versions of DWT that are applicable for arbitrary time series. DWT represents a signal using an orthonormal basis representation that has been widely used in smoothing signals \cite{percival2000wavelet, walden2001wavelet}, compressing digital images \cite{hilton1994compressing}, geophysics \cite{zhu2014modwt}, atmosphere \cite{percival1997analysis}, economics \cite{anjoy2017hybrid}, energy \cite{yang2021forecasting}, and material science \cite{li2020comparative} among many others. We start with the description of wavelets and the DWT approach that can create a basis for the MODWT algorithm to be used in the proposal. 

The Daubechies wavelets \cite{daubechies1992ten} are a family of orthogonal wavelets defining a discrete wavelet transform. We consider discrete compactly supported filters of Daubechies class of wavelets here. We denote by $\{g_m : m = 0,1,\ldots,M-1\}$ the scaling filters and $\{h_m : m = 0,1,\ldots,M-1\}$ the wavelet filters. We restrict the scaling filter and wavelet filter to satisfy unit energy assumptions (refer to Eqn. \ref{eq_prop_1}) and even-length scaling assumptions (refer to Eqn. (\ref{eq_prop_2})) defined as follows:
\begin{equation}
    \sum_{m=0}^{M-1}g_m^2 = \sum_{m=0}^{M-1}h_m^2 = 1
    \label{eq_prop_1}
\end{equation}
\begin{equation}
    \sum_{m=0}^{M-1}g_m g_{m + 2n} =\sum_{m=0}^{M-1}h_m h_{m + 2n} = 0
    \label{eq_prop_2}
\end{equation}
for all non-zero and integer $n$. These two properties together are called the \textit{``orthonormality property"} in wavelet literature \cite{percival2000wavelet}.
Scaling and wavelet filters are also related by the following restriction:
\begin{equation*}
    g_m \equiv (-1)^{m+1} h_{M-1-m} \; \; 
    \textrm{ or } \; \; h_m \equiv (-1)^m g_{M-1-m}; \; 
    \text{for} \; \; m = 0, 1, \ldots, M-1.
\end{equation*}
Thus, we call $\{g_m\}$ as ``quadrature mirror" filter corresponding to $\{h_m\}$. The construction scheme of DWT coefficients is well known as the `pyramid algorithm' \cite{percival1997analysis}.

Suppose we denote the epidemic time series to be transformed by $Y = \{Y_t: t=0, 1, \ldots, N-1\}$. With $V_{0,t} \equiv Y_t$, the $j^{th}$ stage input to the pyramid algorithm is $\{V_{j-1,t}:t=0,\ldots,N_{j-1}-1\}$, where $N_j = {N}/{2^j}$. In the DWT pyramid algorithm, $j^{th}$ stage outputs are the $j^{th}$ level wavelet and scaling coefficients and these $j^{th}$ level coefficients can directly be linked to the series $\{Y_t\}$, following \cite{walden2001wavelet}. 
\begin{equation*}
    U_{j,t} = \sum_{m=0}^{M_j - 1} h_{j,m} Y_{(2^j(t+1)-1-m)\text{ mod }N} \; \; \textrm{and} \; \; V_{j,t} = \sum_{m=0}^{M_j - 1} g_{j,m} Y_{(2^j(t+1)-1-m)\text{ mod }N}; 
\end{equation*}
where the $j^{th}$ level filters have the same unit energy and related properties as discussed in Eqn. (\ref{eq_prop_1}) and Eqn. (\ref{eq_prop_2}) along with
\begin{equation*}
    \sum_{m=0}^{M_j-1} g_{j,m} = 2^{{j}/{2}}  \textrm{ and } \sum_{m=0}^{M_j-1} h_{j,m} = 0.
\end{equation*}
At level $j$ the nominal frequency band to which the corresponding wavelet coefficients $\{U_{j,t}\}$ is given by $|l| \in \left(\frac{1}{2^{j+1}},\frac{1}{2^j} \right)$. However, DWT restricts the sample size to be exactly a power of $2$, whereas wavelet details and scaling coefficients of a DWT decomposed signal do not scale and are shift-invariant. We may overcome these deficiencies of DWT by using a modified version of DWT, namely the maximal overlap discrete wavelet transformation (MODWT) based on haar filter \cite{percival1997analysis, percival1995estimation}.

\subsection{\label{theoretical}MODWT algorithm}
The MODWT is an improved and modified version of the DWT algorithm. Both DWT and MODWT allow to perform a multi-resolution analysis which is a scale-based additive decomposition \cite{nason1999wavelets}. However, the MODWT algorithm overcomes the deficiencies of the DWT algorithm and can handle the circular shift in the signal. Thus it is best suited for decomposing epidemiological time series that exhibit non-stationary seasonal patterns. Several applications of MODWT in time series analysis can be found in \cite{zhu2014modwt, anjoy2019comparative, zhang1992wavelet}. Therefore, in our study, we consider MODWT, which is well-defined for all sample sizes and shift-invariant. This is also called \textit{nondecimated wavelet transform},
as there is a redundancy of wavelet and scaling coefficients at each decomposition level of the original series following a
particular pattern. A mathematical formulation of MODWT can be extended directly from the DWT formulation in Section \ref{formulation}.

Here, we define MODWT filters $\{\tilde{h}_{j,m}\}$ and $\{\tilde{g}_{j,m}\}$ by re-normalizing the DWT filters:
\begin{equation}\label{eq:3}
    \tilde{h}_{j,m} = \frac{h_{j,m}}{2^{j/2}} \; \textrm{ and } \; \tilde{g}_{j,m} = \frac{g_{j,m}}{2^{j/2}};
\end{equation}
and width $M_j$ of MODWT and DWT are the same. Another modification made w.r.t. the DWT filter is that MODWT filters do not have unit energy, i.e.,
\begin{equation*}
    \sum_{m=0}^{M_j-1}\tilde{h}_{j,m}^2 = \sum_{m=0}^{M_j-1}\tilde{g}_{j,m}^2 = \frac{1}{2^j},
\end{equation*}
and, therefore, there is no need for downsampling by $2^j$ in the MODWT. With $\tilde{V}_{0,t}\equiv Y_t$, then the MODWT pyramid algorithm generates the MODWT wavelet coefficients $\{\tilde{U}_{j,t}\}$ and the MODWT scaling coefficients $\{\tilde{V}_{j,t}\}$ \cite{percival2000wavelet}. These coefficients can also be formulated in terms of filtering of $\{Y_t\}$, using the filters as in Eqn. (\ref{eq:3}):
\begin{equation*}
  \tilde{U}_{j,t} = \sum_{m=0}^{M_j-1}\tilde{h}_{j,m} Y_{(t-m) \textrm{ mod }N} \; \textrm{ and } \;     \tilde{V}_{j,t} = \sum_{m=0}^{M_j-1}\tilde{g}_{j,m} Y_{(t-m) \textrm{ mod }N};  
\end{equation*}
where $M_j = (2^j -1)(M-1)+1$. Similar to DWT, the MODWT coefficients at level $j$ are associated to the same nominal frequency band $|f_q| \in  \left(\frac{1}{2^{j+1}}\frac{1}{2^j}\right]$ and are defined as the convolutions of the time series $Y_t$. Thus, the wavelet coefficients at each level will have the same length as that of the original series. The coefficients can also be expressed using matrix notation as follows \cite{percival2000wavelet}:
\begin{equation*}
  \tilde{U}_j = \tilde{u}_j Y \; \textrm{ and } \; \tilde{V}_j = \tilde{v}_j Y,
\end{equation*}
where the square matrices $\tilde{u}_j$ and $\tilde{v}_j$ of order $N \times N$ comprises values dictated by wavelet filters and  scaling filters, respectively.
\begin{equation}\label{bmatrix}
    \tilde{u}_j = \begin{bmatrix}
    \tilde{h}_{j,0} & \tilde{h}_{j,N-1} & \tilde{h}_{j,N-2} & \ldots & \tilde{h}_{j,3} & \tilde{h}_{j,2} & \tilde{h}_{j,1}\\
    \tilde{h}_{j,1} & \tilde{h}_{j,0} & \tilde{h}_{j,N-1} & \ldots & \tilde{h}_{j,4} & \tilde{h}_{j,3} & \tilde{h}_{j,2}\\
    \vdots & \vdots & \vdots & \ldots & \vdots & \vdots & \vdots \\
    \tilde{h}_{j,N-2} & \tilde{h}_{j,N-3} & \tilde{h}_{j,N-4} & \ldots & \tilde{h}_{j,1} & \tilde{h}_{j,0} & \tilde{h}_{j,N-1} \\
    \tilde{h}_{j,N-1} & \tilde{h}_{j,N-2} & \tilde{h}_{j,N-3} & \ldots & \tilde{h}_{j,2} & \tilde{h}_{j,1} & \tilde{h}_{j,0} \\
    \end{bmatrix}
\end{equation}
and $\tilde{v}_j$ can similarly be expressed as in Eqn. (\ref{bmatrix}) with each $\{\tilde{h}_{j,m}\}$ replaced by $\{\tilde{g}_{j,m}\}$. Thus, the original series $(Y)$ can be written from its MODWT based via,
\begin{equation*}
    Y = \sum_{j=1}^J \tilde{u}_j^T \tilde{U}_j + \tilde{v}_J^T \tilde{V}_J = \sum_{j=1}^J D_j + S_J,
\end{equation*}
where $D_j = \tilde{u}_j^T \tilde{U}_j$ is the $j^{th}$ level $(j = 1,2,\ldots,J)$ details and $S_J = \tilde{v}_J^T \tilde{V}_J$ is the $J^{th}$ level smooth of the MODWT decomposition. 
A more detailed description and pseudo-code of the MODWT algorithm is available in \cite{percival1997analysis}. MODWT is valid for any integer $N$, whereas DWT needs $N$ to be an integer multiple of $2$. Also, MODWT is a more handy tool for handling non-stationary and seasonal discrete time series, which is the case in most epidemic datasets. These properties of MODWT are a key element for pre-processing highly non-stationary and long-term dependent epidemic datasets. The remaining nonlinearity of the epidemic time series is further modeled with the ARNN model in the proposed EWNet framework. For graphical illustration, the MODWT decomposition on the \href{https://github.com/JohannHM/Disease-Outbreaks-Data/blob/master/Colombia_Dengue.dat}{Colombia Dengue} dataset is presented in Fig. \ref{MODWT_ts}. We aim to create a new set of random variables (equal-sized time series named as details and smooth coefficients of MODWT algorithm) and use them to build a novel ensemble of autoregressive neural nets in the EWNet framework. In the next subsection, we combine the MODWT algorithm and ARNN model to utilize their complimentary benefits for \textit{epicasting}.  

\begin{figure}[t]
    \centering
    \includegraphics[scale=0.30]{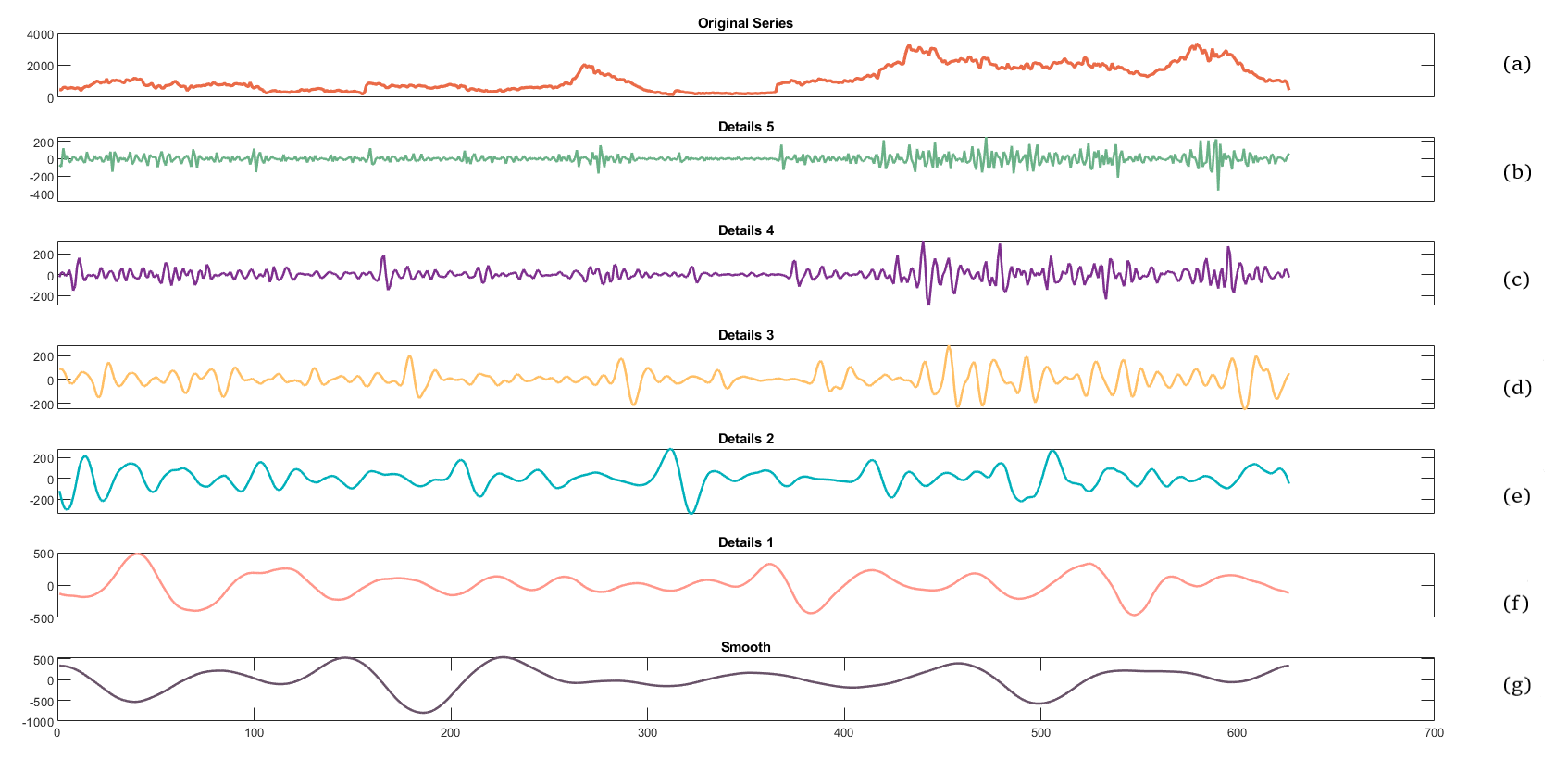}
    \caption{MRA-based MODWT decomposition of the Colombia dengue dataset with the original epidemic time series and its 6 levels. In Figure, (a) denotes the original time series in actual frequency scale; (b)-(f) denote the detail coefficients reproduced by the MODWT algorithm with haar filter, and (g) represents the scaling coefficients of the series generated by MODWT algorithm with haar filter. The figure depicts time-localized information on frequency patterns that are identified by wavelets.}
    \label{MODWT_ts}
\end{figure}

\subsection{\label{Proposed_Model}Proposed EWNet model}
This section provides a detailed formulation of our proposed EWNet methodology that utilizes a wavelet decomposition algorithm as a data pre-processing step. A salient feature of the MODWT algorithm is that it helps to decompose epidemic time series in trend and higher frequency bands which are exploited for forecasting in the proposal. The multiresolution analysis of MODWT decomposes the discrete time series $Y_t \; (t=1,2,\ldots,N)$, where $N$ is the number of historical samples, into high-frequency information and low-frequency information by applying corresponding filters. These high and low-frequency decomposed series are termed wavelet (details), and scaling (smooth) coefficients can track the original series as:
\begin{equation*}
    Y_t = \sum_{j=1}^J D_{j,t} + S_{J,t}.
\end{equation*}
Theoretically, $D_{j,t} \; (j=1,2,\ldots, J)$ components capture the non-smooth bumpy details (local fluctuations) of the series $Y_t$, indicated by the fast dynamics whereas its counterpart $S_{J,t}$ apprehends the smooth tendencies (overall ``trend'' of the original signal) of the series, signalized by slow dynamics. Epidemic time series considered in this study have long-term memory (as reported in Table \ref{Global_cht}), and long-term memory processes have a high degree of correlation. With the help of MODWT (with `haar' filter), we create a new set of random variables (equal-sized time series), namely, the wavelet coefficients, that are approximately uncorrelated (both within and between scales). The decomposition process can be iterated, with successive approximations being decomposed in turn, so that the original signal is broken down into many lower-resolution
components. Simultaneously, the problem of generating forecasts $\hat{Y}_{N+h}$ (h-step ahead forecasts) based on ${Y_1, Y_2, \ldots, Y_N}$ can be solved by generating the forecasts $\hat{D}_{j,N+h} \; (j = 1, 2, \ldots, J)$ and $\hat{S}_{J,N+h}$, based on their previous observations, i.e., 
\begin{align*}
 \hat{D}_{j,N+h} & = f(D_{j,1},D_{j,2},\ldots,D_{j,N}) ; \; j= 1,2, \ldots, J, \\
 \hat{S}_{J,N+h} & = f(S_{J,1}, S_{J,2}, \ldots, S_{J,N}),
\end{align*}
where $f$ is the autoregressive neural network function. We choose the value of $J+1$ as a floor function of $\log \; (\text{base} \; e)$ of the length of training subset as suggested by \cite{percival1997analysis}.

In our proposed framework, we utilize these decomposed time series using an ensemble of neural networks for generating the forecasts from several decomposed components. The neural net comprises of three layers - one input layer with $p$ nodes, one hidden layer with $k$ nodes, and an output layer with no recurrent connections (feedforward structure). We operate $J+1$ of these feedforward neural networks, each of which models $p$ lagged observations from a series to generate a one-step-ahead forecast in a single iteration.
\begin{align*}
 \hat{D}_{j,N+1} &=
 \alpha_{0,j} + \sum_{i=1}^k \beta_{i,j} \phi(\alpha_{i,j} + \beta_{i,j}^{'} \underbar{D}_{j}) ; \;  j =1, 2, \ldots, J, \\
 \hat{S}_{J,N+1} &= \eta_0 + \sum_{i=1}^k \delta_i \phi(\eta_i + \delta_i^{'} \underbar{S}_{J}) ; 
\end{align*}
where $\underbar{D}_{j} $, $\underbar{S}_{J}$ denotes $p$ lagged observations of the corresponding decomposed series $(j = 1,2, \ldots, J)$, $\alpha_{0,j}$, $\eta_0$, $\alpha_{i,j}$, $\beta_{i,j}$, $\eta_i$, $\delta_i$ $(i  = 1,2,\ldots,k; j = 1,2,\ldots,J)$ are the connection weights of the network, $\beta_{i,j}^{'}, \delta_i^{'}$ are $p$ dimensional weight vectors, and $\phi$ is the nonlinear activation function (precisely, logistic sigmoidal activation function). The weights of the network take random values at the beginning and are then trained by gradient descent back-propagation approach \cite{rumelhart1986learning}.
This procedure is continued iteratively until the forecast of the desired horizon is obtained. Eventually, the forecasts originating from all the trained networks are aggregated to produce the final forecast as 
\begin{equation*}
    \hat{Y}_{N+h} = \sum_{j=1}^J \hat{D}_{j,N+h} + \hat{S}_{J,N+h}
\end{equation*}
The choice of the hyperparameter $p$ is based on the minimization of forecast error for the validation set in a cross-validation way
\begin{equation*}
    p = \underset{p}{argmin} \frac{1}{|\textit{V}|}\sum_{t \in \textit{V}} \frac{2|\hat{Y}_t - Y_t|}{|\hat{Y}_t|+|Y_t|}*100\%,
\end{equation*}
where $Y_t$ is the series at time point $t$, $\hat{Y}_t$ is the predicted value at time point $t$, $\textit{V}$ is the validation set and the number of neurons $k=[\frac{p+1}{2}]$ in the hidden layer is chosen (proof is discussed in Section \ref{Stable_Learning}). Detailed descriptions of the EWNet model parameters are described below.
\begin{enumerate}
\item \textit{Wavelet levels $(J+1)$:} An integer value specifying the level of the wavelet decomposition of the original series. \\
In order to account for the maximum level in the decomposition, we set $J+1 = \lfloor \log_e N \rfloor$ based on the recommendation of \cite{percival1997analysis}.
\item \textit{Fast Flag:} Denotes the wavelet decomposition  achieved by using \textit{pyramid algorithm} described in \cite{percival2000wavelet}.
\item \textit{Boundary:} A ``periodic'' boundary is set and it is used to  obtain coefficients from the training time series. 
\item \textit{MaxARParameter $(p)$:} An integer indicating the value of the lagged inputs in each of the $J+1$ ARNN models. 
This is a tuning parameter in EWNet and is chosen using cross-validation.
\item \textit{Hidden neurons $(k)$:} The number of hidden neurons in $(J+1)$ ARNN models are set to $k = \left[\frac{p+1}{2}\right]$ (discussed in details in Section \ref{Stable_Learning}). 
\item \textit{NForecast $(h)$: } The desired forecast horizon.
\end{enumerate}

\begin{figure}
    \centering
    \includegraphics[scale=0.40]{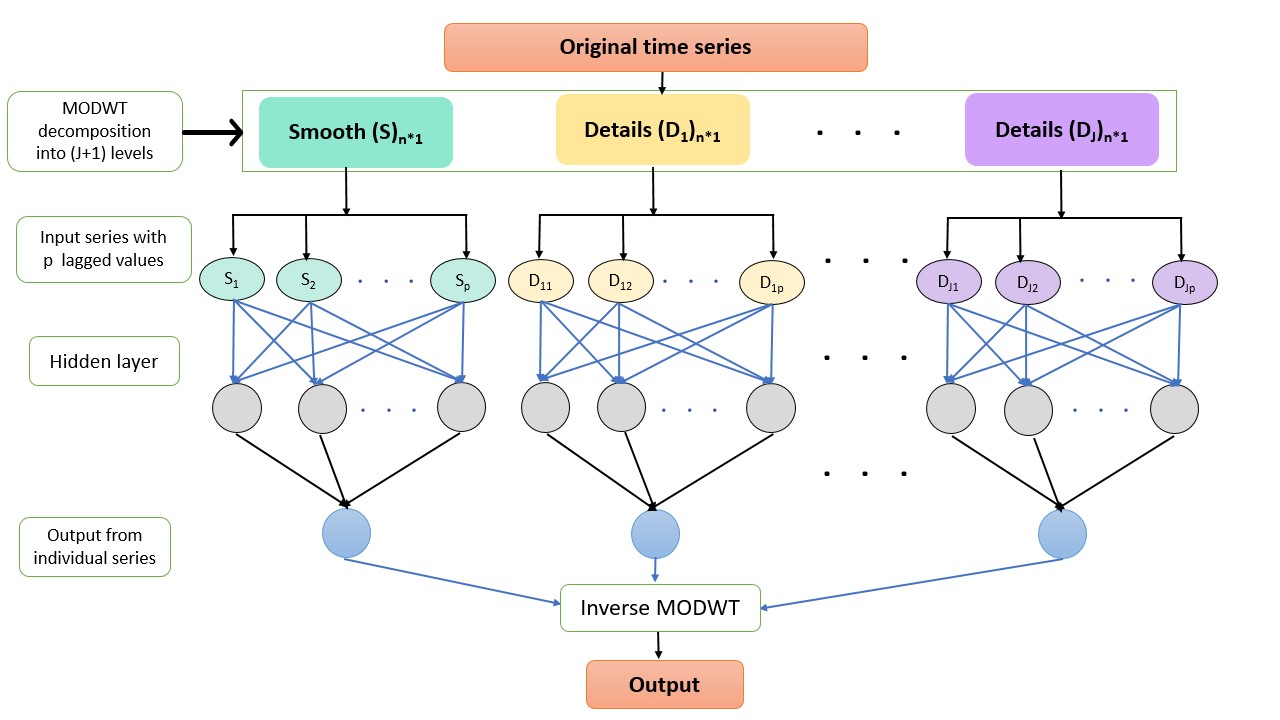}
    \caption{Schematic diagram of the EWNet framework: Given the original input series of size $n$, we employ MODWT transformation to decompose the series into one smooth and $J$ details coefficients each of size $n$. In the subsequent step, each of the transformed series is modeled with an autoregressive neural network and their forecasts are combined via inverse MODWT transformation to generate the one-step ahead ensemble forecast.}
    \label{EWNet_Model}
\end{figure}
A schematic flow diagram of the proposed EWNet model is portrayed in Fig. \ref{EWNet_Model}.
A detailed inspection of Fig. \ref{EWNet_Model} describes the mechanism of generating a one-step-ahead forecast in the proposed EWNet model, where each wavelet decomposed series is modeled with autoregressive neural network architecture. Using one-step ahead forecasts, we iteratively find the multi-step ahead forecasts from the EWNet model. Based on the non-stationary and nonlinear characteristics of the time series, we apply MODWT-based decomposition to break the series into multiple sub-frequencies. Following this, each detail and smooth component is fed into an ARNN model for prediction purposes. The wavelet analysis can efficiently diagnose the main frequency components of the signal, and the ARNN can now model the details and smooth components of the series with higher accuracy; thus, the name of the model, Ensemble Wavelet Neural Network (EWNet), is justified. In the proposed model, the time series is first decomposed into several sub-time series $[D_1, D_2, \ldots, D_J, S_J]$, where the former $J$ series are the wavelet detail components, and $S_J$ is the smooth component as depicted for the Colombia dengue data in Fig. \ref{MODWT_ts}. Finally, the forecasted series is formed through inverse wavelet transform from the forecast generated by the details and smooth components. A detailed description concerning the implementation of the model is available in Algorithm \ref{algo}. So far as the study proceeds, in the following section, we develop the theoretical results of the proposed EWNet model from a nonlinear time series viewpoint and show the stability in the learning of our proposal, asymptotic behavior, and their practical implications. 

\begin{remark}
    Most machine learning and deep learning frameworks utilize a sliding window approach to reconstruct the time series forecasting task as a supervised learning problem. The previous steps are used as inputs, and the next step as the outputs. However, in the proposed EWNet architecture, we employ an ensemble ARNN framework on the MODWT decomposed training series. Unlike most machine learning and deep learning approaches, the proposed model does not reconstruct the epidemic series into an input-output supervised framework; instead, it utilizes p-lagged observations of each of the wavelet decomposed training data and generates a one-step-ahead forecast using a nonlinear function as discussed in Algorithm \ref{algo}. Moreover, we recursively update the training data with the latest forecast (obtained from EWNet) to develop the multi-step ahead forecasts for each transformed series using the same nonlinear activation function. Finally, we consider an ensemble of the forecasts generated from each wavelet decomposed series and obtain our desired results. In the experimental evaluation, we utilize the original test data only to compute the forecasting accuracy of the proposed EWNet framework in comparison with benchmark methods. 
\end{remark}

\begin{algorithm}[H]
    \SetKwInOut{KwIn}{Input}
    \SetKwInOut{KwOut}{Output}
    \KwIn{Univariate time series $\{Y_1, Y_2, \ldots, Y_N\}$ with $N$ historical observations.}
    \KwOut{Record prediction corresponding to the historical data window, fitted values of the original series, and $h$-step ahead forecast ($h$ to be specified by user).}
    \vspace{0.2cm}
{\textbf{Train Procedure:}\\
\nl  Compute the maximal overlap discrete wavelet transform (MODWT) of the original time series via pyramid  algorithm. \\ 
\nl Extract the wavelet and scaling coefficients corresponding to each level and transform them to time series objects.\\
\nl Model these individual time series using an autoregressive neural network with $p$ lagged values. \\
\nl Select the MaxARParam corresponding to the minimum accuracy measure (MASE) on the validation set and $k$ as specified. \\
\vspace{0.1cm}
\textbf{Test Procedure:}\\
Execute the previously mentioned steps for acquiring the forecast on the hold-out test set. \\
\nl The fitted model generates a one-step-ahead prediction. \\
\nl Iterate the process until the forecast of the desired horizon is computed. \\
\nl Combine the final forecasts generated from the wavelet and scaling series using an inverse MODWT approach to achieve the desired output. }\\
\vspace{0.2cm}
\caption{{\bf Proposed EWNet model}
\label{algo}
}
\end{algorithm}

\section{\label{prop_EWNet} Statistical Properties of EWNet Model}
In this section, we explore several theoretical aspects of the proposed EWNet approach and discuss their practical implications from practitioners' points of view. We start with the learning stability problem of EWNet and then investigate the asymptotic behavior of the associated Markov chain. 

\begin{subsection}{\label{Stable_Learning}Stable Learning using EWNet model}
We investigate the effect of learning stability and the choice of hidden neurons in the EWNet model. In unstable neural network models, the number of hidden nodes in hidden layers either becomes too large or too small. This instability in the neural network gets reflected in the output layer of the neural net, and a trade-off is required. Several previous studies established theoretical results on the choice of hidden neurons of feedforward neural network, for example, see \cite{tamura1993determination, zeng2006hidden, chakraborty2019nonparametric}. In the proposed EWNet, we consider the following assumptions to ensure learning stability in the proposed ensemble framework. 
\begin{enumerate}[label = (\alph*)]
    \item EWNet has three layers: one input, one hidden, and one output layer with no recurrent connections (feedforward structure). Also, there is no direct connection from the input to the output layer in EWNet.
    \item Gradient descent backpropagation \cite{hinton2006reducing} learning is used without introducing an inertia term to train the EWNet model.
    \item EWNet starts with random weights, and the network is mainly trained for one-step forecasting, although multi-step ahead forecasts can also be computed recursively.
    \item We further assumed that the learning rate $\left( \eta \right)$ is the same for all the connections and connection weights $\left(w^{(o)}\right)$ and error signal $\left(\delta^{(o)}\right)$ in the output layer are assumed to have a symmetrical distribution with respect to the origin.
    \item The number of lagged inputs $(p)$ in EWNet$(p,k)$ model is selected by a grid search optimization algorithm and the number of hidden neurons is set to $k = \left[\frac{(p+1)}{2}\right]$ unless it is particularly specified. The above choice of $k$ provides stability of learning in the proposed EWNet model. 
\end{enumerate}
Assumptions (a) - (d) are trivially true. But, the assumption (e) is critical, and we discuss below the choice of hidden neuron and stability of learning for the EWNet$(p,k)$ model. Throughout this section, we denote the triplet notion $(i,h,o)$ as the (input, hidden state, and output) of the EWNet model. The change of internal state $\Delta u$ through learning for the same input patterns is considered a rough standard for the stability of learning in the proposed EWNet, as previously described in seminal papers on statistical properties of neural networks \cite{white1989learning, hornik1993some}. The change in weights from the $i^{th}$ input to the $\tilde{j}^{th}$ hidden neuron can be mathematically written as
\begin{equation*}
    \Delta w_{\tilde{j}i}^{(h)} = \eta \delta_{\tilde{j}}^{(h)} x_i^{(i)},
\end{equation*}
where, $x_i^{(i)}$ is the output of the $i^{th}$ input neuron, $\delta_{\tilde{j}}^{(h)}$ is the propagated error signal for the $\tilde{j}^{th}$ hidden neuron and can be mathematically expressed as
\begin{equation*}
    \delta_{\tilde{j}}^{(h)} = \frac{\partial E}{\partial u_{\tilde{j}}^{(h)}},
\end{equation*}
where $E$ is the $L_2$-error loss between the training signal $y_l$ and output value $x_l^{(o)}$. The change in the internal state can be written as 
\begin{equation*}
    \Delta u_{\tilde{j}}^{(h)} = \sum_{i=1}^p \Delta w_{{\tilde{j}}i}^{(h)} x_i^{(i)} = \eta \delta_{\tilde{j}}^{(h)} \sum_{i=1}^p \left(x_i^{(i)}\right)^2.
\end{equation*}
The propagated error signal $\delta_{\tilde{j}}^{(h)}$  in the hidden layer is computed as
\begin{equation}
    \delta_{\tilde{j}}^{(h)} = g^\prime \left(u_{\tilde{j}}^{(h)}\right) w_{l{\tilde{j}}}^{(o)} \delta_l^{(o)},
    \label{Eq_Stable}
\end{equation}
where $g^\prime(\cdot)$ is the derivative of the activation function (logistic sigmoidal activation function of EWNet model is both continuous and differentiable), $\delta_l^{(o)}$ is the output error signal, and $w_{l{\tilde{j}}}^{(o)}$ is the output weights. Accordingly, $\delta_{\tilde{j}}^{(h)}$ is inversely proportional to the number of hidden neurons and can be computed using Eqn. (\ref{Eq_Stable}). Under the standard regularity condition that $g^\prime \left(u_{\tilde{j}}^{(h)}\right)$ and $w_{l{\tilde{j}}}^{(o)} \delta_l^{(o)}$ are independent \cite{fujita1998statistical, hornik1993some}, the variance of $\delta_{\tilde{j}}^{(h)}$, denoted by $\mathbb{V}\left(\delta_{\tilde{j}}^{(h)}\right)$, is mathematically represented as
\begin{align*}
    \mathbb{V}\left(\delta_{\tilde{j}}^{(h)}\right) &= \mathbb{E}\left[g^\prime \left(u_{\tilde{j}}^{(h)}\right)w_{l{\tilde{j}}}^{(o)} \delta_l^{(o)} - \mathbb{E}\left(g^\prime \left(u_{\tilde{j}}^{(h)}\right)w_{l{\tilde{j}}}^{(o)} \delta_l^{(o)}\right)\right]^2\\
    &= \left[\mathbb{E}^2\left(g^\prime \left(u_{\tilde{j}}^{(h)}\right)\right)+ \mathbb{V}\left(g^\prime \left(u_{\tilde{j}}^{(h)}\right)\right)\right]\mathbb{E}\left[w_{l{\tilde{j}}}^{(o)} \delta_l^{(o)}\right]^2.
\end{align*}
The boundary of the stable learning of a hidden neuron is summarized as $\eta.\left(\frac{p}{k}\right)$ by adding the effect of learning rate $\eta$ to the above discussion. The number of hidden neurons becoming too large can make the output neurons unstable, whereas if the number of hidden neurons becomes too small, the hidden neurons become unstable again. Here a trade-off is derived for the learning structure of the EWNet algorithm. We introduce a balancing equation as follows:
\begin{equation}
    \alpha \eta.\left( \frac{p}{k} \right)= \eta. k,
    \label{Eq_Stable_2}
\end{equation}
where the L.H.S. and R.H.S. of Eqn. (\ref{Eq_Stable_2}) are obtained from the viewpoint of the boundary of stable learning in hidden and output neurons, respectively. Here, we also pose $\alpha$ as a constant for consistency. Therefore, we initially choose the number of hidden neurons to be $k = \sqrt{\alpha. p}$.
We take the minimum value of $\alpha$ to be $1$ and the maximum value of $\alpha$ to be $p \; (\geq 1)$. Thus, $k$ lies between $\sqrt{p}$ and $p$ for stable learning in the EWNet model. A natural choice of $k \in (\sqrt{p},p)$ is $[\frac{p+1}{2}]$, can be easily derived using  AM-GM inequality. Thus, we conclude that the network structure proposed in the EWNet model has stable learning property that is desired from the statistical perspective. Next, we prove the asymptotic stationarity of the associated stochastic process from a nonlinear time series point of view, following \cite{meyn2012markov}. 
\end{subsection}

\subsection{Geometric Ergodicity and Asymptotic Stationarity \label{implementation}}
The proposed ensemble wavelet-based autoregressive neural network (EWNet) model is an integrated approach that combines wavelet transformation with the ARNN algorithm. First, the wavelet decomposition coefficients for time series data are transported into the ARNN model to set up a forecast ensemble in the proposed framework. Wavelet transformation decomposes a time series into $J+1$ independent orthogonal components with both time and frequency localization. Then, we fit several specific autoregressive neural network models to the component series and obtain forecasts later aggregated to get the actual predictions and, after that, out-of-sample forecasts. Therefore, we only need to show that, under the sufficient conditions stated below, a single autoregressive neural network process is ergodic and asymptotically stationary to ensure that the whole process is ergodic and asymptotically stationary. 

\par We start with a simple ARNN$(1,k)$ process with $k$ hidden units that can be defined by the following stochastic differential equation: 
$$y_t=f(y_{t-1}, \Theta)+ \varepsilon_t,$$
where $y_{t-1}$ is the previous lagged input, $\Theta$ denotes the weight vector, $\varepsilon_t$ is a sequence of independently and identically distributed (i.i.d.) random errors, and $f$ denotes an autoregressive neural network function. The output of an ARNN$(1,k)$ model with activation function $G$ (e.g., logistic sigmoidal activation function) is given by 
\begin{align}\label{eq1}
    f(y_{t-1}, \Theta) &= \psi_1 y_{t-1}+ \nu + \sum_{i=1}^{k}\beta_i G\left(\phi_{i,1}y_{t-1}+\mu_i\right) \nonumber \\ 
    &=\psi_1 y_{t-1}+g\left(y_{t-1},\beta,\phi\right),
\end{align}
where the weight components are the shortcut connections $\psi_1$, the hidden layer to output unit weights $\beta =(\nu, \beta_1, \beta_2, \ldots, \beta_k)'$ and the input to hidden unit weights $\phi=\left(\phi_{1,1},\ldots,\phi_{k,1},\mu_1,\ldots,\mu_k\right)'$ are collected together in the network weight vector $\Theta$. 
\begin{remark}
Our proposed EWNet model can be thought of as a sum of $J+1$ different ARNN$(p,k)$ processes, where $J+1$ denotes the number of details and smooth coefficients obtained using the MODWT algorithm.
\end{remark}
Now, we show the ergodicity and stationarity of a simple ARNN$(1,k)$ process. In statistical analysis of nonlinear time series, the ergodicity and stationarity of the underlying process are of particular interest since, for such processes, a single realization displays the whole probability law of the data generation process \cite{meyn2012markov}. Before discussing the results for ergodicity and stationarity, we discuss the concept of irreducibility for the ARNN$(1,k)$ process, which acts as a connectionist in establishing the theoretical results.

\subsubsection{Irreducibility}
\textit{`Irreducibility'} is a very primordial concept of a Markov chain in which, irrespective of the starting point, the Markov chain can reach all parts of the state space \cite{meyn2012markov}. Another key property of Markov chains is called \textit{`aperiodicity'} which refers to a Markov chain with no cycles. More formally, the definition of \textit{`irreducibility'} can be given as follows \cite{panja2022interpretable}. 
\begin{definition}
A Markov chain is called irreducible if $\displaystyle\sum_{t=1}^{\infty} P^t(y,\mathcal{A}) > 0 $ for all $y \in \mathcal{X}$, whenever $\lambda(\mathcal{A})>0$, where $P^t(y,\mathcal{A})$ denotes the transition probability from the state $y$ to the set $\mathcal{A}\in \mathcal{B}$ in $t$ steps where the state space $\mathcal{X}\subseteq \mathcal{R}^2, \; \text{and} \; \mathcal{B}$ is the usual Borel $\sigma$-field and $\lambda$ be the Lebesgue measure. 
\end{definition}
\noindent Now, we write the ARNN$(1,k)$ process in the state space form as follows: 
\begin{equation}\label{eq2}
y_t=\psi_1 y_{t-1}+F(y_{t-1})+\varepsilon_t,
\end{equation}
where $F(y_{t-1}) = g\left(y_{t-1}, \beta,\phi\right)$ refers to the nonlinear component of $y_t$. Thus, $y_t$ is considered as a Markov chain with state space $\mathcal{X} \subseteq \mathcal{R}^2$ equipped with Borel $\sigma$-field $\mathcal{B}$ and Lebesgue measure $\lambda$. To establish the results for irreducibility, we begin by writing Eqn. (\ref{eq2}) as a control system driven by the control sequence $\{\varepsilon_t\}:$ 
$$y_t = F_t (y_0, \varepsilon_1,\ldots,\varepsilon_t),$$
where the definition of $F_t(\cdot)$ follows inductively from Eqn. (\ref{eq2}). We define $A_+^t(y)$ as the set of all states that are accessible from $y$ at time $t$: 
$$A_+^0 :=\{y\} \; \text{and} \; A^t_+(y):=\{F_t(y_0,\varepsilon_1,\ldots,\varepsilon_t);\varepsilon_i\in \theta\},$$ where the control set $\theta$ is an open set in $\mathcal{R}$. The control system $F_t$ is said to be forward accessible if the set $\displaystyle\bigcup_{t=0}^\infty A_+^t(y)$ has a nonempty interior for each $y\in \mathcal{X}$. Generally, forward accessibility refers to the set of reachable states that is not concentrated in some lower dimensional subset of $\mathcal{X}$. This property together with an additional assumption on the noise process ensures the irreducibility of the corresponding Markov process \cite{meyn2012markov}. Now, we write the control system defined in Eqn. (\ref{eq2}) as follows: 
\begin{align}
y_t &= \psi_1 y_{t-1} +F(y_{t-1}) + \varepsilon_t \nonumber \\
    &= \psi_{1}^2 y_{t-2} +\psi_1 F (y_{t-2}) +F(y_{t-1})+\varepsilon_t.
\end{align}

Consider a special case: when $F \equiv 0$, the control system $F_t$ is referred to as a controllable linear system, where every point of the state space is accessible regardless of its initialization for any control value $\varepsilon_t$. The underlying assumptions of a forward control system (as in Eqn. (\ref{eq2})) are presented below in Prop. \ref{prop1}.
\begin{prop}\label{prop1}
The sufficient conditions of forward accessibility for the control system (in Eqn. (\ref{eq2})) are the followings:
\begin{enumerate}
    \item {$G \in C^{\infty}$ is a bounded, non-constant, and asymptotically constant function $(C^{\infty})$} (any function is $C^{\infty}$ if derivatives of all orders are continuous).
    \item {The linear part of R.H.S. of Eqn. (\ref{eq2}) is controllable, i.e., $\psi_1 \neq 0$.} 
\end{enumerate}
\end{prop}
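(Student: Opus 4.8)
The plan is to verify forward accessibility by reducing it to a full-rank condition on the controllability matrix of the deterministic control system associated with Eqn.~(\ref{eq2}), following the nonlinear state-space machinery of \cite{meyn2012markov}. Recall that forward accessibility requires $\bigcup_{t=0}^{\infty} A_+^t(y)$ to have nonempty interior for every $y \in \mathcal{X}$. The standard route is to show that, for some horizon $t$ and some admissible control $(\varepsilon_1^{*},\ldots,\varepsilon_t^{*}) \in \theta^t$, the map $(\varepsilon_1,\ldots,\varepsilon_t) \mapsto F_t(y_0,\varepsilon_1,\ldots,\varepsilon_t)$ is a submersion at that point, i.e. its Jacobian has rank equal to $\dim \mathcal{X}$. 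Once this holds, the rank (inverse-function) theorem guarantees that the image of a neighbourhood of $(\varepsilon_1^{*},\ldots,\varepsilon_t^{*})$ contains an open subset of $\mathcal{X}$, so $A_+^t(y)$, and hence the reachable union, has nonempty interior.

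First I would linearise the recursion along a trajectory. Because the first hypothesis forces $G \in C^{\infty}$, the nonlinear term $F = g(\cdot)$ is smooth, the composite map $F_t$ is smooth, and all partial derivatives $\partial y_s / \partial \varepsilon_r$ exist and vary continuously. Setting the state-derivative $A_s := \partial y_s / \partial y_{s-1} = \psi_1 + F'(y_{s-1})$ and noting that the additive noise gives the control-injection derivative $\partial y_s / \partial \varepsilon_s = 1$, the chain rule propagates perturbations through the recursion and assembles the generalised controllability matrix $C^t_{y_0}$, whose entries are the successive products $A_t A_{t-1}\cdots A_{s+1}$ applied to the injection direction. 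Iterating Eqn.~(\ref{eq2}) over the two-step horizon displayed before the proposition, the two controls $\varepsilon_{t-1}$ and $\varepsilon_t$ inject into the two-dimensional state, so the object to control is the Kalman-type matrix $[\,B \;\; AB\,]$, and the task becomes exhibiting a point where this matrix is nonsingular.

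The two hypotheses then enter as follows. The controllability of the linear part, $\psi_1 \neq 0$ (the second hypothesis), guarantees that the linearised system $y_t \approx \psi_1 y_{t-1} + \varepsilon_t$ is controllable in the classical sense, so the linear contribution to $C^t_{y_0}$ already supplies the required independent directions; this is precisely the ``$F \equiv 0$ controllable linear system'' baseline noted before the proposition. The $C^{\infty}$, bounded, non-constant and asymptotically-constant structure of $G$ (the first hypothesis) is what I would use to keep the nonlinear correction from destroying this rank: non-constancy forces $G' \not\equiv 0$, so there is a state at which the terms $F'(\cdot)$ are nonzero, while smoothness ensures that once full rank is attained at one point it persists on an open neighbourhood. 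Combining the two, I would exhibit an admissible control at which $C^t_{y_0}$ attains full rank and then invoke the forward-accessibility criterion of \cite{meyn2012markov}.

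The main obstacle I anticipate is the rank verification itself, namely ruling out that the nonlinear derivatives $F'(y_{s-1})$ conspire with $\psi_1$ to make $C^t_{y_0}$ rank-deficient at \emph{every} reachable point. The delicate step is to locate a specific control sequence in the open set $\theta^t$ along whose induced trajectory the derivatives $G'$ are simultaneously nonvanishing and arranged so that the controllability matrix is nonsingular; here the asymptotic constancy of $G$ is convenient because it isolates regions where $G'$ is well-behaved, and the openness of $\theta$ is what finally transfers the submersion property into a genuinely open reachable set. Once full rank is secured at a single point, the remaining steps---continuity of the Jacobian, the rank theorem, and the appeal to the accessibility result of \cite{meyn2012markov}---are routine.
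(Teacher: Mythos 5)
Your proposal is correct and follows essentially the same route as the paper: both reduce forward accessibility to nonsingularity of the generalized controllability matrix, whose key entry is $c=\psi_1+\sum_i\beta_i\phi_{i,1}G'(\phi_{i,1}\hat{y}_1+\mu_i)$ (your $A_s=\psi_1+F'(y_{s-1})$ with unit noise injection), and both use $\psi_1\neq 0$ together with the boundedness/asymptotic constancy of $G$ (which forces $G'\to 0$ in the tails, so $c\to\psi_1\neq 0$) to exhibit a point where the matrix has full rank, whence forward accessibility via the openness of the control set.
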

\begin{proof}
The proof builds on \cite{panja2022interpretable, trapletti2000stationary}. 
Logistic squasher activation functions (used in the EWNet model) satisfy Assumption 1. Assumption 2 of Prop. \ref{prop1} implies the non-vanishing criterion (controllability) of the linear part of R.H.S. of Eqn. (\ref{eq2}). Since Assumption 1 holds for the ARNN model, then for any $k \in  \mathbb{Z}^+$ and any scalars $\beta_0,\beta_i,\mu_i$ and $\phi_i \neq 0$, the condition  $$\beta_0 +\displaystyle\sum_{i=1}^k\beta_i G'(\phi_i y +\mu_i)=0, \; \forall \; y \in \mathcal{R}$$ implies $\beta_0 =0$ (from Assumption 1). Next, we define a major element of the generalized controllability matrix (GCM) as follows: 

$$c =\psi_1+\displaystyle\sum_{i=1}^k \beta_i \phi_{i,1}G'\left(\phi_{i,1}(\hat{y}_1)+\mu_i\right).$$ 
We can set $\theta \equiv \mathcal{R}$ and choose any $\hat{y}_1$. Then Assumption 2 implies that $c\neq 0$. This indicates that the GCM matrix is a non-singular matrix and, therefore, the control system in Eqn. (\ref{eq2}) is forward accessible, concluding the proof of Prop. \ref{prop1}. Related lemmas for multilayered perceptron are given as Lemma 2.5-2.7 in \cite{hwang1997prediction}. 
\end{proof}

\begin{remark}
The controllability of the linear components of the ARNN process is shown in Prop. \ref{prop1} implies forward accessibility. But, the associated Markov chain is said to be irreducible when the support of the distribution of the noise process is sufficiently large.
\end{remark}
Therefore, under suitable conditions on the distribution of the noise process $\varepsilon_t$, we can show the irreducibility of the corresponding Markov chain.

\begin{theorem} \label{th1}(Theorem of Irreducibility)
Suppose the distribution of $\varepsilon_t$ is absolutely continuous w.r.t. the Lebesgue measure $\lambda$ and the probability distribution function (p.d.f.) $\nu(\cdot)$ of $\varepsilon_t$ is positive everywhere in $\mathcal{R}$ and lower semi-continuous. Then under the condition prescribed in Prop. \ref{prop1}, the Markov chain in Eqn. (\ref{eq2}) is irreducible on the state space $(\mathcal{R}^2, \mathcal{B})$. 
\end{theorem}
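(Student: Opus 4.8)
The plan is to derive irreducibility from the forward accessibility already secured in Proposition \ref{prop1}, combined with the full-support hypothesis on the innovation density, by exhibiting a strictly positive $t$-step transition density. This is the standard route for nonlinear state-space models \cite{meyn2012markov}, specialized to autoregressive neural nets in \cite{trapletti2000stationary, panja2022interpretable}: once a control system is forward accessible and the driving noise has a density that is positive and lower semi-continuous, the associated Markov chain is $\lambda$-irreducible. Since the state space is $\mathcal{R}^2$, I would unfold Eqn. (\ref{eq2}) over two consecutive steps, regarding the block $(\varepsilon_t,\varepsilon_{t+1})$ as the control that drives the companion state from $y$ to a point $z\in\mathcal{R}^2$ via the input-to-state map $F_2(y,\varepsilon_t,\varepsilon_{t+1})$.

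First I would record what Proposition \ref{prop1} supplies: controllability of the linear part ($\psi_1\neq0$) and non-singularity of the generalized controllability matrix, whose determinant is $c=\psi_1+\sum_{i=1}^{k}\beta_i\phi_{i,1}G'\bigl(\phi_{i,1}\hat{y}_1+\mu_i\bigr)\neq0$. This non-vanishing Jacobian is exactly the condition ensuring that $F_2(y,\cdot,\cdot)$ is a submersion onto an open subset of $\mathcal{R}^2$, so that the two-step kernel $P^2(y,\cdot)$ is absolutely continuous with respect to $\lambda$ rather than being concentrated on a lower-dimensional set. Inverting the smooth recursion locally and applying the change-of-variables formula, the induced density takes the schematic form
\[
p_2(y,z)=\nu\bigl(e_1(y,z)\bigr)\,\nu\bigl(e_2(y,z)\bigr)\big/\lvert c\rvert,
\]
where $e_1,e_2$ are the innovations recovered at the target $z$. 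Because $\nu(\cdot)$ is assumed positive everywhere on $\mathcal{R}$ and lower semi-continuous, each factor is strictly positive and the quotient is finite, so $p_2(y,z)>0$ for every $z$ in the reachable set. Moreover, since the innovations range over all of $\mathcal{R}$, unfolding two steps lets the two companion coordinates be assigned independently; hence the two-step reachable set is all of $\mathcal{R}^2$ and $p_2(y,\cdot)>0$ everywhere.

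Finally, for any $y\in\mathcal{X}$ and any $\mathcal{A}\in\mathcal{B}$ with $\lambda(\mathcal{A})>0$, positivity of the density yields $P^2(y,\mathcal{A})=\int_{\mathcal{A}}p_2(y,z)\,\lambda(dz)>0$, so that $\sum_{t=1}^{\infty}P^t(y,\mathcal{A})\geq P^2(y,\mathcal{A})>0$, which is precisely the definition of irreducibility on $(\mathcal{R}^2,\mathcal{B})$. I expect the main obstacle to be the measure-theoretic step asserting that $P^2(y,\cdot)$ is absolutely continuous with a strictly positive density on the reachable set: this is where the non-singularity of the generalized controllability matrix from Proposition \ref{prop1} and the lower semi-continuity of $\nu$ must be combined carefully, following the forward-accessibility-to-irreducibility machinery of \cite{meyn2012markov}. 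The remaining verifications are routine once this positive transition density is in hand.
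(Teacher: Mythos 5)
Your proposal is correct in substance but follows a genuinely different route from the paper. The paper's proof does not construct a transition density at all: it argues that the state $y^*=0$ is globally attractive for the associated control system, i.e.\ that the point $0$ can be reached from any initial state in two steps (one step for the second companion coordinate, two for the first), using only the boundedness and continuity of $g(\cdot)$, the condition $\psi_1\neq 0$, and the full support of the noise; it then concludes that the state space is connected and invokes the control-theoretic machinery of \cite{meyn2012markov} (forward accessibility plus a globally attractive state plus a positive, lower semi-continuous noise density) to deduce irreducibility and, as a bonus, aperiodicity. Your route instead unfolds the recursion over two steps and exhibits an explicitly positive two-step transition density, which verifies the paper's definition of irreducibility directly, $P^2(y,\mathcal{A})=\int_{\mathcal{A}}p_2(y,z)\,\lambda(dz)>0$, without passing through the T-chain/attractive-state argument; this is more self-contained and arguably more transparent, though it does not yield aperiodicity (which is not claimed in the theorem statement anyway). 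One small technical correction: for the companion-form recursion the map $(\varepsilon_1,\varepsilon_2)\mapsto(y_1,y_2)$ is triangular with unit diagonal, so its Jacobian determinant is $1$, not $\lvert c\rvert$; the quantity $c=\psi_1+\sum_i\beta_i\phi_{i,1}G'(\phi_{i,1}\hat{y}_1+\mu_i)$ appears only as an off-diagonal entry and the factor $1/\lvert c\rvert$ in your density is spurious. This is harmless here (Prop.~\ref{prop1} guarantees $c\neq 0$ and positivity of $p_2$ is unaffected), but it means the non-singularity of the generalized controllability matrix is doing less work in your explicit computation than you suggest --- for this additive-noise companion form the two innovations fill the two coordinates directly, whereas the GCM condition is what rescues forward accessibility in the general setting where no such triangular unfolding is available.
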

\begin{proof}
The proof build on \cite{trapletti2000stationary, chakraborty2020unemployment, panja2022interpretable}. 
It is trivial that the state $y^*=0$ is globally attractive from the control system defined in Eqn. (\ref{eq2}), and the next component of $y_t$, regardless of its origin, can reach the point $0$ in one step. Furthermore, we consider the iterated first component from $t=0$ to $t=2$ and define it as $y_2= \ldots +g(\ldots,\beta,\phi)$, where all the terms that are functions of the starting point or the second component are necessarily omitted. Owing to the bounded and continuous function $g(\cdot)$ and non-zero value of $\psi_1$, it is obvious that the initial component can reach the point $0$, irrespective of its starting point and the second component, in two steps. Following the above-stated argument, we can conclude that the state space in $\mathcal{R}^2$ is connected since every state can be approached in two steps. Hence, the Markov Chain, defined in Eqn. (\ref{eq2}) is `aperiodic' and `irreducible'. An immediate instance is a Gaussian white noise that satisfies the conditions stated in Theorem \ref{th1} without loss of generality.
\end{proof}

\begin{remark}
Theorem \ref{th1} shows the irreducibility property for the ARNN$(1,k)$ process and demonstrates its proximity to the concept of forward accessibility of a control system. However, we also showed that ARNN processes might not exhibit forward accessibility, and in such scenarios, inferring about the data-generating process from the observed data is impossible.
\end{remark}

\subsubsection{Ergodicity and Stationarity}
This section shows the (strict) stationarity of the state-space form defined in Eqn. (\ref{eq2}). For a state-space $\{y_t\}$, the notion of stationarity has a close relationship with the geometric ergodicity of the process. The geometric ergodicity of a stochastic process implies that the underlying distribution of the process converges to the unique stationary solution at a geometric rate for any initials of the model \cite{meyn2012markov}. A formal definition of geometric ergodicity and asymptotic stationarity can be given following \cite{trapletti2000stationary}. 
\begin{definition}
A Markov chain $\{y_t\}$ is called geometrically ergodic if there exists a probability measure $\Pi$ on $(\mathcal{X, B}, \lambda)$ and a constant $\rho >1$ such that $\displaystyle\lim_{t\to\infty} \rho^t ||P^t(y,\cdot)-\Pi(\cdot)||=0$ for each $y \in \mathcal{X}$, where $||\cdot||$ denotes the total variation norm. Then, we say the distribution of $\{y_t\}$ converges to $\Pi$ and $\{y_t\}$ is asymptotically stationary. 
\end{definition}
Hence, $\{y_t\}$ is (strictly) stationary when it starts in the infinite past or with initial distribution $\Pi$. We give the main result on ergodicity and stationarity of the associated Markov chain in the theorem below. 

\begin{theorem}\label{th2} (Main Theorem)
Suppose the Markov chain $\{y_t\}$ of the ARNN$(1,k)$ process satisfies the conditions of Theorem \ref{th1} and $E|\varepsilon_t| < \infty$. Then, a sufficient condition for the geometric ergodicity (vis-a-vis asymptotic stationarity) of the Markov chain $\{y_t\}$ is that $|\psi_1|< 1$. 
\end{theorem}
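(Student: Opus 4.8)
The plan is to apply the Foster--Lyapunov (geometric) drift criterion for Markov chains, following \cite{meyn2012markov, trapletti2000stationary}. By Theorem \ref{th1}, the chain $\{y_t\}$ defined in Eqn. (\ref{eq2}) is already irreducible and aperiodic under the stated hypotheses on $\varepsilon_t$, so the only remaining ingredient is a drift inequality that forces the chain back toward a compact (hence petite) set at a geometric rate. The decisive structural fact I would exploit is that the activation $G$ is bounded (Assumption 1 of Prop. \ref{prop1}), so the nonlinear term $F(y_{t-1}) = \nu + \sum_{i=1}^k \beta_i G(\phi_{i,1} y_{t-1} + \mu_i)$ is uniformly bounded, say $|F(y)| \le B < \infty$ for all $y$. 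Consequently the large-$|y|$ behaviour of the recursion is governed entirely by the linear term $\psi_1 y_{t-1}$, and the condition $|\psi_1| < 1$ makes this a contraction.

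Concretely, I would take the test function $V(y) = 1 + |y|$ and estimate the one-step conditional expectation. Using the triangle inequality together with $|F(y)| \le B$ and $E|\varepsilon_t| < \infty$,
\begin{equation*}
E\big[V(y_t) \,\big|\, y_{t-1} = y\big] = 1 + E\big|\psi_1 y + F(y) + \varepsilon_t\big| \le 1 + |\psi_1|\,|y| + B + E|\varepsilon_t|.
\end{equation*}
Writing $|\psi_1|\,|y| = |\psi_1|\,V(y) - |\psi_1|$ and collecting constants, this reads $E[V(y_t) \mid y_{t-1} = y] \le \gamma\, V(y) + c$ with $\gamma = |\psi_1| < 1$ and $c = 1 - |\psi_1| + B + E|\varepsilon_t| < \infty$.

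To cast this in the standard drift form I would fix any $\gamma' \in (|\psi_1|, 1)$ and observe that whenever $V(y) \ge c/(\gamma' - \gamma)$ --- equivalently $|y|$ exceeds a finite threshold $R$ --- the additive constant $c$ is absorbed, giving $E[V(y_t) \mid y_{t-1}=y] \le \gamma'\, V(y)$. On the complementary compact set $C = \{|y| \le R\}$ the drift is trivially bounded since $V$ and $F$ are finite there, so $V$ satisfies the geometric drift condition $E[V(y_t)\mid y_{t-1}=y] \le \gamma'\, V(y) + b\,\mathbb{1}_C(y)$ for some finite $b$, and $C$ is petite because the chain is irreducible. Invoking the geometric ergodicity theorem of \cite{meyn2012markov} (equivalently Tweedie's criterion) then produces a stationary measure $\Pi$ to which $P^t(y,\cdot)$ converges in total variation at a geometric rate, which is precisely asymptotic (hence strict) stationarity in the sense of the definition preceding the statement.

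The step I expect to be the genuine crux is verifying the drift inequality rather than the abstract machinery: everything hinges on the uniform bound $|F(y)| \le B$, which decouples the nonlinear neural-network contribution from the tail behaviour and lets the scalar contraction $|\psi_1| < 1$ do the work. The remaining requirements --- $\phi$-irreducibility, aperiodicity, and petiteness of compact sets --- are inherited from Theorem \ref{th1} and the absolute continuity of $\varepsilon_t$, so no additional obstacle arises there. Finally, since the EWNet forecast is an additive ensemble of $J+1$ such ARNN$(p,k)$ components (see the Remark above), geometric ergodicity of each summand transfers to the full process.
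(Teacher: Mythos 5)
Your proof is correct and follows essentially the same route as the paper: both verify the Foster--Lyapunov geometric drift criterion of Meyn and Tweedie with a linear-growth test function, using the boundedness of the activation $G$ to absorb the nonlinear term and the contraction $|\psi_1|<1$ to control the linear part, with irreducibility and aperiodicity inherited from Theorem \ref{th1}. The only cosmetic difference is that the paper sets up the argument via the companion matrix $\Psi$ and a similarity transformation $\mathcal{Q}$ so as to cover the general ARNN$(p,k)$ case, whereas you work directly in the scalar ARNN$(1,k)$ setting that the theorem actually states.
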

\begin{proof}
To show the geometric ergodicity, we use Theorem 15.0.1 of \cite{meyn2012markov} and verify the drift criterion 15.3 of Theorem 15.0.1 of \cite{meyn2012markov}. Similar results for the vector threshold autoregressive model are discussed in \cite{tjostheim1990non}. 

We begin the proof by recalling the state-space model in Eqn. (\ref{eq2}): $$y_t = \psi_1 y_{t-1}+F(y_{t-1})+\varepsilon_t,$$
where $F(\cdot)$ is the nonlinear part and the intercept. For the general ARNN$(p,k)$ process, we define the following matrix:
$$\Psi := 
\begin{bmatrix} 
\psi_1 & \psi_2 & \dots  & \psi_{p-1} & \psi_p \\
1      &      0 & \dots  &          0 &      0 \\
0      &      1 & \dots  &          0 &      0 \\
\vdots & \vdots & \ddots &     \vdots & \vdots \\
0      &      0 &  \dots &          1 &      0 \\
\end{bmatrix}
\quad
$$
as the shortcut connections to the autoregressive part. Now, there exists a transformation $\mathcal{Q}$ such that $\Gamma = \mathcal{Q}\Psi\mathcal{Q}^{-1}$ where the diagonal elements $\Gamma$ consists of the eigenvalues of $\Psi$ and the off-diagonal elements are arbitrarily small. Considering, $T(y) = ||\sum y||$ as the test function and $\tau =  \{y \in \mathcal{R}^p, T(y) \leq c'\}$, for some $c'<\infty$, as the test set, we have
\begin{align}
    \mathbb{E}[T(y_t)|y_{t-1}=y] &\leq ||\mathcal{Q}\Psi y|| + ||\mathcal{Q} F(y)|| + \mathbb{E}||\mathcal{Q}\varepsilon_t|| \nonumber \\
    & \leq (||\Lambda|| +||\Delta||) T(y) \nonumber + ||\mathcal{Q}F(y)|| +\mathbb{E}||\mathcal{Q}\varepsilon_t||,
\end{align}
where $\Lambda$ is a diagonal matrix with the eigenvalues of $\Psi$, i.e., $\Lambda = diag(\Gamma)$ and $\Delta = \Gamma - \Lambda$. Since, the absolute value of the largest eigenvalue of $\Psi$ is strictly less than one, following the assumption of Theorem \ref{th2}, then $||\Lambda||< 1$, and  the transformation $\mathcal{Q}$ can be so chosen that $(||\Lambda||+||\Delta||)< 1- \epsilon$ for some $\epsilon > 0$.
Since the second and third terms are bounded, we can choose $\epsilon$ such that $\mathbb{E}[T(y_t)|y_{t-1}=y]\leq (1-\epsilon)T(y)+\delta {1}_\tau(y)$ for some $0<\delta<\infty$ and for all $y$. The result is also valid for the test function $T(y)+1$ and hence, we get the desired result. 
\end{proof}

\begin{remark}
Theorem \ref{th2} states the sufficient condition for the geometric ergodicity of the ARNN$(1,k)$ process. Consider the following example: if $\psi_1=1$, then the long-term behavior of the ARNN$(1,k)$ process can be determined by the nonlinear part and the intercept term of the process. Moreover, the geometric convergence rate in Theorem \ref{th2} implies that the memory of the ARNN process vanishes exponentially fast. This means that the simplest version of the ARNN$(p,k)$ process converges to a Wiener process \cite{li2018nonlinear}. Also, theoretical results suggest that the shortcut weight corresponding to the autoregressive part determines whether the overall process is ergodic and asymptotically stationary.
\end{remark}

\subsubsection{Practical Implications of Theoretical Results}
Some interpretations and practical implications of the theoretical results are discussed below from practitioners' points of view:
\begin{enumerate}[label=(\alph*)]
\item In the ideal situation, when an irreducible ARNN process generates the data, the estimated weights are not too far from the true weights. Then, one can draw an indirect conclusion on the statistical nature of the estimated shortcut weight corresponding to the autoregressive part being less than one in absolute terms, and then the data generation process is said to be ergodic and stationary. But, if the conditions are not met, the model is likely to be unspecified, and the estimation procedure should be diligently done. 
\item The theoretical results of asymptotic stationarity and ergodicity for the EWNet$(p,k)$ model would directly follow from the ARNN$(p,k)$ process since the proposed EWNet is a simple aggregation of several ARNN models fitted after the Wavelet decomposition of the time series data. These theoretical results guarantee that the proposed EWNet model cannot show `explosive' behavior or growing variance over time. 
\item The theoretical result for the number of hidden nodes in the EWNet model is set to a fixed value depending on the number of lagged inputs (as discussed in Section \ref{Stable_Learning}). Due to this, the running time of the EWNet model is minimal as compared to unstable neural networks in which the number of hidden nodes either becomes too large or too small. Thus, our proposed model does not face the problem of under-fitting or over-fitting.  
\end{enumerate}

\section{\label{sec:result}Experimental Analysis}
In this section, we present a detailed description of the: Epidemic datasets and their global characteristics (refer to Section \ref{Epi_data}); Performance measures used in our study (refer to Section \ref{metrics}); Benchmark forecasters utilized in our study (\ref{Models}), and Implementation of the proposed EWNet model for epidemiological datasets along with its performance comparison with the state-of-the-art forecasters (refer to Section \ref{Comparison}).

\subsection{\label{Epi_data}Epidemic Datasets and their Global Characteristics}
Epidemic datasets are accumulated from publicly available data resources (health websites, published manuscripts, etc.). They represent crude data of diseases, namely dengue, malaria, hepatitis B, and influenza, occurring in distinct regions. In this study, we have considered 15 datasets, amongst which 11 of them represent the overall number of subjects infected by a particular disease in a week, whereas the remaining corresponds to the aggregated monthly caseload. For example, the dengue incidence cases in Ahmedabad are recorded weekly per $10^4$ population, whereas, for the Philippines, we consider the total number of people suffering from dengue across several regions per $10^6$ population. These epidemic time series datasets are of different lengths and free from missing observations. Moreover, we analyze several global attributes of these datasets to understand real-world epidemiological datasets' structural patterns and identify the best-suited epicasting framework for the given scenario. Since the primary objective of this study is to provide a meaningful epicasting technique for real-world epidemic datasets, comprehensive knowledge of the data is the foundation step to accomplish this goal. Thus, we study several classical and advanced time series characteristics of the epidemic datasets based on the recommendations of \cite{de200625, lemke2015metalearning}. A detailed description and usage of these global characteristics are summarized below: \\
\emph{Stationarity} is a time series's foremost fundamental statistical property essential for many classical forecasting models. A time series is said to be generated from a stationary process if the series does not change over time. Our study used the Kwiatkowski–Phillips–Schmidt–Shin (KPSS) test to test the null hypothesis that the given time series is stationary \cite{shin1992kpss}. This test is implemented using the \emph{kpss.test} function of “tseries” package in R. \\
\emph{Nonlinearity} is another essential time series feature that determines the model variant to be used. For testing the null hypothesis that the observed time series is linear, we perform a Teraesvirta's neural network test, using the \emph{nonlinearityTest} function of the R package ``nonlinearTseries" \cite{terasvirta1993power}.\\
\emph{Seasonality} is another essential feature of a time series that refers to the repeating patterns of the series within a fixed period. We analyze the given series by performing a combined test comprising of the Kruskall-Wallis test and QS test of seasonality, often termed Ollech and Webel's test, to determine the presence of seasonal patterns. This test was performed using \emph{isSeasonal} function of ``seastests" in R.\\
\emph{Long range dependence} in time series processes has attracted much attention in probabilistic time series. To compute the time series's long-range-dependency or self-similarity parameter, Hurst exponent($H$), is used \cite{hurst1965experimental}. The value of $H$ is computed using the \emph{hurstexp} function of the R package ``pracma”. \\

On performing the above-mentioned statistical tests and computing the global characteristics of epidemic datasets, we summarize the relevant results in Table \ref{Global_cht}. 

\begin{table*}
    \centering
    \caption{Global characteristics of epidemic datasets}
    \scriptsize
    \begin{tabular}{|c|c|c|c|c|}
    \hline
        Datasets & Time span & Frequency & Length & Behavior\\ \hline
        \href{https://github.com/JohannHM/Disease-Outbreaks-Data/blob/master/Australia_Flu.dat}{Australia Influenza} & 1947 - 2015 & Weekly & 974 & Long term dependent, Non-stationary, Non-seasonal, Nonlinear \\ 
        \href{https://github.com/JohannHM/Disease-Outbreaks-Data/blob/master/Japan_Flu.dat}{Japan Influenza} & 1998 - 2015 & Weekly & 964 & Long term dependent, Stationary, Non-seasonal, Nonlinear \\
        \href{https://github.com/JohannHM/Disease-Outbreaks-Data/blob/master/Mexico_Flu.dat}{Mexico Influenza} & 2000 - 2015 & Weekly & 830 & Long term dependent, Non-stationary, Non-seasonal, Nonlinear\\ \hline
        Ahmedabad Dengue \cite{enduri2017estimation} & 2005 - 2012 & Weekly & 424 & Long term dependent, Non-stationary, Non-seasonal, Nonlinear \\ 
        Bangkok Dengue \cite{polwiang2020time} & 2003 - 2017 & Monthly & 180 & Long term dependent, Non-stationary, Seasonal, Nonlinear \\ 
        \href{https://github.com/JohannHM/Disease-Outbreaks-Data/blob/master/Colombia_Dengue.dat}{Colombia Dengue}  & 2005 - 2016 & Weekly & 626 & Long term dependent, Non-stationary, Non-seasonal, Nonlinear  \\ 
        \href{https://data.gov.hk/en-data/dataset/hk-dh-chpsebcdde-dengue-fever-cases}{Hong Kong Dengue} & 2002 - 2017 & Monthly & 192 & Long term dependent, Non-stationary, Seasonal, Linear \\ 
        Iquitos Dengue \cite{deb2022ensemble} & 2002 - 2013 & Weekly & 598 & Long term dependent, Stationary, Non-seasonal, Nonlinear \\ 
        Philippines Dengue \cite{chakraborty2019forecasting} & 2008 - 2016 & Monthly & 108 & Long term dependent, Stationary, Non-seasonal, Nonlinear \\ 
        San Juan Dengue \cite{johansson2019open} & 1990 - 2013 & Weekly & 1196 & Long term dependent, Stationary, Non-seasonal, Nonlinear \\ 
        \href{https://github.com/JohannHM/Disease-Outbreaks-Data/blob/master/Singapore_Dengue.dat}{Singapore Dengue} & 2000 - 2015 & Weekly & 838 & Long term dependent, Non-stationary, Non-seasonal, Linear \\ 
        \href{https://github.com/JohannHM/Disease-Outbreaks-Data/blob/master/Venezuela_Dengue.dat}{Venezuela Dengue} & 2002 - 2014 & Weekly & 660 & Long term dependent, Non-stationary, Non-seasonal, Linear \\ \hline
        China Hepatitis B \cite{wang2018comparison} & 2010 - 2017 & Monthly & 92 & Long term dependent, Non-stationary, Seasonal, Nonlinear\\  \hline
        \href{https://github.com/JohannHM/Disease-Outbreaks-Data/blob/master/Colombia_Malaria.dat}{Colombia Malaria} & 2005 - 2016 & Weekly & 626 & Long term dependent, Non-stationary, Non-Seasonal, Linear \\
        \href{https://github.com/JohannHM/Disease-Outbreaks-Data/blob/master/Venezuela_Malaria.dat}{Venezuela Malaria} & 2002 - 2014 & Weekly & 669 & Long term dependent, Non-stationary, Non-Seasonal, Nonlinear \\ \hline
    \end{tabular}
    \label{Global_cht}
\end{table*}

\subsection{Performance Measures\label{metrics}}

In our analysis, we evaluate the forecasts obtained from the proposed model and other baseline models using four popularly used accuracy measures, namely Root Mean Squared Error (RMSE), Mean Absolute Scaled Error (MASE), Mean Absolute Error (MAE), and symmetric Mean Absolute Percent Error (sMAPE) \cite{hyndman2018forecasting}. The mathematical formula for calculating these measures is given below:

{\scriptsize
\[ \text{ RMSE} = \sqrt{\frac{1}{N}\sum_{t =1}^{N} (y_t - \hat{y_t})^2}; \;
\text{ MASE} = \frac{\sum_{t = F + 1}^{F+N} |\hat{y_t} - y_t|}{\frac{N}{F-S} \sum_{t = S+1}^F |y_t - y_{t-S}|};  
\;\]
\[
\text{ MAE} = \frac{1}{N}\sum_{t =1}^{N} |y_t - \hat{y_t}|; \;\text{and} \;
\text{ sMAPE} = \frac{1}{N} \sum_{t=1}^N \frac{2|\hat{y_t} - y_t|}{|\hat{y_t}|+ |y_t|} \times 100 \%;
\]
}
where $N$ denotes the forecast horizon, $\hat{y_t}$ is the forecast against the actual value $y_t$. By definition, the minimum value of these performance measures suggests the `best' model. 

\subsection{\label{Models}Benchmark Forecasting Models}

Below we provide a brief description of the baseline models included in the experimental analysis and their implementation:\\ 
\noindent (a) Statistical Models:
\begin{itemize}
        \item \emph{Random Walk} (RW), also popularly known as the persistence model, is one of the simplest stochastic models based on the assumption that in each period the time-dependent variable takes a random step away from its previous value, and the steps are independently and identically distributed in size with zero-mean \cite{pearson1905problem}.
        \item \emph{Random Walk with Drift} (RWD) is a variant of the persistence model where the distribution of step sizes has a non-zero mean \cite{entorf1997random}. If the series being fitted by a random walk model has an average upward (or downward) trend that is expected to continue in the future, one includes a non-zero constant term in the model, i.e., assume that the random walk undergoes ``drift''.
        \item \emph{Autoregressive Integrated Moving Average} (ARIMA) is one of the most popular forecasting techniques that track linearity in a stationary time series \cite{box2015time}. The ARIMA model is a linear regression model indulged to track linear tendencies in stationary time series data. The model is expressed as ARIMA(p,d,q), where p, d, and q are integer parameter values that decide the structure of the model. More precisely, p and q are the order of the AR and MA models, respectively, and parameter d is the level of the difference applied to the data.
        \item \emph{Exponential Smoothing State Space} (ETS) models are very effective univariate forecasting techniques. This model comprises of three components - an error component (E), a trend component (T), and a seasonal component(S). Forecasts are computed in this model as a weighted average of historical data, with exponentially decreasing weights for distant observations \cite{hyndman2008forecasting}. 
        \item \emph{Theta Method} is a univariate time series framework that decomposes the series into two or more `theta lines' and extrapolates them using various forecasting techniques; the predictions for each series are aggregated to produce the outcome \cite{assimakopoulos2000theta}.
        \item \emph{Trigonometric Box-Cox ARIMA Trend seasonality} (TBATS) model handles time series data with multiple seasonal patterns using an exponential smoothing method \cite{de2011forecasting}.
        \item [] Various statistical models, namely RW, RWD, ARIMA, ETS, Theta, and TBATS models are implemented using the ``forecast" package of R statistical software. 
        \item \emph{Self-exciting Threshold Autoregressive} (SETAR) is an extended autoregressive model that allows for flexibility in the model parameters through a regime-switching behavior \cite{tong1990non}. We execute this model using the \emph{setar} function of the ``tsDyn" package in R with the default embedding dimension as 4.
        \item \emph{Wavelet-based ARIMA} (WARIMA) model is a variant of the ARIMA method. This model decomposes a non-stationary time series into several wavelet coefficients and generates forecasts from each of these series using an ARIMA model, and the final prediction is an aggregate of these candidate forecasts \cite{aminghafari2007forecasting}. The WARIMA models are trained on the datasets with ``WaveletArima"  package of R with the default parameters $MaxARParam = MaxMAParam = 5$.  
        \item \emph{Bayesian Structural Time Series} (BSTS) framework models structural time series in Bayesian framework for generating short-term forecasts and was implemented using ``bsts" package in R \cite{scott2014predicting}. 
\end{itemize}
\noindent (b) Machine Learning Approaches:
\begin{itemize}
        \item \emph{Artificial Neural Networks} (ANN), also known as neural nets, are popularly used in supervised learning problems. It is an extreme simplification of human neural systems and comprises of computational units analogous to biological neurons. ANNs consist of three layers: input, hidden (one or more), and output. Each neuron in the $m^{th}$ layer is interconnected with the neurons of the $(m+1)^{th}$ layer by some signal. Each connection is assigned a weight. The output may be calculated after multiplying each input with its corresponding weight. The output passes through an activation function to get the final ANN output. This multi-layered feedforward neural network can also model time-dependent signals using fully connected hidden layers \cite{rumelhart1986learning}. In standard ANN, a cross-validation approach is applied to choose the number of hidden layers and the number of hidden nodes. Furthermore, the weights are optimized using a gradient descent back-propagation algorithm. The ANN framework is implemented using the \emph{mlp} function of ``nnfor" package in R.

        \item \emph{Autoregressive Neural Network} (ARNN) is a modification of the ANN algorithm specialized for time series forecasting applications. Many potential problems in fitting ANN models were revealed such as the possibility that the fitting routine may not converge or may converge to a local minimum. Moreover, it was found that an ANN model which fits well with the training data may give poor out-of-sample forecasts \cite{faraway1998time}. To overcome these challenges, a single hidden-layered feedforward architecture, namely ARNN is proposed to generate forecasts in time series datasets \cite{faraway1998time}. It uses an autoregressive (AR) model to choose the optimal number of nodes in the input layer. This tends to reduce the effect of extreme input values, thus making the network somewhat robust to outliers as compared to a standard ANN model. The inputs to each node are combined using a weighted linear combination and modified by a nonlinear (sigmoidal activation) function before computing output. The model weights are directly estimated from the data using backpropagation, and the number of neurons in the hidden layer is set to $k = (p+1)/2$ where $p$ denotes the number of inputs selected using AR model \cite{hyndman2018forecasting}. We use the \emph{nnetar} function of the ``forecast" package of R to implement the  ARNN framework. 
         
        \item \emph{Support Vector Regression} (SVR) is a supervised learner that fits an optimal hyperplane to predict the future values of a time series \cite{smola2004tutorial}. To apply the SVR model, we transform the time series data into a matrix in which each value relates to the time window (lags = 15) that precedes it. Followed by the transformation, the radial basis kernel-based SVR model is fitted to the dataset by setting the regularization parameter to 1.0 and the loss penalty parameter value to 0.2 to generate the multi-step ahead forecasts in a recursive manner. In this study, we utilize the ``sktime" library in python to perform the data transformation and implement the SVR framework on epidemic datasets. 
\end{itemize}

\noindent (c) Deep Learning Models:
\begin{itemize}
        \item \emph{Long Short-term Memory} (LSTM) is a variant of the recurrent neural network (RNN) approach that models the long-term dependencies in a sequence prediction problem using several feedback connections in the training phase \cite{hochreiter1997long}. For implementing the LSTM networks, we utilize the default number of input and output observations as 10 and 3, respectively; the number of feature maps for each hidden RNN layer is set as 25, and the model is trained over 100 epochs \cite{herzen2022darts}. It is a popular benchmark deep learner for time series forecasting tasks. 
        
        \item \emph{Neural Basis Expansion Analysis for Time Series} (NBeats) model is extensively designed for forecasting time series datasets. It comprises of a fully connected neural network architecture with several blocks. Each block contains two layers - the first is responsible for processing the time series data to reproduce the past and forecast the future, and the second layer remodels the residuals obtained from the first to update the forecasts \cite{oreshkin2019n}. For the experimentation, we set the default number of blocks as 4.
        \item \emph{Deep Autoregressive} (Deep AR) is a time series forecasting model that utilizes a recurrent neural architecture for generating point estimates and interval estimates about future time points \cite{salinas2020deepar}.
        \item \emph{Temporal Convolutional Networks} (TCN) model utilizes convolutions to learn the sequential pattern in a time series and generalizes this pattern in the future \cite{chen2020probabilistic}. We train the TCN model with a default kernel size of 2 and 4 filters.
        \item \emph{Transformers} is a state-of-the-art deep learning model that analyzes the sequential patterns in time series using a multi-headed attention mechanism. This model can learn complex dynamic systems of historical data \cite{wu2020deep}. We implemented the transformers model with the input dimensionality as 64 and specified the number of heads in the multi-headed attention mechanism as 8. These default parameters avoid over-fitting in univariate series.
        \item [] All the above-stated deep learning frameworks have been implemented using the python library ``Darts" \cite{herzen2022darts}  specially designed for modeling time series datasets.
        \item \emph{W-Transformer} is a wavelet-based deep learner which has been recently proposed as an extension of the EWNet framework \cite{sasal2022w} for large-frequency time series data. This model utilizes a MODWT decomposition to the time series data and builds multi-head attention-based local transformers on the decomposed datasets to vividly capture the time series's non-stationarity and long-range nonlinear dependencies.
        \item \emph{Wavelet NBeats} (W-NBeats) is a wavelet variant of the data-driven NBeats framework, proposed as an extended version of EWNet \cite{singhal2022fusion}. 
        This model decomposes the time-indexed signal using a DWT approach with a Daubechies 4 filter into high-frequency and low-frequency wavelet coefficients. Followed by the DWT mechanism, the transformed series are individually modeled using an NBeats framework to generate one-step ahead forecasts. Finally, the forecasts generated by the detailed and smooth coefficients are aggregated to recursively generate the desired multi-step ahead predictions. This method is more useful for handling time series with multiple seasonal patterns. We implement the W-Transformers framework using the procedure described in \cite{sasal2022w}. In a similar way, we implement the W-NBeats framework.
\end{itemize}

\noindent (d) Hybrid Models:\\
The idea of generating hybrid forecasts of a time series after splitting it into linear and nonlinear components was first suggested by Zhang \cite{zhang2003time}. It comprises of two stages -  firstly, the linear part of the series is predicted using a linear model (e.g., ARIMA), and the residuals generated from this linear model are assumed to contain nonlinear patterns and are re-modeled in the second stage using a nonlinear model  (e.g., ARNN)  \cite{chakraborty2019forecasting}. The forecasts from these two stages are finally aggregated to generate the desired output. This hybridized approach has shown significant improvement over its component forecasters in several applications \cite{zhang2003time,chakraborty2020unemployment,chakraborty2020nowcasting,chakraborty2019forecasting}. We have considered three hybridized methods in our study, namely: 1. {Hybrid ARIMA-WARIMA} (We call it Hybrid-1) \cite{chakraborty2020real}; 2. {Hybrid ARIMA-ANN} (We call it Hybrid-2) \cite{zhang2003time}; 3. {Hybrid ARIMA-ARNN} (We call it Hybrid-3) \cite{chakraborty2019forecasting}. Forecasts for these hybrid models are generated using the implementation available at \cite{chakraborty2020nowcasting}.

\subsection{\label{Comparison}Experimental Results and Benchmark Comparison}

In this section, we discuss the implementation of the proposed EWNet model for epicasting. Several benchmark models are also considered for comparing the performance of our proposed epicaster. To assess the effectiveness of EWNet and comparative models, we use the standard cross-validation technique for time series forecasting, say rolling window method \cite{de2021hybrid}. To demonstrate the generalizability of the EWNet model, we analyze its epicasting performance for three different forecast horizons - long, medium, and short-term spanning over $(52, 26, 13)$ weeks for weekly datasets and $(12,6,3)$ months for monthly datasets, respectively. Furthermore, we compare the accuracy measures of our proposed EWNet model with state-of-the-art statistical models, machine learning methods, advanced deep learning architectures, and hybridized approaches. We initially partitioned the datasets into three segments for the experimentation: train, validation, and test set. The validation set was chosen to represent the temporal behavior of both the train and test sets \cite{hyndman2018forecasting}. We considered the validation size twice that of the test, following \cite{godahewa2021monash}. The validation set was used for tuning the hyper-parameters of the proposed EWNet$(p,k)$ model based on MASE metric, a popularly used forecasting metric \cite{hyndman2018forecasting}. Implementation of the EWNet algorithm (see Section \ref{Proposed_Model} for details) is done using R statistical software. 

During the implementation of the EWNet model, a multiresolution-based MODWT approach was first employed using the \emph{modwt} function of the ``wavelets" package in R to decompose the training data into its corresponding wavelet and scaling coefficients using the pyramid algorithm with `haar' filter and the number of levels set to the floor function of $\log_e(length(train))$ (see details in Algorithm \ref{algo}). In the next step, each series of wavelet and scaling coefficients (also named as details and smooth, respectively) are modeled with an autoregressive neural network having $p$ lagged inputs and $k$ hidden nodes arranged in a single hidden layer. For selecting the value of $p$, we follow a grid search approach over the range $(1 - 20)$ for epidemic datasets considered in this study. The choice of another model parameter $(k)$ defining the number of hidden nodes in the hidden layer of EWNet was made using the previously defined formula $k = \left[\frac{(p+1)}{2}\right]$ (as described in Section \ref{Stable_Learning}). Implementation of neural network generates a one-step-ahead forecast of the series using the \emph{nnetar} function of R statistical software \cite{hyndman2018forecasting}. Once the forecast for the validation of the desired horizon is generated for a grid of $p$ values, the parameter ($p$) was chosen by minimizing the MASE score on the validation dataset. Once the $p$ is finalized, we re-train the model using the chosen value of $p$ to generate one-step ahead out-of-sample forecasts. Furthermore, autoregressive feedforward neural network is also utilized iteratively to generate the forecast of the desired horizon. Finally, the output generated from all the networks is aggregated for forecasting the epidemic datasets. 

Below we discuss the values of the EWNet model parameters $(p,k)$ used for different epidemic datasets. In the case of the Singapore dengue incidence dataset, the chosen parameters were $(1,1)$ for all three forecast horizons, however, for the Venezuela dengue dataset, the values of $(p,k)$ are selected as $(1,1)$, $(7,4)$, and $(11,6)$ for 13, 26, and 52-weeks ahead forecasts, respectively. For forecasting short, medium, and long-term dengue incidence in Colombia, we use $(11,6)$, $(30,15)$, and $(7,4)$ as the values of the hyperparameters whereas for malaria incidence, the corresponding values are $(19,10)$, $(13,7)$, and $(20,10)$. For generating 3, 6, and 12-months ahead forecasts of hepatitis B incidence in China, the selected values of $(p,k)$ are $(2,1)$, $(1,1)$, and $(15,8)$. In the case of the Bangkok dataset, the trained EWNet model utilizes $(5,3)$, $(6,3)$, and $(1,1)$ as the model tuning parameters for forecasting dengue cases with 3, 6, and 12-month lead time, respectively. The values of the hyperparameters $(p,k)$ of the EWNet model for generating short, medium, and long-term forecasts of the Philippines are $(19,10)$, $(15,8)$, and $(14,7)$ and for Hong Kong datasets were $(7,4)$, $(6,3)$, and $(10,5)$, respectively. The malaria case loads of Venezuela are forecasted for short-term using EWNet $(20,10)$ model, and for the medium and long-term forecast the fitted EWNet model architecture has $(12,6)$ and $(1,1)$ as the chosen set of parameters values. In the case of Iquitos dengue incidence, the tuning hyper-parameter values are $(19,10)$, $(1,1)$, and $(5,3)$ for 13, 26, and 52-weeks ahead forecasts. For generating a long-term forecast of dengue incidence in San Juan the model, hyper-parameters are selected as $(9,5)$ whereas, in the case of short and medium-term forecasting, the chosen values are $(20,10)$ and $(1,1)$. The forecasts for Ahmedabad dengue cases are generated with the chosen architecture of the EWNet model as $(9,5)$, $(19,10)$, and $(15,8)$ for 13, 26, and 52 weeks, respectively. For generating short, medium, and long-term forecasts of influenza incidence cases the proposed model is trained with $(1,1)$, $(10,5)$, and $(1,1)$ for Japan, $(2,1)$, $(5,3)$, and $(1,1)$ for Mexico, and $(13,7)$, $(14,7)$, and $(4,2)$ for Australia, respectively. 


Once we implemented our proposed model on these epidemic datasets, we generated out-of-sample forecasts for different forecast horizons. Beneath, we summarize the epicasting performance of the proposed EWNet model with other state-of-the-art forecasters in terms of four performance measures. Three different forecast horizons are considered: short, medium, and long-term. Experimental results presented in tables \ref{S-DL}, \ref{M-DL}, and \ref{L-Trad} depict that the models' efficiency depends mainly on the type of disease considered and the forecast horizon. The accuracy measures for the Australian influenza cases show that our proposed EWNet architecture outperforms all the benchmark epicasters for different forecast horizons. Notably, the short-term forecast of the EWNet framework is more reliable than the second-best epicaster, ARNN. This improvement in the forecast accuracy is predominately attributed to the MODWT decomposition of the epidemic series. In the case of influenza incidence in Japan, the data-driven SVR model epicasts the 13-weeks ahead disease dynamics most accurately as measured by the RMSE metric, whereas the forecasts generated by the conventional Theta model lie closer to the actual incidence cases in terms of the absolute, scaled, and relative error metrics. However, the deep learning-based LSTM network and the hybrid ARIMA ANN methods are more precise for their medium-term forecasting analog. Moreover, the long-term influenza forecasts generated by the proposed EWNet framework for the Japan region are highly competitive with the deep neural architecture-based NBeats and ARNN frameworks. For Mexico's long-term influenza forecasting task, the proposed EWNet framework outperforms the baseline epicasters in terms of all the key performance indicators except the sMAPE score, where the LSTM network exhibits the least score. On the contrary, for the medium-term and short-term counterparts of the Mexico influenza epicasting, the persistence model and the hybridized ARIMA-WARIMA (Hybrid-1) frameworks, respectively, produce the best results. Moreover, based on the accuracy measures of the dengue forecasting, we can conclude that the proposed EWNet model generates a more reliable long-term and medium-term forecast for Ahmedabad and Hong Kong regions. In particular, for the Ahmedabad dengue incidence cases, the 52-weeks ahead forecast is improved by 37\% due to the use of the stable nonlinear neural network framework with the MODWT decomposition (as done in EWNet) instead of the linear ARIMA model with the MODWT decomposition (as done in WARIMA). However, for the short-term dengue forecasting of these regions, the proposed EWNet model and the Deep AR framework display competitive performance. The former model has the least RMSE and MASE scores and the latter performs best in terms of MAE and sMAPE metrics. Furthermore, for the dengue incidence cases of the Iquitos, Philippines, and Venezuela regions, the proposed EWNet approach demonstrates superior long-term forecasting ability compared to all the statistical, machine learning, and deep-learning forecasters. However, for the 26-weeks and 13-weeks ahead epicasting of the Venezuela dengue cases, the kernel-based SVR model, the persistence model, and the EWNet framework generate competitive out-of-sample predictions. Although the proposed EWNet framework and the SVM model exhibit the best short-term forecasting performance for the Philippines and Iquitos regions, the deep learning-based LSTM and NBeats methods significantly surpass other forecasters with the lowest medium-term forecasting error for these regions, respectively. The long-term forecasts generated by the proposed EWNet model for Singapore's dengue cases are competitive with the conventional RWD's epicasts. However, the stochastic ETS model and the machine learning-based ANN framework demonstrate better forecasting ability for this region's medium-term and short-term dengue incidence cases. Additionally, for the crude dengue incidence dataset of the San Juan region, the ARNN, EWNet, and LSTM models generate better out-of-sample predictions with 13-weeks, 26-weeks, and 52-week lead times, respectively. For the Bangkok region's long-term and medium-term dengue forecasts, we observe that the ETS model and RW model of statistical paradigm outperform all the forecasters, respectively. However, the performance of these forecasters lags behind the proposed EWNet model in generating a 13-weeks ahead forecast. Furthermore, the hyperplane-based SVR model generates the best medium-term forecasts for Colombia's dengue and malaria incidence cases. However, for the short-term forecast, although the SVR model can maintain its performance superiority in dengue incidence cases, the proposed EWNet framework significantly improves the forecast accuracy for malaria cases. On the other hand, for the 52-weeks ahead forecast of the Colombia region, the proposed EWNet model generates the best dengue forecast, and the traditional RWD model provides the same for the malaria counterpart. For the Venezuela region, statistical BSTS and WARIMA models, data-driven ARNN and Deep AR methods, and the proposed EWNet framework generate competitive forecasts for malaria incidence. Furthermore, in the case of hepatitis B cases in China, the proposed EWNet model and the SVR model generate competitive long-term forecasts. However, for medium-term and short-term forecasting, the MODWT-based WARIMA and EWNet framework transcends all other epicasters, respectively.

From the above experimental evaluations, it is identifiable that the epicasting performance of the advanced models, like SVR, ANN, ARNN, LSTM, Transformer, Deep AR, NBeats, and TCN, drastically drops for long-term forecasting compared with the proposed EWNet model for the majority of the infectious disease datasets. This phenomenon occurs primarily due to the lack of a humongous amount of historical data in most datasets, which can also be seen in several recent studies \cite{godahewa2021monash, chakraborty2020nowcasting, petropoulos2022forecasting}. Moreover, we can observe that the proposed EWNet framework outperforms the benchmark forecasters in the epicasting tasks, on average. This is primarily due to the non-stationary and nonlinear characteristics of the real-world epidemic datasets, as evident in Table \ref{Global_cht}. The wavelets coupled with ARNN in an ensemble framework (as done in the EWNet architecture) capture the non-stationary and seasonality of the time series using the wavelet decomposition, whereas the ARNN is responsible for handling nonlinear behavior. Additionally, since the epidemic datasets exhibit long-range dependency (as in Table \ref{Global_cht}), the ARNN framework present in the forecasting stage of the EWNet model can generate more reliable long-term forecasts \cite{leoni2009long}. It is also important to note that, despite the rapid surge of different attention-based models in epidemic forecasting \cite{wu2020deep, sasal2022w}, the performance of the multi-head Transformers model is significantly worse than the majority of the forecasters. This is because although Transformers can accurately extract semantic relations among the elements in a long sequence, in a time series modeling for extracting temporal correlations in an ordered sequence, the model employs positional encoding and tokenizes the dataset into several sub-series. This nature of the permutation-invariant self-attention mechanism eventually leads to the loss of temporal information resulting in imprecise forecast \cite{zeng2022transformers}. Moreover, unlike the proposed EWNet framework, the wavelet-based deep learners W-Transformers and W-NBeats lack the desired theoretical basis that restricts the model from showing ‘explosive’ behavior or growing variance over time, hence they fail to generate reliable forecasting results as compared to the proposed framework. Another potential cause for their failure is the small-data problems of epidemic datasets. Most deep learning methods are highly suitable for high-frequency (e.g., daily or hourly) datasets. However, high-frequency epidemiological datasets with many observations are seldom available, hence the applicability of these models is limited, especially in the epicasting domain.

Along with point estimates of the future epidemic cases, we also showcase the probabilistic band of the forecasts (for the test data). It is crucial in many applications, as they enable optimal decision-making under various forms of uncertainty in contrast to point forecasts. There are two widely used approaches for quantifying the uncertainty in machine learning-based forecasts: confidence intervals (CI) and conformal predictions (CP). The former is useful for quantifying the certainty of an estimate, whereas the latter is used to create prediction intervals – the confidence around a given prediction to capture the uncertainty of the model prediction \cite{vovk2005conformal}. We employ both approaches within our framework and obtain probabilistic bands over the point estimates for the test period of the epidemic datasets. For deriving the confidence intervals, we follow a simple pre-control limits approach \cite{montgomery2020introduction} and obtain more than 85\% confidence intervals. In formulating the EWNet framework (as in Eqn. (\ref{eq2})), we assume $\epsilon_t$ as a sequence of i.i.d. random shocks. Therefore, under the assumptions of normality, we use the formula for obtaining the probabilistic bands as upper pre-control limits (UPCL) $ = \text{ mean } + 1.5 \times \text{ sigma}$ and lower pre-control limits (LPCL) $ = \text{ mean } - 1.5 \times \text{ sigma}$. Under this assumption, we expect 86\% of the test data to lie within the probabilistic bands. However, the results may violate when the Gaussian assumption is not met (as seen in a few data examples in Figs. \ref{WARNN_Fitting_1}  and \ref{WARNN_Fitting_2}). There are other ways to obtain the confidence intervals explored in \cite{panja2022interpretable} (using simulations via Monte Carlo or bootstrapping) and in \cite{salinas2020deepar} (using expectations of loss function under the forecast distribution). The primary drawback of these computationally expensive algorithms is that their prediction intervals increase exponentially for long-range forecasting. However, these approaches are discarded since epidemic forecasts have real-time usage and cannot be computationally expensive. In Figs. \ref{WARNN_Fitting_1}  and \ref{WARNN_Fitting_2}, we present the probabilistic band obtained using mean $\pm 1.5$ sigma for short, medium, and long-term forecasts on all the epidemic datasets. Furthermore, we find the conformal predictions to associate reliable estimates of uncertainty quantification. Conformal prediction converts point estimates to a prediction region in a distribution-free and model-agnostic way that guarantees convergence \cite{vovk2017nonparametric}. We use the ``caretForecast" package in R to obtain conformal prediction intervals which are built by studying the distribution of the residuals. Since data and modeling uncertainties are considered for the validation data, conformal prediction generates trustable prediction intervals, as depicted in Figs. \ref{WARNN_Fitting_Conformal_1}  and \ref{WARNN_Fitting_Conformal_2}.
\begin{remark}
    Below we provide an in-depth analysis of the probabilistic bands in Figs. \ref{WARNN_Fitting_1} - \ref{WARNN_Fitting_Conformal_2} using pre-control limits and conformal prediction approaches:
    \begin{itemize}
        \item The medium and long-term prediction intervals of the EWNet framework (as in Figs. \ref{WARNN_Fitting_1} and \ref{WARNN_Fitting_2}) for Ahmedabad and Iquitos dengue datasets demonstrate that our proposal underestimates the crude incidence cases for these regions for a few weeks. One plausible reason for this could be the changing climatic patterns, including natural calamities, weather changes, and global warming, which eventually lead to a rise in precipitation, resulting in a sudden dengue epidemic outbreak. 
        \item For constructing the probabilistic band of the EWNet framework using the confidence interval approach, we assumed that the random shocks $\epsilon_t$ (refer to Eqn. (\ref{eq2})) follow a Gaussian distribution. However, this assumption is not met for some epidemic datasets, e.g., dengue cases of San Juan, Singapore, and Venezuela regions, and thus our results (including CIs) are violated. Hence, to overcome this drawback, we have also generated the conformal predictions following the model-agnostic  approach (Figs. \ref{WARNN_Fitting_Conformal_1}, \ref{WARNN_Fitting_Conformal_2}), which generates trustable prediction intervals using the distribution of the residuals.
        \item Moreover, it is frequently observed that the exposure of a population to any epidemic outbreaks develops herd immunity resulting in a decrease in the crude incidence cases as seen in the Colombia dengue and Japan influenza datasets. Traditional compartmental models (e.g., SIR) in epidemiology literature consider the population susceptibility cycles in their model formulation using certain pre-specified constraints; however, our proposed EWNet framework is unable to generalize this phenomenon owing to its pure data-driven approach. Although, regarding real-time forecasts and decision-making, accurate and reliable forecasts generated by EWNet for most datasets significantly enrich the epicasting benchmarks.
        \item For the malaria forecasting task of Colombia and Venezuela regions, we notice that the corresponding incidence datasets demonstrate certain anomalies (outliers and high peaks). These sudden changes in the level of infection are due to several factors, including but not limited to the impact of policy changes, environmental hazards, population behavior, and human settlements. These anomalous observations in the time series significantly deteriorate the forecasters' performance, including our proposed EWNet framework. 
        \item Thus, we recommend that practitioners and health officials consider the factors listed above while utilizing our EWNet framework for planning and decision-making in public health. Moreover, EWNet can easily adapt and improve during its usage when new test samples are available. This makes the proposed forecasting framework useful and reliable from the practitioner’s perspective.
        \end{itemize}
        \end{remark}

\begin{landscape}
\begin{table*}
\tiny
    \centering
    \caption{Long-term forecasting performance of the proposed EWNet model in comparison to the statistical, machine learning, and deep learning forecasting techniques (best results are \underline{\textbf{highlighted}})}
    \begin{tabular}{p{0.06\textwidth}p{0.042\textwidth}p{0.028\textwidth}p{0.028\textwidth}p{0.04\textwidth}p{0.028\textwidth}p{0.03\textwidth}p{0.052\textwidth}p{0.038\textwidth}p{0.038\textwidth}p{0.028\textwidth}p{0.035\textwidth}p{0.028\textwidth}p{0.032\textwidth}p{0.028\textwidth}p{0.038\textwidth}p{0.038\textwidth}p{0.03\textwidth}p{0.035\textwidth}p{0.03\textwidth}p{0.025\textwidth}p{0.042\textwidth}p{0.04\textwidth}p{0.06\textwidth}p{0.05\textwidth}} \hline

    Data & Metrics & {RW} & {RWD} & ARIMA & ETS & Theta & WARIMA & SETAR & TBATS & BSTS & Hybrid 1 & ANN & ARNN & {SVR} & Hybrid 2 & Hybrid 3 & {LSTM} & NBeats & Deep AR & TCN & Transfo- rmers & {W NBeats} & {W-Trans former} & \textcolor{blue}{Proposed} \\
    & & \cite{pearson1905problem} & \cite{entorf1997random} & \cite{box2015time} & \cite{hyndman2008forecasting} & \cite{assimakopoulos2000theta} & \cite{aminghafari2007forecasting} & \cite{tong2009threshold} & \cite{de2011forecasting} & \cite{scott2014predicting} & \cite{chakraborty2020real} & \cite{rumelhart1986learning} & \cite{faraway1998time} & \cite{smola2004tutorial} & \cite{zhang2003time} & \cite{chakraborty2019forecasting} & \cite{hochreiter1997long} & \cite{oreshkin2019n} & \cite{salinas2020deepar} & \cite{chen2020probabilistic} & \cite{wu2020deep} & \cite{singhal2022fusion} & \cite{sasal2022w} & \textcolor{blue}{EWNet} \\ \hline
    Australia & \textit{RMSE} & {82.08} & {81.74} & 81.58 & 84.84 & 83.13 & 58.55 & 61.28 & 64.97 & 175.8 & 81.71 & 76.00 & 69.65 & {90.59} & 81.60 & 78.74 & {85.17} & 58.76 & 81.66 & 840.4 & 94.95 & {105.18} & {103.4} & \underline{\textbf{49.41}} \\ 
    Influenza & \textit{MASE} & {3.350} & {3.331} & 3.314 & 3.573 & 3.451 & 2.667 & 2.683 & 2.811 & 8.908 & 3.354 & 2.995 & 2.689 & {4.057} & 3.314 & 3.164 & {3.568} & 2.836 & 3.440 & 33.72 & 4.435 & {5.556} & {5.613} & \underline{\textbf{2.330}} \\
    & \textit{MAE} & {53.21} & {52.90} & 52.63 & 56.75 & 54.80 & 42.35 & 42.62 & 44.65 & 141.4 & 53.27 & 145.7 & 171.6 & {64.44} & 148.9 & 191.2 & {56.67} & 159.2 & 	81.90 &	172.8 & 174.1 &	{88.25} & {89.15} & \underline{\textbf{37.00}} \\
    & \textit{sMAPE} & {90.82} &	{89.71} & 88.40 & 108.3 & 98.95 & 66.92 & 61.56 & 69.64 & 187.2 &	93.00 &	88.40 &	108.3 & {164.9} & 93.03 & 187.2 & {115.4} &	69.64 &	61.56 &	98.95 &	66.92 &	{120.1} & {122.7} & \underline{\textbf{58.65}} \\ \hline

    Japan &	\textit{RMSE} & {254.4} & {263.5} & 164.7 & 186.1 & 187.3 & 196.6 & 297.3 & 174.8 & 205.7 & 167.5 & 191.9 & 202.7 & {189.3} & 161.6 & 163.6 & {171.0} & \underline{\textbf{148.7}} & 179.6 & 211.3 & 157.4 & {276.9} & {289.9} & 156.5 \\
    Influenza & \textit{MASE} & {5.521} & {5.709} & 3.354 & 3.950 & 3.978 & 4.008 & 6.488 & 3.665 & 4.402 & 3.428 & 4.145 & 3.174 & {2.366} & 3.224 & 3.309 & {2.638} & \underline{\textbf{2.270}} & 3.737 & 3.137 & 2.945 & {5.398} & {5.876} & 2.798 \\
    & \textit{MAE} & {239.9} & {248.1} & 145.7 & 171.6 & 172.8 & 174.1  & 281.9 & 159.2 & 191.2 & 148.9 & 180.1 & 137.9 & {102.8} & 140.1 & 143.8 & {114.6} &	\underline{\textbf{98.64}} &	162.4 &	136.3 &	127.9 &	{234.6} & {255.3} & 121.5 \\
    & \textit{sMAPE} & {138.7} & {139.1} &	130.8 &	134.9 &	135.1 &	136.7 &	142.3 &	132.8 &	136.3 &	131.5 & 133.2 & \underline{\textbf{107.0}} &	{112.7} & 130.0 & 130.5 & {130.1} & 121.9 & 	134.2 &	162.6 &	129.3 &	{159.7} & {164.1} & 128.6 \\ \hline

    Mexico & \textit{RMSE} & {135.6}	&	{138.0}	&	67.82	&	153.7	&	89.59	&	113.6	&	723.5	&	93.40	&	206.8	&	66.60	&	275.8	&	1355	&	{65.03}	&	68.21	&	197.5	&	{52.63}	&	216.8	&	197.0	&	1212	&	568.9	&	{91.93}	&	{97.61}	&	\underline{\textbf{38.37}} \\ 
    Influenza	&	\textit{MASE} 	&	{8.345}	&	{8.488}	&	4.279	&	9.445	&	5.595	&	6.988	&	46.31	&	5.814	&	12.52	&	4.206	&	16.01	&	72.58	&	{2.447}	&	4.305	&	8.990	&	{2.684}	&	11.67	&	12.45	&	67.94	&	37.36	&	{5.475}	&	{5.666}	&	\underline{\textbf{1.945}} \\
    &	\textit{MAE} & 	{126.5}	&	{128.6}	&	64.86	&	143.1	&	84.81	&	105.9	&	701.8	&	88.11	&	189.8	&	63.74	&	242.7	&	1100	&	{37.09} &	65.25	&	136.2	&	{40.68}	&	176.8	&	188.8	&	1029	&	566.3	&	{82.98}	&	{85.88}	&	\underline{\textbf{29.47}} \\
    & \textit{sMAPE} & {138.9}	&	{139.2}	&	122.3	&	141.5	&	129.4	&	135.6	&	175.7	&	130.1 &	146.2	&	121.6	&	158.9	&	178.0	&	{120.4}	&	122.5	&	129.6	&	{\underline{\textbf{104.8}}}	&	140.0	&	150.2	&	196.9	&	177.1	&	{129.9}	&	{130.3}	&	108.4 \\ \hline

    Ahmedabad  &	\textit{RMSE} 	&	{15.24}	&	{14.98}	&	14.51	&	14.21	&	14.01	&	11.39	&	12.14	&	14.21	&	30.95	&	14.16	&	13.14	&	15.02	&	{15.59} 	&	14.43	&	14.07	&	{15.19}	&	16.29	&	14.54	&	584.2	&	13.02	&	{22.06}	&	{22.76}	&	\underline{\textbf{8.427}}	\\
    Dengue	&	\textit{MASE} 	&	{1.902}	&	{1.887}	&	1.953	&	1.987	&	1.975	&	1.489	&	1.979	&	1.987	&	4.497	&	1.769	&	1.949	&	1.814	&	{1.859}	&	1.959	&	1.996	&	{1.754}	&	2.024	&	1.747	&	82.37	&	2.205	&	{3.812}	&	{3.927}	&	\underline{\textbf{1.084}}	\\
    &	\textit{MAE} 	&	{9.885}	&	{9.802}	&	10.14	&	10.32	&	10.26	&	7.740	&	10.28	&	10.32	&	23.36	&	9.194	&	10.13 &	9.430	&	{9.659}	&	10.18	&	10.37	&	{9.117}	&	10.51	&	9.077	&	428.0	&	11.45	&	{19.81}	&	{20.41}	&	\underline{\textbf{5.633}} \\
    &	\textit{sMAPE} 	&	{99.07}	&	{98.52}	&	102.8	&	103.7	&	103.0	&	96.99	&	99.34	&	103.7	&	179.9	&	93.99	&	101.1	&	89.18	&	{93.27}	&	103.0	&	103.6	&	{85.55}	&	104.9	&	88.97	&	191.1	&	104.3	&	{142.3}	&	{151.3}	&	\underline{\textbf{69.69}}	 \\ \hline 
    
    Bangkok &	\textit{RMSE} 	&	{596.8}	&	{584.1}	&	493.0	&	424.6	&	473.9	&	779.4	&	\underline{\textbf{415.4}}	&	436.6 &	637.8	&	529.4	&	462.4	&	518.0	&	{480.7}	&	478.8	&	479.8	&	{884.9}	&	889.7	&	892.6	&	866.3	&	893.1	&	{1203}	&	{700.6}	&	426.7 \\	
    Dengue	&	\textit{MASE} 	&	{2.091}	&	{2.055}	&	1.938	&	\underline{\textbf{1.590}}	&	1.872	&	3.262	&	1.648	&	1.734	&	2.433	&	2.123	&	1.820	&	2.004	&	{1.906}	&	1.875	&	1.861	&	{3.303}	&	3.333	&	3.331	&	3.197	&	3.344	&	{4.314}	&	{2.646}	&	1.703 \\ 	
    &	\textit{MAE} 	&	{476.8}	&	{468.6}	&	441.8 &	\underline{\textbf{362.4}}	&	426.7	&	743.6	&	375.6	&	395.2	&	554.6	&	484.0	&	415.1	&	457.0	&	{434.6} &	427.6	&	424.3	&	{753.0}	&	760.1	&	759.5	&	729.0	&	762.5	&	{983.6}	&	{603.3}	&	388.2 \\	
    &	\textit{sMAPE} 	&	{70.44}	&	{68.95}	&	65.43	&	\underline{\textbf{53.95}}	&	63.09	&	76.92	&	56.23	&	58.73	&	86.00	&	71.33	&	61.56	&	67.98	&	{64.17}	&	63.20	&	62.67	&	{187.5}	&	194.3	&	190.5	&	171.2	&	195.1	&	{84.15}	&	{90.82}	&	57.24 \\ \hline
    
    Colombia &	\textit{RMSE} 	&	{1160}	&	{1278}	&	998.3	&	997.2	&	1036	&	863.0 &	\underline{\textbf{707.2}}	&	997.4	&	1576	&	1001	&	799.3	&	811.3	&	{1334}	&	994.7	&	997.6	&	{2125}	&	1107	&	2016	&	3317	&	732.1	&	{1629}	&	{1951}	&	730.9 \\	
    Dengue	&	\textit{MASE} 	&	{5.565}	&	{6.150}	&	4.864	&	4.861	&	5.023	&	4.463	&	3.880	&	4.859	&	7.598	&	4.849	&	4.118	&	3.930	&	{6.699}	&	4.852	&	4.862	&	{11.88}	&	5.621	&	11.16	&	16.99	&	5.495	&	{9.002}	&	{9.534}	&	\underline{\textbf{3.611}}	\\
    &	\textit{MAE} 	&	{918.2}	&	{1015}	&	802.6	&	802.1	&	828.8	&	736.3	&	640.1	&	801.7 &	1253	&	800.1	&	679.5	&	648.4	&	{1105}	&	800.6	&	802.2	&	{1961}	&	927.4 &	1842	&	2803	&	615.4	&	{1485}	&	{1573}	&	\underline{\textbf{595.8}} \\	
    &	\textit{sMAPE} 	&	{44.52}	&	{47.31}	&	40.99	&	40.97	&	41.82	&	38.91	&	35.14	&	40.96	&	53.44	&	40.87	&	36.69	&	35.48	&	{66.63}	&	40.93	&	40.97	&	{191.3}	&	45.54	&	165.6	&	145.8	&	49.50	&	{70.93}	&	{114.5}	&	\underline{\textbf{33.23}} \\ \hline
    
    Hong &	\textit{RMSE} 	&	{3.786}	&	{3.814}	&	4.219	&	4.136	&	4.278	&	3.261	&	4.599	&	3.992	&	4.187	&	4.362	&	4.014	&	4.877	&	{4.075}	&	4.553	&	4.076	&	{3.729}	&	4.599	&	6.617	&	57.82	&	4.747	&	{5.734}	&	{4.812}	&	\underline{\textbf{3.050}} \\
    Kong	&	\textit{MASE} 	&	{0.786}	&	{0.803}	&	0.914	&	0.904	&	0.948	&	0.739	&	0.911	&	0.853	&	0.932	&	0.989	&	0.823	&	1.082 &	{0.789}	&	1.010	&	0.884	&	{0.784}	&	0.918	&	1.437	&	11.38	&	0.955	&	{1.229}	&	{1.057}	&	\underline{\textbf{0.635}} \\	
    Dengue	&	\textit{MAE} 	&	{3.000}	&	{3.065}	&	3.491	&	3.452	&	3.619	&	2.823	&	3.478	&	3.256	&	3.560	&	3.778	&	3.144	&	4.131	&	{3.012}	&	3.859	&	3.377	&	{2.993}	&	3.508	&	5.487	&	43.48	&	3.648	&	{4.694}	&	{4.034}	&	\underline{\textbf{2.423}}	\\
    &	\textit{sMAPE} 	&	{36.11}	&	{36.76}	&	40.51	&	40.29	&	41.74	&	33.93	&	42.99	&	38.52	&	41.27	&	43.12	&	38.56	&	43.61	&	{36.16}	&	43.64	&	39.63	&	{36.19}	&	39.60	&	63.59	&	158.9	&	45.41	&	{49.18}	&	{47.68}	&	\underline{\textbf{30.87}} \\ \hline	

    Iquitos &	\textit{RMSE} 	&	{14.19}	&	{14.35}	&	10.12	&	14.19	&	14.24	&	12.97	&	10.35	&	10.08	&	17.79	&	10.50	&	12.05	&	11.36	&	{12.42}	&	10.19	&	10.24	&	{12.71}	&	13.25	&	12.49	&	120.9	&	13.67	&	{20.52}	&	{20.21}	&	\underline{\textbf{8.181}} \\
    Dengue	&	\textit{MASE} 	&	{2.213}	&	{2.258}	&	1.799	&	2.213	&	2.227	&	1.983	&	1.662	&	1.813	&	3.111	&	1.733	&	1.863	&	1.785	&	{1.917}	&	1.779	&	1.774	&	{1.944}	&	2.077	&	1.926	&	18.02	&	2.116	&	{3.253}	&	{3.212}	&	\underline{\textbf{1.559}} \\
    &	\textit{MAE} 	&	{9.635}	&	{9.829}	&	7.832	&	9.635	&	9.696	&	8.633	&	7.234	&	7.892	&	13.54	&	7.543	&	8.109	&	7.772	&	{8.345}	&	7.744	&	7.724	&	{8.461}	&	9.042	&	8.384	&	78.45	&	9.213	&	{14.16}	&	{13.98}	&	\underline{\textbf{6.784}}	\\
    &	\textit{sMAPE} 	&	{198.0}	&	{200.0}	&	105.2	&	199.9	&	200.0	&	137.7	&	102.6	&	105.3	&	198.8	&	105.4	&	118.9	&	111.1	&	{124.7}	&	105.1	&	105.2	&	{129.7}	&	113.1	&	125.6	&	169.2	&	159.7	&	{173.1}	&	{160.8}	&	\underline{\textbf{96.44}} \\ \hline	

    Philippines	&	\textit{RMSE} 	&	{43.26}	&	{45.29}	&	39.86	&	40.21	&	43.88	&	34.25	&	95.08	&	38.64	&	45.34	&	30.89	&	88.07	&	61.26	&	{37.55}	&	42.71	&	33.42	&	{56.88}	&	18.63	&	43.01	&	171.2	&	56.05	&	{63.03}	&	{47.39}	&	\underline{\textbf{15.01}} \\
    Dengue	&	\textit{MASE} 	&	{1.088}	&	{1.151}	&	1.011	&	0.979	&	1.109	&	0.713	&	2.364	&	0.777	&	0.973	&	0.647	&	2.331	&	1.251	&	{0.822}	&	1.098	&	0.841	&	{1.226}	&	0.431	&	0.831	&	4.348	&	1.205	&	{1.598}	&	{0.983} 	&	\underline{\textbf{0.306}} \\
    &	\textit{MAE} 	&	{37.84}	&	{40.06}	&	35.18	&	34.07	&	38.58	&	24.82	&	82.26	&	27.06	&	33.87	&	22.51	&	81.11	&	43.53	&	{28.60}	&	38.23	&	29.27	&	{42.65}	&	14.99	&	28.92	&	151.2	&	41.94	&	{55.59}	&	{34.18}	&	\underline{\textbf{10.67}} \\ 	
    &	\textit{sMAPE} 	&	{66.62}	&	{68.96}	&	64.13	&	62.57	&	67.60	&	45.88	&	95.34	&	53.25	&	80.50	&	45.02	&	96.70	&	70.90	&	{55.01}	&	67.12	&	79.63	&	{110.1}	&	31.18	&	57.88	&	200.0	&	106.8	&	{82.88}	&	{73.56}	&	\underline{\textbf{25.91}} \\ \hline
    
    San  &	\textit{RMSE} 	&	{115.9}	&	{115.9}	&	100.1	&	105.7	&	108.2	&	91.12	&	103.5	&	108.2	&	108.8	&	93.08	&	236.8	&	100.4	&	{112.7}	&	142.9	&	91.66	&	{\underline{\textbf{74.38}}}	&	104.4	&	112.5	&	426.8	&	121.4	&	{153.2}	&	{149.4}	&	99.69 \\	
    Juan	&	\textit{MASE} 	&	{5.843}	&	{5.837}	&	4.758	&	5.082	&	5.257	&	4.230	&	4.949	&	5.234	&	5.317	&	4.258	&	14.34	&	4.776	&	{5.589}	&	7.765	&	4.170	&	{\underline{\textbf{3.359}}}	&	4.900	&	5.571	&	19.19	&	6.207	&	{7.320}	&	{7.217}	&	4.722 \\
    Dengue &	\textit{MAE} 	&	{93.59}	&	{93.50}	&	78.21	&	81.41	&	84.22	&	67.75	&	79.27	&	83.84	&	85.18	&	68.21	&	229.8	&	76.52	&	{89.53}	&	124.4	&	66.80	&	{\underline{\textbf{53.80}}}	&	78.51	&	89.24	&	307.5	&	99.43	&	{117.3}	&	{115.6}	&	75.64 \\	
    &	\textit{sMAPE} 	&	{152.3}	&	{151.9}	&	97.07	&	107.6	&	116.1	&	78.12	&	102.5	&	114.4	&	120.2	&	77.05	&	114.8	&	96.31	&	{134.7}	&	95.11	&	74.21	&	{\underline{\textbf{59.47}}}	&	94.36	&	133.4	&	155.2	&	177.9	&	{132.9}	&	{130.5}	&	94.48 \\ \hline
    
    Singapore &	\textit{RMSE} 	&	{133.7}	&	{129.4}	&	130.1	&	122.4	&	128.6	&	193.0	&	136.9	&	129.8	&	159.7	&	130.4	&	162.9	&	142.1	&	{195.9}	&	130.3	&	130.3	&	{224.9}	&	174.9	&	129.5	&	392.1	&	134.1	&	{271.2}	&	{248.1}	&	\underline{\textbf{121.9}} \\
    Dengue	&	\textit{MASE} 	&	{2.348}	&	{\underline{\textbf{2.241}}}	&	2.490	&	2.579	&	2.473	&	3.873	&	2.883	&	2.453	&	3.026	&	2.505	&	4.078	&	3.454	&	{3.878}	&	2.509	&	2.504	&	{4.993}	&	4.016	&	2.411	&	8.256	&	3.359	&	{5.782}	&	{4.741}	&	2.435 \\
    &	\textit{MAE} 	&	{86.54}	&	{\underline{\textbf{82.61}}}	&	91.78	&	95.06	&	91.16	&	142.7	&	106.2	&	90.43	&	111.5	&	92.34	&	150.3	&	127.3	&	{142.9}	&	92.49	&	92.31	&	{184.0} &	148.0	&	88.89	&	304.3	&	123.8	&	{213.1}	&	{174.8}	&	90.46	\\
    &	\textit{sMAPE} 	&	{34.53}	&	{\underline{\textbf{32.84}}}	&	36.31	&	37.69	&	36.11	&	59.64	&	41.18	&	35.82	&	45.54	&	36.51	&	53.82	&	47.75	&	{68.32}	&	36.56	&	36.50	&	{107.8}	&	53.00	&	35.25	&	132.8	&	46.50	&	{72.23}	&	{64.82}	&	35.12	\\ \hline
    
    Venezuela &	\textit{RMSE} 	&	{597.6}	&	{598.3}	&	622.2	&	614.8	&	618.5	&	582.1	&	645.1	&	622.9	&	600.1	&	618.8	&	705.6	&	682.0	&	{835.0}	&	621.5	&	621.9	&	{1470}	&	682.5	&	1355	&	2003	&	874.0	&	{1076}	&	{1235}	&	\underline{\textbf{563.6}} \\
    Dengue	&	\textit{MASE} 	&	{3.182}	&	{3.183}	&	3.270	&	3.386	&	3.266	&	3.061	&	3.329	&	3.272	&	3.263	&	3.258	&	3.569	&	3.471	&	{4.300}	&	3.267	&	3.268	&	{8.580}	&	3.582	&	7.802	&	10.66	&	4.524	&	{5.843}	&	{6.441}	&	\underline{\textbf{2.927}} \\
    &	\textit{MAE} 	&	{507.3}	&	{507.4}	&	521.2	&	539.7	&	520.6	&	488.0	&	530.6	&	521.6	&	520.2	&	519.2	&	569.0	&	553.4	&	{685.4}	&	520.8	&	521.0	&	{1368}	&	571.0	&	1243	&	1700	&	721.2	&	{931.4}	&	{1027}	&	\underline{\textbf{466.6}} \\	
    &	\textit{sMAPE} 	&	{38.68}	&	{38.69}	&	39.85	&	40.60	&	39.80	&	37.08	&	40.64	&	39.88	&	39.53	&	39.69	&	44.24	&	42.76	&	{56.10}	&	39.81	&	39.83	&	{188.4}	&	45.04	&	152.9	&	147.5	&	60.21	&	{75.38}	&	{103.4}	&	\underline{\textbf{35.37}} \\ \hline

    China  &	\textit{RMSE} 	&	{8766}	&	{104$E^2$}	&	9358	&	9249	&	103$E^2$	&	8592	&	100$E^2$	&	9379	&	103$E^2$	&	9701	&	9868	&	101$E^2$	&	{\underline{\textbf{7289}}} &	9348	&	109$E^2$	&	{975$E^2$}	&	113$E^2$ &	975$E^2$	&	599$E^2$ &	975$E^2$	&	{104$E^2$}	&	{975$E^2$} &	8017 \\ 	
    Hepatitis &	\textit{MASE} 	&	{1.133}	&	{1.374}	&	1.221	&	1.205	&	1.350	&	1.137	&	1.340	&	1.224	&	1.368	&	1.264	&	1.289	&	1.293	&	{0.995}	&	1.203	&	1.461	&	{15.57}	&	1.543	&	15.57	&	9.171	&	15.57	&	{1.304}	&	{15.57}	&	\underline{\textbf{0.889}} \\
    B &	\textit{MAE} 	&	{7083}	&	{8588}	&	7635	&	7532	&	8441	&	7109	&	8376	&	7653	&	8551	&	7904	&	8060	&	8088	&	{6218}	&	7524	&	9137	&	{973$E^2$}	&	9648	&	973$E^2$	&	573$E^2$	&	973$E^2$	&	{8154}	&	{973$E^2$}	&	\underline{\textbf{5556}}	\\
    &	\textit{sMAPE} 	&	{7.344}	&	{9.010}	&	7.947	&	7.835	&	8.841	&	7.370	&	8.797	&	7.968	&	8.963	&	8.243	&	8.416	&	8.467	&	{6.415}	&	7.826	&	9.620	&	{199.9}	&	10.20	&	199.9	&	85.32	&	199.9	&	{8.507}	&	{199.9}	&	\underline{\textbf{5.790}}\\ \hline
    
    Colombia &	\textit{RMSE} 	&	{628.9}	&	{\underline{\textbf{626.5}}}	&	810.2	&	798.5	&	804.2	&	804.3	&	692.9	&	802.9	&	779.0	&	812.9	&	838.2	&	919.9	&	{753.6}	&	813.3	&	814.1	&	{1766}	&	861.3	&	1696	&	2510	&	1134	&	{1466}	&	{1566}	&	714.3 \\	
    Malaria	&	\textit{MASE} 	&	{2.969}	&	{\underline{\textbf{2.939}}}	&	3.547	&	3.500	&	3.518	&	3.618	&	3.192	&	3.518	&	3.448	&	3.561	&	3.748	&	4.365	&	{3.325}	&	3.559	&	3.570	&	{9.163}	&	4.049	&	8.750	&	11.40	&	5.410	&	{7.081}	&	{7.188}	&	3.305  \\
    &	\textit{MAE} 	&	{538.7}	&	{\underline{\textbf{533.3}}}	&	643.6	&	635.2	&	638.3	&	656.6	&	579.2	&	638.5	&	625.6	&	646.1	&	680.2	&	792.2	&	{603.4}	&	645.9	&	648.0	&	{1662}	&	734.9	&	1587	&	2070	&	981.8	&	{1285}	&	{1304}	&	599.7 \\ 	
    &	\textit{sMAPE} 	&	{34.04}	&	{\underline{\textbf{33.71}}}	&	41.71	&	41.04	&	41.30	&	42.63	&	36.89	&	41.30	&	40.27	&	41.91	&	45.00	&	56.16	&	{38.59}	&	41.89	&	42.08	&	{193.9}	&	50.82	&	174.5	&	136.4	&	74.94	&	{90.65}	&	{115.4}	&	38.44	\\ \hline
    
    Venezuela &	\textit{RMSE} 	&	{418.4} &	{395.3}	&	250.1	&	271.0	&	261.7	&	\underline{\textbf{193.8}}	&	301.2	&	249.9	&	213.9	&	238.8	&	206.6	&	652.3	&	{745.2}	&	247.1	&	236.6	&	{1448}	&	421.0 &	1325	&	1893	&	827.1	&	{327.7} &	{1018}	&	232.3 \\	
    Malaria	&	\textit{MASE} 	&	{2.188}	&	{2.064}	&	1.279	&	1.387	& 1.336 &	\underline{\textbf{0.912}}	&	1.553	&	1.278	&	1.079	&	1.216	&	1.032	&	3.530	&	{4.242}	&	1.262	&	1.191	&	{8.591}	&	2.123	&	7.839	&	9.884	&	4.774	&	{1.712}	&	{5.799}	&	1.183 \\
    &	\textit{MAE} 	&	{364.3}  &	{343.6}	&	212.8	&	230.9	&	222.4	&	\underline{\textbf{151.8}}	&	258.4	&	212.7	&	179.5	&	202.5	&	171.9	&	587.7	&	{706.2}	&	210.2	&	198.3	&	{1430}	&	353.5	&	1305	&	1645	&	794.9	&	{285.1}	&	{965.5}	&	196.8 \\	
    &	\textit{sMAPE} 	&	{27.19}	&	{25.41}	&	14.95	&	16.29	&	15.65	&	\underline{\textbf{10.77}}	&	18.40	&	14.94	&	12.57	&	14.19	&	12.01	&	49.10	&	{62.37}	&	14.76	&	13.90	&	{192.5}	&	26.45	&	161.3	&	141.7	&	73.45	&	{20.03}	&	{98.22}	&	13.80 \\ \hline	
\label{L-Trad}     
\end{tabular}
\end{table*}
\end{landscape}

\begin{landscape}
\begin{table*}
\tiny
    \centering \caption{Medium-term forecasting performance of the proposed EWNet model in comparison to the statistical, machine learning, and deep learning forecasting techniques (best results are \underline{\textbf{highlighted}})}
    \begin{tabular}{p{0.06\textwidth}p{0.042\textwidth}p{0.028\textwidth}p{0.028\textwidth}p{0.04\textwidth}p{0.028\textwidth}p{0.03\textwidth}p{0.052\textwidth}p{0.038\textwidth}p{0.038\textwidth}p{0.028\textwidth}p{0.035\textwidth}p{0.028\textwidth}p{0.032\textwidth}p{0.028\textwidth}p{0.038\textwidth}p{0.038\textwidth}p{0.03\textwidth}p{0.035\textwidth}p{0.03\textwidth}p{0.025\textwidth}p{0.042\textwidth}p{0.04\textwidth}p{0.06\textwidth}p{0.05\textwidth}} \hline

    Data & Metrics & {RW} & {RWD} & ARIMA & ETS & Theta & WARIMA & SETAR & TBATS & BSTS & Hybrid 1 & ANN & ARNN & {SVR} & Hybrid 2 & Hybrid 3 & {LSTM} & NBeats & Deep AR & TCN & Transfo- rmers & {W NBeats} & {W-Trans former} & \textcolor{blue}{Proposed} \\
    & & \cite{pearson1905problem} & \cite{entorf1997random} & \cite{box2015time} & \cite{hyndman2008forecasting} & \cite{assimakopoulos2000theta} & \cite{aminghafari2007forecasting} & \cite{tong2009threshold} & \cite{de2011forecasting} & \cite{scott2014predicting} & \cite{chakraborty2020real} & \cite{rumelhart1986learning} & \cite{faraway1998time} & \cite{smola2004tutorial} & \cite{zhang2003time} & \cite{chakraborty2019forecasting} & \cite{hochreiter1997long} & \cite{oreshkin2019n} & \cite{salinas2020deepar} & \cite{chen2020probabilistic} & \cite{wu2020deep} & \cite{singhal2022fusion} & \cite{sasal2022w} & \textcolor{blue}{EWNet} \\ \hline
    
    Australia &	\textit{RMSE}	&	{76.80}	&	{76.59}	&	75.03	&	70.53	&	81.39	&	84.58	&	64.60	&	85.66	&	134.2	&	74.51	&	72.65	&	97.31	&	{119.9}	&	74.99	&	74.43	&	{93.98}	&	59.31	&	94.81	&	152.9	&	126.0	&	{138.9}	&	{135.7}	&	\underline{\textbf{39.42}} \\
    Influenza	&	\textit{MASE}	&	{2.612}	&	{2.614}	&	2.575	&	2.485	&	2.729	&	2.812	&	2.268	&	2.735	&	3.859	&	2.559	&	2.537	&	3.087	&	{4.032}	&	2.571	&	2.554	&	{3.009}	&	1.860	&	3.189	&	5.630	&	4.259	&	{4.872}	&	{4.867}	&	\underline{\textbf{1.228}} \\
    &	\textit{MAE}	&	{64.58}	&	{64.62}	&	63.64	&	61.41	&	67.46	&	69.52	&	56.07	&	67.60	&	95.39	&	63.25	&	62.72	&	76.31	&	{99.67}	&	63.55	&	63.15	&	{74.39}	&	45.98	&	78.84	&	139.1	&	105.3	&	{120.4}	&	{120.3}	&	\underline{\textbf{30.36}} \\
    &	\textit{sMAPE}	&	{72.62}	&	{72.55}	&	71.28	&	67.21	&	76.74	&	79.23	&	61.65	&	76.83	&	66.81	&	70.83	&	69.09	&	110.7	&	{166.1}	&	71.16	&	70.68	&	{86.76}	&	48.85	&	93.51	&	178.0	&	180.3	&	{139.8}	&	{133.8}	&	\underline{\textbf{43.91}} \\ \hline
    
    Japan &	\textit{RMSE}	&	{7.301}	&	{7.359}	&	115.9	&	6.839	&	7.076	&	114.8	&	239.3	&	138.1	&	46.95	&	109.6	&	99.47	&	236.2	&	{6.571}	&	18.21	&	112.8	&	{\underline{\textbf{6.321}}}	&	142.8	&	170.7	&	355.0	&	54.64	&	{180.5}	&	{187.1}	&	43.05 \\
    Influenza	&	\textit{MASE}	&	{1.111}	&	{1.125}	&	19.31	&	1.002	&	1.072	&	18.18	&	36.99	&	23.17	&	6.497	&	18.18	&	16.12	&	32.83	&	{0.967}	&	\underline{\textbf{0.894}}	&	18.74	&	{0.898}	&	21.88	&	30.03	&	52.29	&	7.053	&	{31.73}	&	{32.52}	&	6.381 \\
    &	\textit{MAE}	&	{6.308}	&	{6.388}	&	109.7	&	5.693	&	6.089	&	103.2	&	210.1	&	131.6	&	36.90	&	103.3	&	91.57	&	186.5	&	{5.495}	&	15.90	&	106.4	&	{\underline{\textbf{5.099}}}	&	124.3	&	170.5	&	297.0	&	40.06	&	{180.2}	&	{184.7}	&	36.24 \\
    &	\textit{sMAPE}	&	{77.57}	&	{78.08}	&	172.2	&	73.37	&	76.10	&	167.4	&	176.9	&	176.2	&	172.3	&	174.8	&	166.6	&	162.5	&	{71.18}	&	\underline{\textbf{19.21}}	&	170.7	&	{68.29}	&	169.8	&	184.7	&	192.0	&	235.4	&	{185.4}	&	{185.5}	&	135.5 \\ \hline
    
    Mexico &	\textit{RMSE}	&	{\underline{\textbf{3.990}}}	&	{4.537}	&	24.08	&	4.797	&	3.997	&	24.98	&	19.23	&	68.28	&	43.99	&	21.04	&	232.2	&	358.5	&	{6.645}	&	24.34	&	27.28	&	{7.177}	&	509.6	&	233.0	&	239.2	&	644.0	&	{77.71}	&	{80.92}	&	22.72 \\
    Influenza	&	\textit{MASE}	&	{\underline{\textbf{0.607}}}	&	{0.684}	&	4.357	&	0.700	&	0.633	&	4.071	&	2.666	&	12.03	&	7.433	&	3.749	&	38.91	&	41.79	&	{1.032}	&	4.403	&	4.562	&	{1.026}	&	89.58	&	43.80	&	33.73	&	121.0	&	{14.55}	&	{15.16}	&	4.039 \\
    &	\textit{MAE}	&	{\underline{\textbf{3.231}}}	&	{3.642}	&	23.18	&	3.720	&	3.368	&	21.65	&	14.18	&	63.99	&	39.54	&	19.94	&	207.0	&	222.3	&	{5.489}	&	23.42	&	24.26	&	{5.459}	&	476.5	&	233.0	&	179.4	&	644.0	&	{77.39}	&	{80.64}	&	21.48 \\
    &	\textit{sMAPE}	&	{\underline{\textbf{38.68}}}	&	{44.18}	&	112.8	&	41.62	&	40.82	&	103.5	&	83.00	&	150.2	&	187.5	&	106.9	&	173.1	&	173.0	&	{75.12}	&	113.2	&	109.6	&	{78.95}	&	191.2	&	185.6	&	160.3	&	194.5	&	{162.4}	&	{163.7}	&	108.1 \\ \hline

    Ahmedabad &	\textit{RMSE}	&	{19.12}	&	{18.94}	&	20.46	&	23.13	&	23.05	&	25.60	&	19.18	&	23.13	&	22.31	&	21.09	&	21.36	&	22.46	&	{21.89}	&	20.48	&	19.06	&	{20.94}	&	16.38	&	21.39	&	38.77	&	20.24	&	{25.33}	&	{26.20}	&	\underline{\textbf{11.74}} \\
    Dengue	&	\textit{MASE}	&	{1.642}	&	{1.626}	&	1.809	&	2.065	&	2.056	&	2.446	&	1.723	&	2.065	&	1.996	&	1.854	&	1.892	&	1.992	&	{1.938}	&	1.811	&	1.673	&	{1.892}	&	1.443	&	2.007	&	3.502	&	1.855	&	{2.329}	&	{2.489}	&	\underline{\textbf{1.023}} \\
    &	\textit{MAE}	&	{14.85}	&	{14.69}	&	16.35	&	18.67	&	18.58	&	22.11	&	15.57	&	18.67	&	18.04	&	16.76	&	17.11	&	18.01	&	{17.52}	&	16.37	&	15.13	&	{17.10}	&	13.05	&	18.14	&	31.66	&	16.77	&	{21.06}	&	{22.50}	&	\underline{\textbf{9.256}} \\
    &	\textit{sMAPE}	&	{86.81}	&	{85.45}	&	103.6	&	130.7	&	129.2	&	164.0	&	95.69	&	130.7	&	123.6	&	106.4	&	111.5	&	121.5	&	{115.8}	&	103.8	&	91.30	&	{109.9}	&	82.47	&	112.9	&	146.4	&	106.0	&	{118.3}	&	{139.4}	&	\underline{\textbf{54.16}} \\ \hline 

    Bangkok &	\textit{RMSE}	&	{\underline{\textbf{200.2}}}	&	{214.8}	&	553.0	&	478.3	&	589.9	&	383.0	&	388.8	&	253.0	&	678.5	&	518.6	&	392.3	&	657.7	&	{591.6}	&	429.0	&	533.9	&	{1196}	&	1197	&	1200	&	1214	&	1207	&	{479.1}	&	{1166}	&	316.0 \\
    Dengue	&	\textit{MASE}	&	{\underline{\textbf{0.616}}}	&	{0.648}	&	2.008	&	1.629	&	2.157	&	1.345	&	1.312	&	0.784	&	2.543	&	1.892	&	1.327	&	2.447	&	{2.167}	&	1.519	&	1.886	&	{4.576}	&	4.578	&	4.588	&	4.631	&	4.615	&	{1.660}	&	{4.457}	&	1.006 \\
    &	\textit{MAE}	&	{\underline{\textbf{159.2}}}	&	{167.3}	&	518.5	&	420.6	&	556.8	&	347.3	&	338.8	&	202.3	&	656.5	&	488.4	&	342.8	&	632.0	&	{559.5}	&	392.4	&	487.1	&	{1181}	&	1182	&	1184	&	1195	&	1191	&	{428.6}	&	{1150}	&	259.9 \\
    &	\textit{sMAPE}	&	{\underline{\textbf{13.59}}}	&	{14.21}	&	53.63	&	41.11	&	58.93	&	32.44	&	31.51	&	18.13	&	74.30	&	49.81	&	32.08	&	70.22	&	{59.33}	&	37.97	&	49.59	&	{194.5}	&	195.0	&	195.6	&	195.4	&	198.0	&	{35.34}	&	{184.8}	&	22.54 \\ \hline
    
    Colombia &	\textit{RMSE}	&	{975.4}	&	{1017}	&	949.8	&	917.2	&	941.8	&	840.5	&	809.2	&	906.0	&	801.4	&	934.4	&	1092	&	1010	&	{\underline{\textbf{411.4}}}	&	952.1	&	947.5	&	{1285}	&	917.9	&	1153	&	3465	&	630.1	&	{1847}	&	{701.9}	&	417.8 \\
    Dengue &	\textit{MASE}	&	{9.321}	&	{9.708}	&	9.068	&	8.717	&	8.943	&	8.201	&	7.662	&	8.612	&	7.665	&	8.886	&	10.41	&	9.638	&	{\underline{\textbf{3.141}}}	&	9.067	&	8.991	&	{12.72}	&	8.916	&	11.28	&	33.73	&	5.495	&	{17.79}	&	{5.308}	&	4.010 \\
    &	\textit{MAE}	&	{901.6}	&	{938.9}	&	877.0	&	843.1	&	865.0	&	793.2	&	741.0	&	832.9	&	741.4	&	859.4	&	1007	&	932.2	&	{\underline{\textbf{303.7}}}	&	877.0	&	869.6	&	{1230}	&	862.4	&	1091	&	3262	&	531.4	&	{1720}	&	{513.4}	&	387.8 \\ 
    &	\textit{sMAPE}	&	{55.81}	&	{57.29}	&	54.79	&	53.33	&	54.24	&	51.25	&	48.80	&	52.90	&	48.90	&	54.02	&	59.89	&	57.01	&	{\underline{\textbf{24.77}}}	&	54.76	&	54.41	&	{191.8}	&	54.20	&	150.5	&	174.5	&	49.50	&	{81.95}	&	{47.67}	&	28.50  \\ \hline
    
    Hong &	\textit{RMSE}	&	{2.582}	&	{2.589}	&	3.143	&	2.663	&	2.791	&	3.201	&	2.565	&	2.880	&	2.976	&	3.423	&	3.729	&	3.289	&	{2.635}	&	2.874	&	3.018	&	{2.405}	&	8.370	&	4.719	&	29.17	&	3.879	&	{3.874}	&	{4.124}	&	\underline{\textbf{2.133}} \\
    Kong &	\textit{MASE}	&	{0.667}	&	{0.672}	&	0.885	&	0.733	&	0.766	&	0.841	&	0.741	&	0.821	&	0.838	&	0.958	&	1.038	&	0.872	&	{0.664}	&	0.815	&	0.850	&	{0.652}	&	2.252	&	1.350	&	8.055	&	1.045	&	{1.022}	&	{1.259}	&	\underline{\textbf{0.578}} \\
    Dengue	&	\textit{MAE}	&	{2.000}	&	{2.016}	&	2.655	&	2.200	&	2.298	&	2.522	&	2.224	&	2.463	&	2.515	&	2.875	&	3.115	&	2.618	&	{1.991}	&	2.446	&	2.550	&	{1.957}	&	6.756	&	4.052	&	24.16	&	3.135	&	{3.067}	&	{3.776}	&	\underline{\textbf{1.736}} \\
    &	\textit{sMAPE}	&	{23.52}	&	{23.69}	&	29.65	&	25.52	&	26.46	&	28.21	&	25.91	&	27.91	&	28.52	&	31.57	&	34.42	&	26.37	&	{23.27}	&	27.71	&	28.70	&	{23.03}	&	49.40	&	39.88	&	163.6	&	39.64	&	{33.74}	&	{41.86}	&	\underline{\textbf{20.31}} \\ \hline
    
    Iquitos &	\textit{RMSE}	&	{14.06}	&	{14.04}	&	13.12	&	14.93	&	14.95	&	14.98	&	14.20	&	13.17	&	14.68	&	13.50	&	15.43	&	14.79	&	{13.73}	&	13.25	&	13.40	&	{\underline{\textbf{11.71}}}	&	13.87	&	15.09	&	62.83	&	12.28	&	{25.01}	&	{27.55}	&	11.95 \\
    Dengue	&	\textit{MASE}	&	{1.811}	&	{1.809}	&	1.703	&	1.940	&	1.943	&	1.974	&	1.848	&	1.713	&	1.898	&	1.759	&	2.008	&	1.923	&	{1.778}	&	1.730	&	1.739	&	{\underline{\textbf{1.518}}}	&	1.811	&	1.594	&	7.124	&	1.573	&	{3.266}	&	{3.549}	&	1.557 \\ 
    &	\textit{MAE}	&	{11.15}	&	{11.14}	&	10.49	&	11.95	&	11.96	&	12.16	&	11.38	&	10.55	&	11.69	&	10.83	&	12.37	&	11.84	&	{10.95}	&	10.65	&	10.71	&	{\underline{\textbf{9.352}}}	&	11.16	&	12.22	&	43.88	&	9.692	&	{20.12}	&	{21.87}	&	9.594 \\ 
    &	\textit{sMAPE}	&	{102.4}	&	{102.1}	&	91.60	&	115.7	&	116.1	&	122.6	&	103.5	&	92.36	&	112.7	&	98.37	&	124.4	&	115.1	&	{110.7}	&	93.73	&	94.77	&	{101.9}	&	117.6	&	85.52	&	170.3	&	85.55	&	{159.9}	&	{158.5} &	\underline{\textbf{81.96}} \\ \hline
    
    Philippines &	\textit{RMSE}	&	{28.75}	&	{28.51}	&	24.33	&	34.76	&	22.86	&	19.20	&	144.1	&	34.72	&	44.83	&	26.71	&	30.58	&	14.88	&	{23.81}	&	23.49	&	12.42	&	{46.55}	&	\underline{\textbf{5.502}}	&	28.10	&	137.5	&	45.36	&	{51.68}	&	{27.75}	&	6.810 \\
    Dengue &	\textit{MASE}	&	{0.652}	&	{0.641}	&	0.556	&	0.987	&	0.586	&	0.578	&	4.452	&	0.978	&	1.346	&	0.729	&	0.831	&	0.527	&	{0.747}	&	0.635	&	0.338	&	{1.535}	&	\underline{\textbf{0.177}}	&	0.693	&	4.801	&	1.495	&	{1.736}	&	{0.566}	&	0.247 \\
    &	\textit{MAE}	&	{17.18}	&	{16.89}	&	14.65	&	26.01	&	15.46	&	15.24	&	117.2	&	25.78	&	35.48	&	19.22	&	21.89	&	13.88	&	{19.67}	&	16.74	&	8.917	&	{40.44}	&	\underline{\textbf{4.686}}	&	18.27	&	126.4	&	39.39	&	{45.75}	&	{14.92}	&	6.507 \\
    &	\textit{sMAPE}	&	{31.53}	&	{30.75}	&	24.92	&	59.26	&	27.25	&	25.83	&	181.3	&	58.12	&	101.3	&	38.87	&	47.13	&	29.90	&	{35.23}	&	28.92	&	28.20	&	{123.6}	&	\underline{\textbf{9.441}}	&	36.45	&	200.0	&	118.5	&	{68.43}	&	{25.37}	&	14.33 \\ \hline
    
    San  &	\textit{RMSE}	&	{72.19}	&	{73.64}	&	97.26	&	73.07	&	72.33	&	76.13	&	95.39	&	76.34	&	93.55	&	94.87	&	199.9	&	119.7	&	{116.5}	&	138.7	&	96.52	&	{102.6}	&	106.9	&	113.1	&	467.0	&	111.3	&	{193.1}	&	{191.5}	&	\underline{\textbf{64.44}} \\ 
    Juan	&	\textit{MASE}	&	{3.264}	&	{3.305}	&	4.594	&	3.616	&	3.619	&	3.381	&	4.290	&	3.733	&	4.441	&	4.448	&	9.784	&	5.928	&	{5.627}	&	6.754	&	4.552	&	{4.677}	&	5.172	&	5.524	&	19.78	&	5.287	&	{8.903}	&	{8.883}	&	\underline{\textbf{3.242}} \\
    Dengue &	\textit{MAE}	&	{59.27}	&	{60.02}	&	83.42	&	65.66	&	65.71	&	61.39	&	77.90	&	67.79	&	80.65	&	80.78	&	177.6	&	107.6	&	{102.2}	&	122.6	&	82.66	&	{84.93}	&	93.92	&	100.3	&	359.2	&	96.02	&	{161.7}	&	{161.3}	&	\underline{\textbf{58.87}} \\
    &	\textit{sMAPE}	&	{\underline{\textbf{47.03}}}	&	{47.31}	&	79.37	&	53.62	&	53.56	&	49.22	&	65.27	&	55.59	&	58.39	&	75.64	&	82.51	&	134.1	&	{109.8}	&	98.24	&	78.12	&	{73.67}	&	101.2	&	94.91	&	160.6	&	96.86	&	{153.2}	&	{152.8}	&	48.33 \\ \hline
    
    Singapore &	\textit{RMSE}	&	{166.2}	&	{163.0}	&	151.9	&	137.2	&	149.4	&	140.0	&	136.2	&	141.7	&	151.3	&	151.9	&	\underline{\textbf{119.3}}	&	119.4	&	{226.7}	&	151.0	&	151.3	&	{290.1}	&	219.6	&	148.9	&	437.5	&	120.8	&	{321.5}	&	{293.2}	&	141.7 \\
    Dengue	&	\textit{MASE}	&	{2.460}	&	{2.396}	&	2.244	&	\underline{\textbf{2.188}}	&	2.220	&	2.389	&	2.373	&	2.307	&	2.270	&	2.277	&	2.330	&	2.260	&	{4.145}	&	2.237	&	2.244	&	{6.375}	&	3.961	&	2.244	&	10.58	&	2.576	&	{5.044}	&	{4.455}	&	2.251 \\
    &	\textit{MAE}	&	{98.81}	&	{96.20}	&	90.13	&	\underline{\textbf{87.85}}	&	89.13	&	95.94	&	95.28	&	92.63	&	91.17	&	91.44	&	93.60	&	90.79	&	{166.4}	&	89.86	&	90.13	&	{256.0}	&	159.0	&	90.12	&	425.1	&	103.4	&	{202.6}	&	{178.9}	&	90.40 \\
    &	\textit{sMAPE}	&	{28.34}	&	{27.32}	&	25.11	&	\underline{\textbf{24.65}}	&	24.79	&	27.48	&	27.33	&	26.30	&	25.53	&	25.61	&	27.07	&	26.22	&	{61.04}	&	25.03	&	25.12	&	{126.4}	&	58.99	&	25.18	&	184.9	&	29.48	&	{67.44}	&	{64.12}	&	25.48 \\ \hline
    
    Venezuela &	\textit{RMSE}	&	{476.3}	&	{481.6}	&	486.6	&	588.0	&	514.4	&	609.7	&	465.7	&	508.5	&	542.0	&	472.4	&	490.8	&	486.0	&	{524.7}	&	487.3	&	486.8	&	{1109}	&	707.9	&	985.2	&	2341	&	526.8	&	{875.4}	&	{723.8}	&	\underline{\textbf{392.0}} \\
    Dengue &	\textit{MASE}	&	{\underline{\textbf{2.535}}}	&	{2.557}	&	3.178	&	3.880	&	3.385	&	4.058	&	3.003	&	3.346	&	3.438	&	3.057	&	3.210	&	3.175	&	{2.774}	&	3.182	&	3.179	&	{7.500}	&	4.014	&	6.484	&	15.27	&	2.787	&	{5.759}	&	{4.115}	&	2.540 \\
    &	\textit{MAE}	&	{\underline{\textbf{343.2}}}	&	{346.1}	&	430.2	&	525.1	&	458.2	&	549.2	&	406.4	&	452.9	&	465.3	&	413.7	&	434.5	&	429.7	&	{375.4}	&	430.8	&	430.4	&	{1015}	&	543.3	&	877.7	&	2068	&	377.2	&	{779.6}	&	{557.1}	&	343.8 \\
    &	\textit{sMAPE}	&	{\underline{\textbf{32.49}}}	&	{32.78}	&	39.95	&	46.24	&	41.93	&	48.26	&	38.14	&	41.57	&	43.23	&	38.66	&	40.27	&	39.93	&	{36.04}	&	39.99	&	39.96	&	{187.1}	&	56.23	&	137.7	&	157.5	&	36.28	&	{63.34}	&	{61.09}	&	33.27 \\ \hline
    
    China &	\textit{RMSE}	&	{5820}	&	{5304}	&	5154	&	5472	&	5494	&	\underline{\textbf{4542}}	&	6556	&	5546	&	5732	&	5103	&	8104	&	9845	&	{6407}	&	5154	&	5979	&	{985$E^2$}	&	9735	&	985$E^2$	&	316$E^2$	&	985$E^2$	&	{7910}	&	{985$E^2$}	&	5145 \\
    Hepatitis	&	\textit{MASE}	&	{0.871}	&	{0.803}	&	0.943	&	1.043	&	1.058	&	\underline{\textbf{0.745}}	&	1.192	&	1.063	&	0.900	&	0.881	&	1.385	&	1.247	&	{1.265}	&	0.913	&	1.170	&	{20.82}	&	1.886	&	20.81	&	6.488	&	20.81	&	{1.549}	&	{20.82}	&	0.912 \\
    B	&	\textit{MAE}	&	{4118}	&	{3792}	&	4458	&	4928	&	4998	&	\underline{\textbf{3519}}	&	5632	&	5024	&	4254	&	4164	&	6547	&	5893	&	{5976}	&	4310	&	5532	&	{983$E^2$}	&	8913	&	983$E^2$	&	306$E^2$	&	983$E^2$	&	{7320}	&	{983$E^2$}	&	4313 \\
    &	\textit{sMAPE}	&	{4.211}	&	{3.891}	&	4.583	&	5.072	&	5.145	&	\underline{\textbf{3.618}}	&	5.831	&	5.172	&	4.351	&	4.267	&	6.554	&	5.826	&	{6.177}	&	4.431	&	5.706	&	{199.9}	&	9.384	&	199.9	&	37.05	&	199.9	&	{7.622}	&	{199.9}	&	4.459 \\ \hline 
    
    Colombia &	\textit{RMSE}	&	{604.1}	&	{598.7}	&	617.0	&	655.3	&	649.4	&	561.9	&	361.8	&	646.8	&	695.0	&	616.4	&	514.3	&	393.3	&	{\underline{\textbf{329.1}}}	&	614.6	&	614.8	&	{1220}	&	934.4	&	1137	&	3063	&	528.6	&	{1078}	&	{643.5}	&	555.5 \\
    Malaria  &	\textit{MASE}	&	{4.537}	&	{4.496}	&	4.660	&	4.996	&	4.950	&	4.201	&	2.499	&	4.916	&	5.317	&	4.652	&	3.905	&	2.683	&	{\underline{\textbf{2.208}}}	&	4.645	&	4.675	&	{10.31}	&	7.482	&	9.548	&	25.19	&	3.977	&	{8.347}	&	{4.522}	&	3.546 \\
    &	\textit{MAE}	&	{516.0}	&	{511.3}	&	529.9	&	568.1	&	562.9	&	477.7	&	284.2	&	559.0	&	604.6	&	528.9	&	444.1	&	305.1	&	{\underline{\textbf{250.7}}}	&	528.3	&	531.6	&	{1172}	&	850.9	&	1085	&	2865	&	452.3	&	{949.2}	&	{514.2}	&	403.3 \\
    &	\textit{sMAPE}	&	{39.22}	&	{38.97}	&	39.98	&	42.03	&	41.76	&	36.91	&	25.05	&	41.54	&	43.87	&	39.91	&	35.27	&	26.26	&	{\underline{\textbf{22.36}}}	&	39.90	&	40.09	&	{193.8}	&	55.50	&	164.5	&	173.9	&	44.08	&	{58.94}	&	{51.09}	&	31.55 \\ \hline
    
    Venezuela &	\textit{RMSE}	&	{317.6}	&	{303.3}	&	244.5	&	204.6	&	199.7	&	260.5	&	274.1	&	278.4	&	170.3	&	235.3	&	159.9	&	192.7	&	{825.8}	&	238.7	&	226.5	&	{1609}	&	1145	&	\underline{\textbf{70.19}}	&	1361	&	567.9	&	{328.5}	&	{1257}	&	197.8 \\
    Malaria	&	\textit{MASE}	&	{1.653}	&	{1.563}	&	1.217	&	0.992	&	0.970	&	1.328	&	1.398	&	1.425	&	0.839	&	1.159	&	\underline{\textbf{0.795}}	&	0.963	&	{4.769}	&	1.180	&	1.104	&	{9.421}	&	62.79	&	3.294	&	72.59	&	31.04	&	{1.721}	&	{7.328}	&	0.973 \\ 
    &	\textit{MAE}	&	{281.3}	&	{265.9}	&	207.0	&	168.7	&	165.0	&	225.9	&	237.8	&	242.5	&	142.7	&	197.2	&	135.3	&	164.0	&	{811.6}	&	200.8	&	187.9	&	{1603}	&	1140	&	\underline{\textbf{59.82}}	&	1318	&	563.8	&	{292.8}	&	{1246}	&	165.6 \\
    &	\textit{sMAPE}	&	{18.69}	&	{17.55}	&	13.32	&	10.67	&	10.42	&	14.71	&	15.52	&	15.85	&	8.950	&	12.63	&	\underline{\textbf{8.466}}	&	10.15	&	{66.95}	&	12.89	&	11.99	&	{198.5}	&	161.5	&	47.75	&	195.6	&	135.9	&	{19.67}	&	{126.5}	&	10.48 \\ \hline
         
    \end{tabular}     
    \label{M-DL}
\end{table*}
\end{landscape}
\begin{landscape}
\begin{table*}
\tiny
        \centering \caption{Short-term forecasting performance of the proposed EWNet model in comparison to the statistical, machine learning, and deep learning forecasting techniques (best results are \underline{\textbf{highlighted}})}
    \begin{tabular}{p{0.06\textwidth}p{0.042\textwidth}p{0.028\textwidth}p{0.028\textwidth}p{0.04\textwidth}p{0.028\textwidth}p{0.03\textwidth}p{0.052\textwidth}p{0.038\textwidth}p{0.038\textwidth}p{0.028\textwidth}p{0.035\textwidth}p{0.028\textwidth}p{0.032\textwidth}p{0.028\textwidth}p{0.038\textwidth}p{0.038\textwidth}p{0.03\textwidth}p{0.035\textwidth}p{0.03\textwidth}p{0.025\textwidth}p{0.042\textwidth}p{0.04\textwidth}p{0.06\textwidth}p{0.05\textwidth}} \hline

    Data & Metrics & {RW} & {RWD} & ARIMA & ETS & Theta & WARIMA & SETAR & TBATS & BSTS & Hybrid 1 & ANN & ARNN & {SVR} & Hybrid 2 & Hybrid 3 & {LSTM} & NBeats & Deep AR & TCN & Transfo- rmers & {W NBeats} & {W-Trans former} & \textcolor{blue}{Proposed} \\
    & & \cite{pearson1905problem} & \cite{entorf1997random} & \cite{box2015time} & \cite{hyndman2008forecasting} & \cite{assimakopoulos2000theta} & \cite{aminghafari2007forecasting} & \cite{tong2009threshold} & \cite{de2011forecasting} & \cite{scott2014predicting} & \cite{chakraborty2020real} & \cite{rumelhart1986learning} & \cite{faraway1998time} & \cite{smola2004tutorial} & \cite{zhang2003time} & \cite{chakraborty2019forecasting} & \cite{hochreiter1997long} & \cite{oreshkin2019n} & \cite{salinas2020deepar} & \cite{chen2020probabilistic} & \cite{wu2020deep} & \cite{singhal2022fusion} & \cite{sasal2022w} & \textcolor{blue}{EWNet} \\ \hline
    Australia &	\textit{RMSE}	&	{107.0}	&	{108.3}	&	93.19	&	85.92	&	112.3	&	61.60	&	58.30	&	40.53	&	39.79	&	94.67	&	85.69	&	35.02	&	{69.57}	&	93.19	&	92.65	&	{55.24}	&	47.89	&	54.18	&	145.9	&	62.85	&	{92.12}	&	{101.4}	&	\underline{\textbf{29.37}} \\
    Influenza    	&	\textit{MASE}	&	{4.585}	&	{4.636}	&	4.065	&	3.737	&	4.812	&	2.770	&	2.538	&	1.802	&	1.795	&	4.203	&	3.774	&	1.280	&	{2.375}	&	4.075	&	4.099	&	{2.064}	&	1.724	&	2.021	&	6.097	&	2.844	&	{4.306}	&	{4.725}	&	\underline{\textbf{1.070}} \\
    &	\textit{MAE}	&	{92.46}	&	{93.49}	&	81.97	&	75.35	&	97.03	&	55.87	&	51.13	&	36.33	&	36.19	&	84.76	&	76.12	&	25.81	&	{47.91}	&	82.17	&	82.66	&	{41.63}	&	34.77	&	40.75	&	122.9	&	57.35	&	{86.84}	&	{95.28}	&	\underline{\textbf{21.59}} \\ 
    &	\textit{sMAPE}	&	{97.09}	&	{97.39}	&	93.82	&	91.24	&	98.55	&	81.88	&	77.59	&	66.52	&	66.03	&	95.71	&	91.62	&	43.29	&	{71.75}	&	93.96	&	94.47	&	{67.24}	&	63.70	&	83.14	&	149.1	&	84.30	&	{107.2}	&	{110.9}	&	\underline{\textbf{34.58}} \\ \hline
    
    Japan &	\textit{RMSE}	&	{10.56}	&	{10.65}	&	84.52	&	8.950	&	8.803	&	30.09	&	96.69	&	102.6	&	33.09	&	74.54	&	44.88	&	210.1	&	{\underline{\textbf{7.920}}}	&	76.86	&	82.14	&	{8.944}	&	30.49	&	9.505	&	129.0	&	10.55	&	{175.5}	&	{191.0} &	22.28 \\
    Influenza	&	\textit{MASE}	&	{1.034}	&	{1.045}	&	8.564	&	0.658	&	\underline{\textbf{0.640}}	&	2.887	&	7.699	&	10.49	&	3.241	&	7.453	&	4.542	&	17.42	&	{0.755}	&	7.735	&	8.281	&	{0.667}	&	2.656	&	0.695	&	12.36	&	0.843	&	{19.48}	&	{21.16}	&	2.135 \\
    &	\textit{MAE}	&	{9.308}	&	{9.405}	&	77.07	&	5.923	&	\underline{\textbf{5.763}}	&	25.98	&	69.29	&	94.42	&	29.17	&	67.07	&	40.88	&	156.8	&	{6.799}	&	69.62	&	74.53	&	{5.999}	&	23.90	&	6.263	&	111.3	&	7.594	&	{175.3}	&	{190.4}	&	19.22 \\
    &	\textit{sMAPE}	&	{80.89}	&	{81.29}	&	152.8	&	69.59	&	\underline{\textbf{66.65}}	&	117.8	&	135.6	&	159.2	&	194.6	&	153.8	&	134.6	&	156.4	&	{70.17}	&	152.5	&	151.2	&	{73.56}	&	108.8	&	75.99	&	180.8	&	106.0	&	{180.6}	&	{181.9}	&	107.2 \\ \hline
    
    Mexico &	\textit{RMSE}	&	{6.719}	&	{7.375}	&	5.977	&	4.514	&	4.667	&	9.282	&	9.974	&	48.57	&	15.84	&	\underline{\textbf{4.201}}	&	197.7	&	11.64	&	{7.491}	&	5.844	&	10.22	&	{9.164}	&	33.15	&	8.851	&	135.4	&	8.494	&	{71.91}	&	{70.68}	&	10.19 \\ 
    Influenza	&	\textit{MASE}	&	{0.851}	&	{0.935}	&	0.771	&	0.595	&	0.623	&	1.164	&	1.189	&	6.854	&	2.139	&	\underline{\textbf{0.544}}	&	25.00	&	1.503	&	{0.960}	&	0.748	&	1.344	&	{1.209}	&	4.905	&	1.179	&	18.30 &	1.126	&	{11.18}	&	{10.99}	&	1.328 \\
    &	\textit{MAE}	&	{5.462}	&	{6.001}	&	4.946	&	3.821	&	3.998	&	7.468	&	7.630	&	43.97	&	13.72	&	\underline{\textbf{3.490}}	&	160.4	&	9.645	&	{6.161}	&	4.802	&	8.624	&	{7.759}	&	31.47	&	7.567	&	117.4	&	7.226	&	{71.71}	&	{70.57}	&	8.525 \\
    &	\textit{sMAPE}	&	{64.87}	&	{74.09}	&	49.82	&	43.36	&	45.43	&	61.27	&	58.57	&	133.2	&	169.5	&	\underline{\textbf{39.99}}	&	159.7	&	99.38	&	{77.30}	&	48.71	&	76.75	&	{113.5}	&	123.4	&	108.8	&	179.8	&	100.9	&	{157.9}	&	{157.5}	&	117.0 \\ \hline
    
    Ahmedabad &	\textit{RMSE}	&	{23.74}	&	{24.49}	&	29.93	&	32.87	&	26.09	&	15.93	&	13.14	&	21.18	&	58.17	&	30.58	&	29.17	&	6.548	&	{15.73}	&	30.07	&	28.26	&	{13.12}	&	9.711	&	5.961	&	56.49	&	13.82	&	{24.96}	&	{27.34}	&	\underline{\textbf{5.751}} \\ 
    Dengue	&	\textit{MASE}	&	{2.882}	&	{2.972}	&	3.859	&	4.189	&	3.246	&	1.993	&	1.602	&	2.632	&	7.399	&	4.083	&	3.630	&	0.807	&	{1.853}	&	3.868	&	3.531	&	{1.637}	&	1.079	&	0.752	&	6.774	&	1.738	&	{3.204}	&	{3.536}	&	\underline{\textbf{0.748}} \\
    &	\textit{MAE}	&	{19.69}	&	{20.31}	&	26.37	&	28.62	&	22.18	&	13.61	&	10.94	&	17.98	&	50.56	&	27.90	&	24.81	&	5.520	&	{12.66}	&	26.43	&	24.13	&	{11.19}	&	7.378	&	\underline{\textbf{4.865}}	&	46.29	&	11.87	&	{21.89}	&	{24.16}	&	5.111 \\
    &	\textit{sMAPE}	&	{78.97}	&	{79.84}	&	89.59	&	91.72	&	83.17	&	69.35	&	62.26	&	77.25	&	109.1	&	92.79	&	86.43	&	35.82	&	{77.79}	&	89.60	&	85.87	&	{63.96}	&	52.18	&	\underline{\textbf{34.81}}	&	157.6	&	68.72	&	{127.1}	&	{146.8}	&	38.18 \\ \hline
    
    Bangkok &	\textit{RMSE}	&	{206.2}	&	{210.8}	&	205.1	&	279.7	&	209.2	&	215.8	&	232.0	&	351.9	&	201.8	&	271.9	&	327.6	&	330.7	&	{514.6}	&	227.5	&	244.2	&	{1122}	&	1126	&	1129	&	1138	&	1132	&	{407.1}	&	{1092}	&	\underline{\textbf{192.3}} \\
    Dengue	&	\textit{MASE}	&	{0.465}	&	{0.476}	&	0.488	&	0.573	&	0.478	&	0.439	&	0.455	&	0.821	&	0.479	&	0.735	&	0.771	&	0.776	&	{1.315}	&	0.596	&	0.588	&	{3.069}	&	3.080	&	3.091	&	3.112	&	3.096	&	{0.896}	&	{2.984}	&	\underline{\textbf{0.438}} \\
    &	\textit{MAE}	&	{167.0}	&	{171.1}	&	175.3	&	206.1	&	171.6	&	\underline{\textbf{155.6}}	&	163.6	&	295.2	&	172.2	&	264.1	&	277.5	&	279.1	&	{472.8}	&	214.4	&	211.5	&	{1103}	&	1107	&	1111	&	1118	&	1113	&	{322.0}	&	{1072}	&	158.0 \\ 
    &	\textit{sMAPE}	&	{15.25}	&	{15.60}	&	15.99	&	18.26	&	15.65	&	13.77	&	\underline{\textbf{12.78}}	&	29.44	&	15.72	&	23.22	&	28.38	&	27.24	&	{51.51}	&	19.58	&	19.39	&	{194.1}	&	195.5	&	197.1	&	198.3	&	197.7	&	{26.46}	&	{183.4}	&	14.41 \\ \hline
    
    Colombia &	\textit{RMSE}	&	{162.5}	&	{164.7}	&	227.3	&	250.4	&	261.0	&	\underline{\textbf{132.4}}	&	205.5	&	196.3	&	162.5	&	192.2	&	251.2	&	160.7	&	{155.6}	&	227.1	&	228.5	&	{960.9}	&	255.0	&	819.5	&	2981	&	905.4	&	{2215}	&	{156.8}	&	166.0 \\
    Dengue	&	\textit{MASE}	&	{1.006}	&	{1.006}	&	1.754	&	2.095	&	2.217	&	1.005	&	1.519	&	1.387	&	1.576	&	1.178	&	2.113	&	1.541	&	{\underline{\textbf{0.982}}}	&	1.751	&	1.699	&	{10.39}	&	2.540	&	8.819	&	31.74	&	9.780	&	{24.26}	&	{1.205}	&	1.074 \\
    &	\textit{MAE}	&	{91.69}	&	{91.63}	&	159.7	&	190.8	&	201.9	&	91.57	&	138.3	&	126.3	&	143.5	&	107.3	&	192.5	&	140.3	&	{\underline{\textbf{89.42}}}	&	159.5	&	154.7	&	{947.2}	&	231.4	&	803.3	&	2891	&	890.8	&	{2210}	&	{109.7}	&	97.84 \\ 
    &	\textit{sMAPE}	&	{10.81}	&	{10.79}	&	16.84	&	19.50	&	20.41	&	10.67	&	14.98	&	13.90	&	16.67	&	12.15	&	19.64	&	16.18	&	{\underline{\textbf{10.53}}}	&	16.82	&	16.36	&	{190.5}	&	27.28	&	139.4	&	200.0	&	168.7	&	{107.1}	&	{12.91}	&	11.31 \\ \hline
    
    Hong &	\textit{RMSE}	&	{3.366}	&	{3.455}	&	3.503	&	3.142	&	3.342	&	3.743	&	3.158	&	2.978	&	3.357	&	2.813	&	4.850	&	2.654	&	{1.786}	&	3.985	&	3.196	&	{2.480}	&	6.246	&	1.394	&	12.34	&	2.367	&	{5.892}	&	{4.029}	&	\underline{\textbf{1.317}} \\
    Kong &	\textit{MASE}	&	{1.067}	&	{1.101}	&	1.157	&	1.028	&	1.066	&	1.421	&	1.054	&	1.011	&	1.069	&	1.042	&	1.681	&	0.853	&	{0.576}	&	1.349	&	1.100	&	{0.881}	&	2.095	&	0.370	&	4.303	&	0.728	&	{2.019}	&	{1.596}	&	\underline{\textbf{0.315}} \\
    Dengue &	\textit{MAE}	&	{2.667}	&	{2.751}	&	2.891	&	2.570	&	2.666	&	3.553	&	2.636	&	2.528	&	2.673	&	2.605	&	4.204	&	2.133	&	{1.441}	&	3.373	&	2.751	&	{2.202}	&	5.238	&	\underline{\textbf{0.925}}	&	10.75	&	1.820	&	{5.047}	&	{3.991}	&	1.089 \\
    &	\textit{sMAPE}	&	{33.98}	&	{34.77}	&	36.16	&	33.14	&	33.99	&	41.98	&	33.82	&	32.80	&	34.05	&	33.21	&	47.05	&	28.25	&	{19.52}	&	40.04	&	35.02	&	{29.49}	&	54.19	&	\underline{\textbf{9.355}}	&	157.6	&	24.89	&	{53.33}	&	{49.33}	&	18.76 \\ \hline
    
    Iquitos &	\textit{RMSE}	&	{18.03}	&	{18.29}	&	7.039	&	13.90	&	13.89	&	8.382	&	16.47	&	7.046	&	6.635	&	7.360	&	10.41	&	10.68	&	{\underline{\textbf{4.908}}}	&	6.722	&	6.292	&	{10.41}	&	5.915	&	7.379	&	149.2	&	12.19	&	{12.37}	&	{13.11}	&	5.996 \\
    Dengue	&	\textit{MASE}	&	{3.246}	&	{3.292}	&	1.179	&	2.500	&	2.498	&	1.306	&	3.019	&	1.168	&	1.072	&	1.218	&	1.807	&	1.865	&	{\underline{\textbf{0.821}}}	&	1.122	&	1.039	&	{1.820}	&	0.989	&	1.171	&	23.22	&	2.186	&	{1.751}	&	{1.812}	&	0.927 \\
    &	\textit{MAE}	&	{16.77}	&	{17.01}	&	6.088	&	12.91	&	12.90	&	6.747	&	15.60	&	6.033	&	5.540	&	6.293	&	9.340	&	9.639	&	{\underline{\textbf{4.240}}}	&	5.801	&	5.372	&	{9.403}	&	4.794	&	6.053	&	120.0	&	11.29	&	{9.047}	&	{9.359}	&	4.791 \\
    &	\textit{sMAPE}	&	{120.2}	&	{120.6}	&	89.09	&	111.5	&	111.5	&	94.06	&	118.8	&	88.12	&	86.03	&	88.73	&	103.1	&	103.1	&	{101.9}	&	87.69	&	\underline{\textbf{85.61}}	&	{104.8}	&	101.3	&	91.54	&	189.9	&	107.8	&	{118.7}	&	{125.0}	&	114.3 \\ \hline 
    
    Philippines &	\textit{RMSE}	&	{30.22}	&	{30.06}	&	35.66	&	36.39	&	33.33	&	30.02	&	33.03	&	40.83	&	48.40	&	36.22	&	31.63	&	11.53	&	{29.68}	&	33.23	&	18.64	&	{58.29}	&	6.985	&	37.13	&	158.0 &	57.95	&	{45.83}	&	{48.37}	&	\underline{\textbf{3.413}} \\
    Dengue	&	\textit{MASE}	&	{0.374}	&	{0.377}	&	0.416	&	0.440	&	0.415	&	0.374	&	0.489	&	0.522	&	0.691	&	0.424	&	0.426	&	0.161	&	{0.368}	&	0.451	&	0.302	&	{0.893}	&	0.088	&	0.447	&	2.625	&	0.894	&	{0.742}	&	{0.553}	&	\underline{\textbf{0.047}} \\
    &	\textit{MAE}	&	{21.46}	&	{21.63}	&	23.94	&	25.29	&	23.84	&	21.52	&	28.09	&	30.01	&	39.73	&	24.34	&	24.49	&	9.287	&	{21.14}	&	25.91	&	17.39	&	{51.31}	&	5.054	&	25.67	&	150.7	&	51.34	&	{42.60}	&	{31.75}	&	\underline{\textbf{2.715}} \\
    &	\textit{sMAPE}	&	{32.67}	&	{33.02}	&	37.21	&	39.96	&	37.18	&	32.78	&	43.60	&	52.24	&	82.10	&	37.82	&	38.53	&	17.95	&	{32.11}	&	40.85	&	49.33	&	{128.9}	&	\underline{\textbf{6.318}}	&	40.74	&	200.0	&	130.0	&	{67.42}	&	{55.48}	&	6.663 \\ \hline
    
    San  &	\textit{RMSE}	&	{98.52}	&	{99.63}	&	29.28	&	99.80	&	107.6	&	72.30	&	53.57	&	64.20	&	115.2	&	30.86	&	222.1	&	\underline{\textbf{21.57}}	&	{42.91}	&	93.32	&	30.22 &	{41.57}	&	29.11	&	30.03	&	477.7	&	52.69	&	{101.9}	&	{90.73}	&	37.12 \\
    Juan 	&	\textit{MASE}	&	{5.966}	&	{6.032}	&	1.787	&	6.135	&	6.637	&	4.698	&	3.271	&	4.083	&	7.097	&	1.941	&	14.24	&	\underline{\textbf{1.169}}	&	{2.011} &	5.753	&	1.844	&	{2.166}	&	1.754	&	1.897	&	26.18	&	3.129	&	{4.837}	&	{4.500}	&	2.190 \\
    Dengue &	\textit{MAE}	&	{89.00}	&	{89.98}	&	26.65	&	91.50	&	99.00	&	70.07	&	48.79	&	60.91	&	105.8	&	28.95	&	212.5	&	\underline{\textbf{17.44}} &	{29.99}	&	85.83	&	27.51	&	{32.31}	&	26.16	&	28.29	&	390.5	&	46.68	&	{72.16}	&	{67.13}	&	32.67 \\
    &	\textit{sMAPE}	&	{78.08}	&	{78.49}	&	35.34	&	79.44	&	82.61	&	68.96	&	56.06	&	63.93	&	85.21	&	37.94	&	115.5	&	\underline{\textbf{24.13}} &	{36.83}	&	58.45	&	35.70	&	{41.06}	&	36.44	&	37.92	&	175.5	&	53.96	&	{100.0}	&	{99.40}	&	42.87 \\ \hline 
    
    Singapore &	\textit{RMSE}	&	{205.1}	&	{202.9}	&	217.7	&	213.8	&	218.1	&	237.2	&	187.4	&	222.7	&	221.5	&	217.2	&	\underline{\textbf{171.4}}	&	184.3	&	{275.6}	&	199.6	&	216.7	&	{370.4}	&	293.0	&	206.0	&	643.5	&	337.2	&	{477.3}	&	{463.1}	&	218.8 \\
    Dengue	&	\textit{MASE}	&	{2.937}	&	{2.906}	&	3.197	&	3.146	&	3.207	&	3.521	&	2.677	&	3.310	&	3.272	&	3.190	&	\underline{\textbf{2.448}}	&	2.639	&	{4.352}	&	2.925	&	3.173	&	{6.633}	&	4.823	&	2.958	&	12.19	&	5.906	&	{7.212}	&	{7.623}	&	3.248 \\
    &	\textit{MAE}	&	{149.3}	&	{147.7}	&	162.4	&	159.9	&	163.0	&	179.0	&	136.0	&	168.2	&	166.3	&	162.1	&	\underline{\textbf{124.4}}	&	134.1	&	{221.2}	&	154.8	&	161.3	&	{337.2}	&	245.2	&	150.3	&	620.0	&	300.2	&	{366.6}	&	{387.5}	&	165.1 \\ 
    &	\textit{sMAPE}	&	{38.07}	&	{37.54}	&	42.78	&	41.91	&	42.98	&	48.84	&	33.80	&	44.94	&	44.18	&	42.67	&	\underline{\textbf{30.31}}	&	33.24	&	{66.79}	&	42.78	&	42.34	&	{138.2}	&	79.50	&	38.45	&	201.0	&	111.4	&	{117.3}	&	{142.9}	&	43.83 \\ \hline
    
    Venezuela &	\textit{RMSE}	&	{791.2}	&	{797.6}	&	794.7	&	742.2	&	801.9	&	767.8	&	776.2	&	804.6	&	1001	&	829.6	&	781.1	&	783.6	&	{\underline{\textbf{727.2}}}	&	792.6	&	794.8	&	{1394}	&	1115	&	1260	&	2685	&	1340	&	{752.1}	&	{986.8}	&	814.1 \\
    Dengue	&	\textit{MASE}	&	{4.101}	&	{4.136}	&	4.128	&	3.876	&	4.186	&	4.132	&	4.062	&	4.197	&	5.275	&	4.345	&	4.073	&	4.094	&	{\underline{\textbf{3.716}}}	&	4.112	&	4.128	&	{8.118}	&	6.232	&	7.257	&	16.13	&	7.776	&	{3.953}	&	{5.257}	&	4.317 \\
    &	\textit{MAE}	&	{671.9}	&	{677.6}	&	676.3	&	635.0	&	685.7	&	677.0	&	665.5	&	687.6	&	864.1	&	711.7	&	667.2	&	670.8	&	{\underline{\textbf{608.9}}}	&	673.8	&	676.4	&	{1329}	&	1021	&	1189	&	2643	&	1274	&	{647.8}	&	{861.3}	&	707.4 \\
    &	\textit{sMAPE}	&	{59.90}	&	{60.61}	&	60.47	&	55.65	&	61.73	&	61.52	&	59.31	&	61.95	&	86.40	&	65.07	&	59.45	&	59.93	&	{\underline{\textbf{52.24}}}	&	60.15	&	60.49	&	{190.5}	&	115.1	&	151.7	&	200.0	&	174.0	&	{49.36}	&	{86.35}	&	64.79 \\ \hline
    
    China &	\textit{RMSE}	&	{9519}	&	{9217}	&	6300	&	6164	&	6098	&	6700	&	5416	&	6172	&	7813	&	7271	&	7038	&	4781	&	{6421}	&	6894	&	6081	&	{968$E^2$}	&	6580	&	968$E^2$	&	171$E^3$	&	968$E^2$	&	{6346}	&	{968$E^2$}	&	\underline{\textbf{3492}} \\
    Hepatitis	&	MASE	&	{0.985}	&	{0.959}	&	0.732	&	0.705	&	0.697	&	0.791	&	0.685	&	0.707	&	0.899	&	0.827	&	0.883	&	0.616	&	{0.769}	&	0.812	&	0.669	&	{12.99}	&	0.752	&	12.99	&	22.94	&	12.99	&	{0.779} &	{12.99}	&	\underline{\textbf{0.397}} \\
    B	&	\textit{MAE}	&	{7331}	&	{7135}	&	5445	&	5239	&	5183	&	5885	&	5094	&	5254	&	6686	&	6153	&	6573	&	4584	&	{5722}	&	6042	&	4975	&	{966$E^2$}	&	5592	&	966$E^2$	&	170$E^3$	&	966$E^2$	&	{5797}	&	{966$E^2$}	&	\underline{\textbf{2954}} \\
    &	\textit{sMAPE}	&	{7.499}	&	{7.311}	&	5.642	&	5.432	&	5.375	&	6.086	&	5.289	&	5.448	&	6.880	&	6.352	&	6.775	&	4.690	&	{5.941}	&	6.242	&	5.161	&	{199.9}	&	5.791	&	199.9	&	200.0	&	199.9	&	{6.022}	&	{199.9}	&	\underline{\textbf{3.063}} \\ \hline
    
    Colombia &	\textit{RMSE}	&	{256.7}	&	{252.8}	&	266.2	&	276.3	&	274.4	&	437.2	&	417.6	&	306.3	&	218.5	&	269.9	&	243.8	&	260.7	&	{373.7}	&	265.7	&	262.1	&	{989.4}	&	284.8	&	904.2	&	2685	&	928.5	&	{1347}	&	{390.4}	&	\underline{\textbf{183.4}} \\
    Malaria 	&	\textit{MASE}	&	{1.604}	&	{1.604}	&	1.472	&	1.481	&	1.473	&	2.384	&	2.344	&	1.617	&	1.396	&	1.676	&	1.384	&	1.459	&	{2.297}	&	1.487	&	1.453	&	{7.887}	&	1.584	&	7.155	&	21.57	&	7.364	&	{10.77}	&	{2.879}	&	\underline{\textbf{1.217}} \\
    &	\textit{MAE}	&	{194.5}	&	{194.3}	&	178.3	&	179.4	&	178.5	&	288.9	&	284.0	&	195.8	&	169.1	&	203.1	&	167.7	&	176.8	&	{278.4}	&	180.1	&	176.0	&	{955.6}	&	191.9	&	867.0	&	2614	&	892.3	&	{1305}	&	{348.9}	&	\underline{\textbf{147.5}} \\
    &	\textit{sMAPE}	&	{22.92}	&	{22.96}	&	21.11	&	21.11	&	21.05	&	28.89	&	28.75	&	22.36	&	20.45	&	23.70	&	20.24	&	21.01	&	{28.91}	&	21.29	&	20.92	&	{192.3}	&	22.33	&	156.3	&	200.0	&	165.8	&	{81.87}	&	{39.77}	&	\underline{\textbf{18.39}} \\ \hline
   
    Venezuela &	\textit{RMSE}	&	{121.7}	&	{123.9}	&	121.1	&	120.6	&	120.7	&	138.3	&	158.9	&	120.6	&	127.3	&	139.4	&	129.6	&	151.9	&	{814.2}	&	120.8	&	120.9	&	{1604}	&	187.9	&	1469	&	2766	&	1535	&	{395.7}	&	{1275}	&	\underline{\textbf{114.2}} \\
    Malaria	&	\textit{MASE}	&	{0.747}	&	{0.719}	&	0.825	&	0.798	&	0.786	&	0.960	&	1.112	&	0.801	&	\underline{\textbf{0.711}}	&	0.874	&	0.930	&	1.072	&	{6.147}	&	0.802	&	0.816	&	{12.21}	&	1.124	&	11.18	&	20.86	&	11.68	&	{2.736}	&	{9.687}	&	0.782 \\
    &	\textit{MAE}	&	{97.92}	&	{94.26}	&	108.0	&	104.4	&	102.9	&	125.7	&	145.6	&	104.9	&	\underline{\textbf{93.13}}	&	114.5	&	121.8	&	140.5	&	{805.3}	&	105.0	&	106.9	&	{1599}	&	147.3	&	1464	&	2733	&	1530	&	{358.4}	&	{1269}	&	102.5 \\ 
    &	\textit{sMAPE}	&	{6.107}	&	{5.883}	&	6.731	&	6.511	&	6.414	&	7.845	&	9.136	&	6.536	&	\underline{\textbf{5.815}}	&	7.141	&	7.598	&	8.797	&	{65.59}	&	6.546	&	6.664	&	{194.1}	&	9.264	&	164.0	&	200.0	&	178.1	&	{24.81}	&	{128.3}	&	6.407 \\ \hline
    
    \end{tabular}
    \label{S-DL}
\end{table*}
\end{landscape}

\begin{figure}[H]
    \centering
    \includegraphics[width=\textwidth]{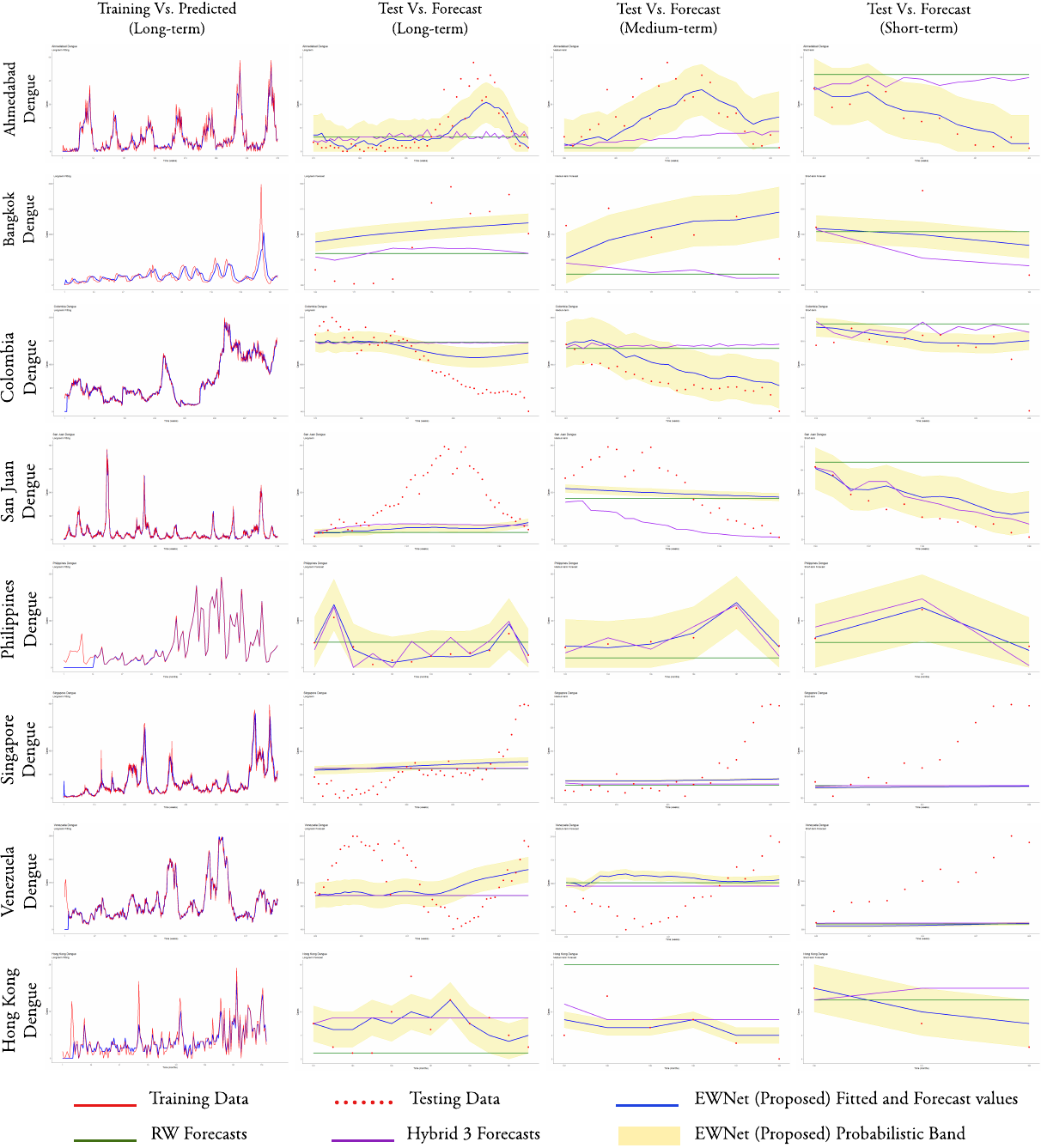}
    \caption{{The plot shows the ground truth (red), fitted values and forecasts of the EWNet model (blue), forecasts of the RW model (green), forecasts of the Hybrid-3 model (purple), and the probabilistic band (based on the confidence interval approach) of the proposed EWNet framework (yellow shaded) for different datasets. On each row, the plots from left to right represent the training and fitted values of the EWNet framework; long-term forecasts (point and interval) and ground truth data; medium-term forecasts (point and interval) and ground truth data; and short-term forecasts (point and interval) and ground truth data, respectively. For each plot, the vertical axis represents dengue cases, and the horizontal axis represents the time horizon.}}
    \label{WARNN_Fitting_1}
\end{figure}

\begin{figure}[H]
    \centering
    \includegraphics[width=\textwidth]{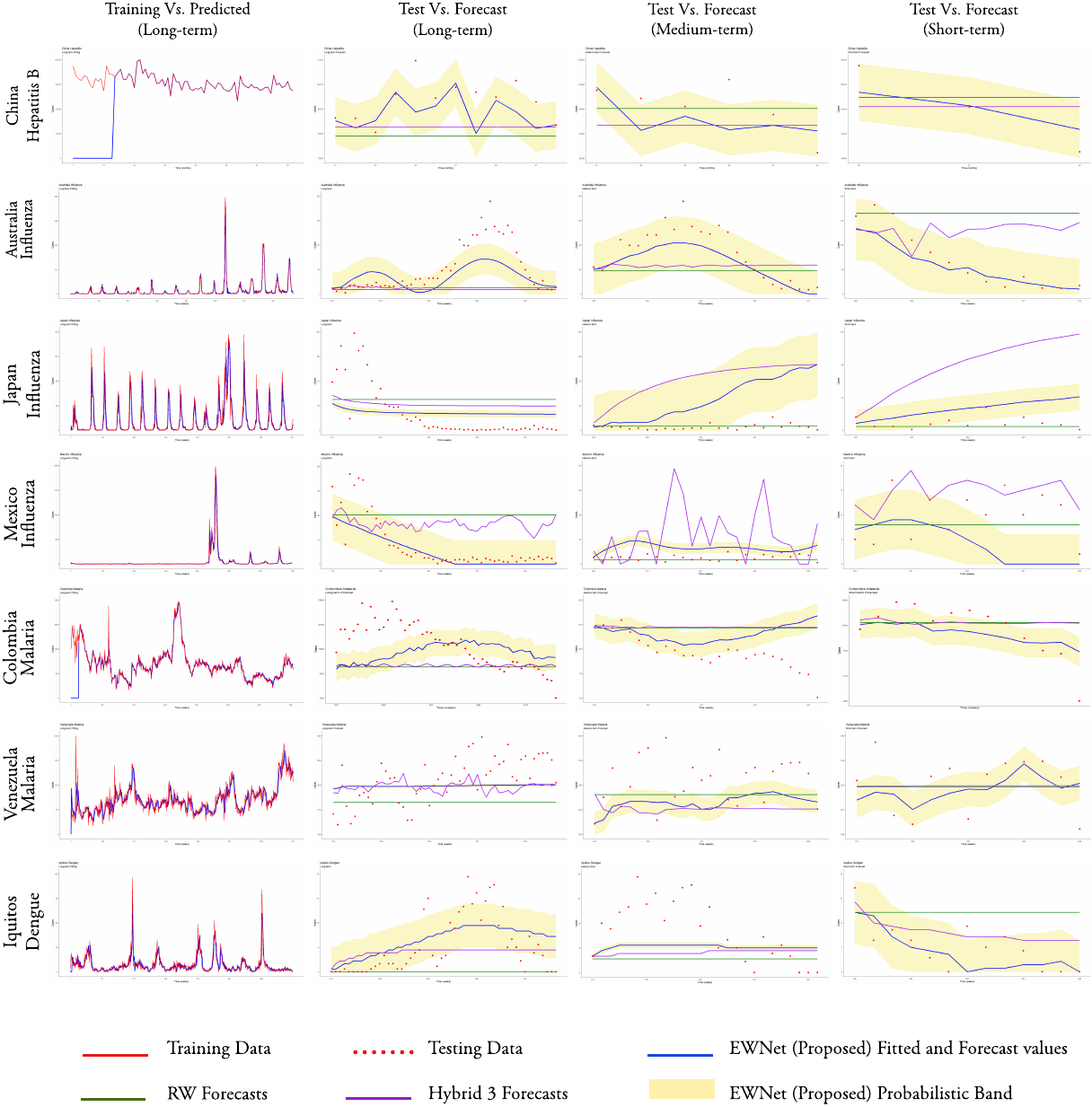}
    \caption{{The plot shows the ground truth (red), fitted values of the EWNet model (blue), and forecasts of the overall top two performing models based on four statistical measures: EWNet (blue), RW (green), Hybrid-3 (purple), and the probabilistic band (based on the confidence interval approach) of the proposed EWNet framework (yellow shaded) for different datasets. On each row, the plots from left to right represent the training and fitted values of the EWNet framework; long-term forecasts (point and interval) and ground truth data; medium-term forecasts (point and interval) and ground truth data; and short-term forecasts (point and interval) and ground truth data, respectively. For each plot, the vertical axis represents epidemic cases, and the horizontal axis represents the time horizon.}}
    \label{WARNN_Fitting_2}
\end{figure}

\begin{figure}[H]
    \centering
    \includegraphics[width=\textwidth]{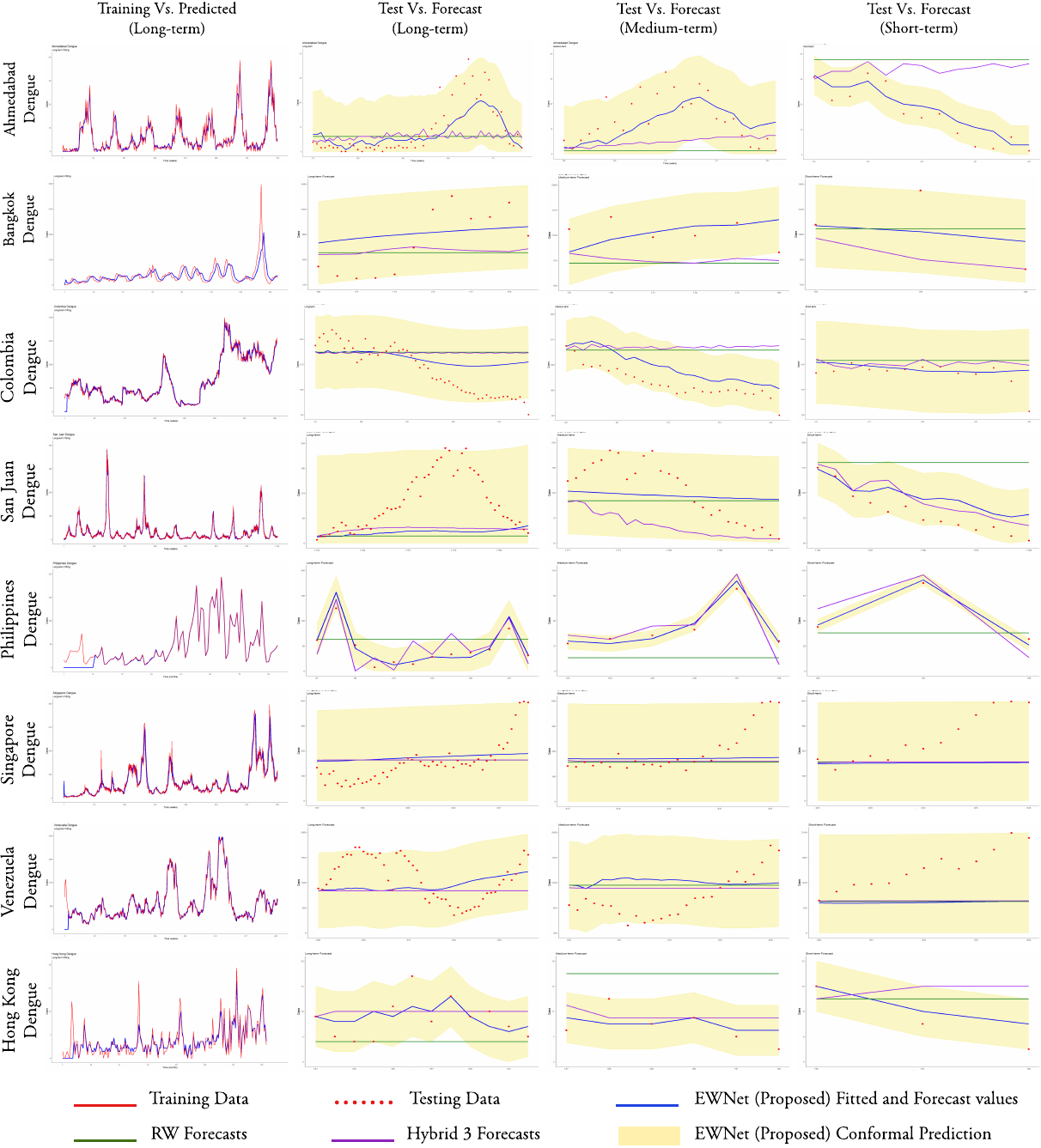}
    \caption{{The plot shows the ground truth (red), fitted values and forecasts of the EWNet model (blue), forecasts of the RW model (green), forecasts of the Hybrid-3 model (purple), and the probabilistic band (based on the conformal prediction approach) of the proposed EWNet framework (yellow shaded) for different datasets. On each row, the plots from left to right represent the training and fitted values of the EWNet framework; long-term forecasts (point and interval) and ground truth data; medium-term forecasts (point and interval) and ground truth data; and short-term forecasts (point and interval) and ground truth data, respectively. For each plot, the vertical axis represents dengue cases, and the horizontal axis represents the time horizon.}}
    \label{WARNN_Fitting_Conformal_1}
\end{figure}

\begin{figure}[H]
    \centering
    \includegraphics[width=\textwidth]{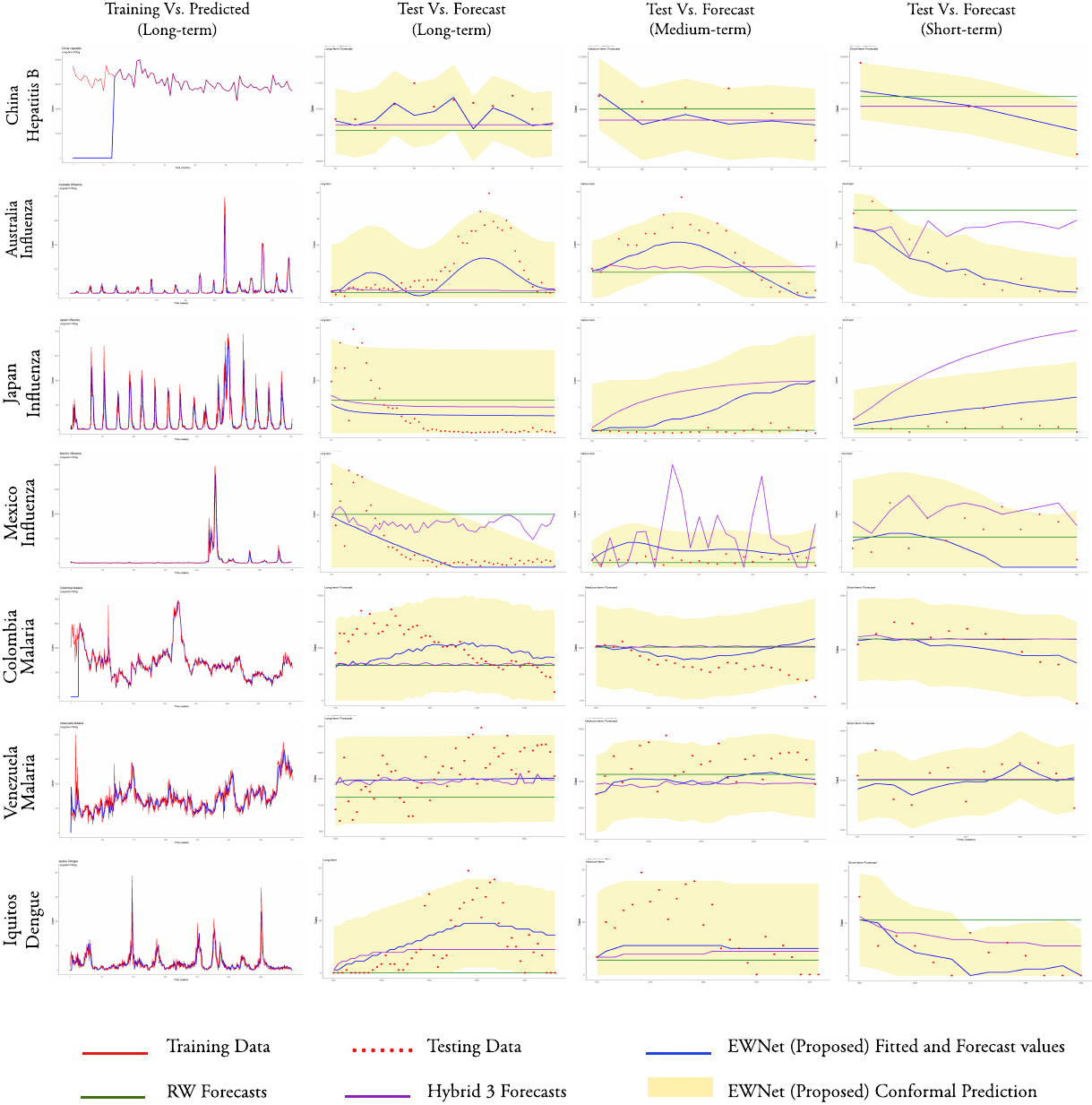}
    \caption{{The plot shows the ground truth (red), fitted values of the EWNet model (blue), and forecasts of the overall top two performing models based on four statistical measures: EWNet (blue), RW (green), Hybrid-3 (purple), and the probabilistic band (based on the conformal prediction approach) of the proposed EWNet framework (yellow shaded) for different datasets. On each row, the plots from left to right represent the training and fitted values of the EWNet framework; long-term forecasts (point and interval) and ground truth data; medium-term forecasts (point and interval) and ground truth data; and short-term forecasts (point and interval) and ground truth data, respectively. For each plot, the vertical axis represents epidemic cases, and the horizontal axis represents the time horizon.}}
    \label{WARNN_Fitting_Conformal_2}
\end{figure}


\section{\label{stat_signif_test}Significance of the Improvement and Threats to Validity}

In this section, we comment on the significance of the improvements in accuracy measures and discuss the potential threats that can impact the results of our experimental analysis.

\subsection{Overall Assessment of the Benchmark Comparisons and Potential Improvement}
A couple of interesting phenomena are observable from the experiments. {Firstly, the proposed EWNet framework produces the best epicasting results in 60\% of the datasets (9 out of 15 datasets) for long-term forecast horizon, whereas in medium-term and short-term forecasting, it outperforms the competitive forecasters in 27\% and 47\% cases, respectively in comparison with 22 benchmark forecasters} 
{Secondly, among the baseline forecasters, the ARNN \cite{faraway1998time} and the support vector regression (SVR) \cite{smola2004tutorial} models generate a better short-term forecast, whereas for medium-term epicasting, the persistence models namely, random walk (RW) \cite{pearson1905problem} and the random walk with drift (RWD) \cite{entorf1997random} methods demonstrate higher accuracy. Moreover, for long-term horizon WARIMA \cite{aminghafari2007forecasting}, hybrid ARIMA-WARIMA (Hybrid 1) \cite{chakraborty2020real}, and TBATS \cite{de2011forecasting} models have better forecasting ability than the previously proposed baseline epicasters. Nevertheless, the overall performance of the random walk (RW) \cite{pearson1905problem} model and hybrid ARIMA-ARNN (Hybrid-3) \cite{chakraborty2019forecasting} framework are better than other baselines in terms of different accuracy measures. Another critical observation is that the performance of the advanced deep learning frameworks, specifically LSTM \cite{hochreiter1997long}, NBeats \cite{oreshkin2019n}, and Deep AR \cite{salinas2020deepar} is superior in comparison with other models for 17\% of the cases.} 
This observation is interesting since the epidemic datasets' lengths range from 92 to 1196, and deep learners mostly succeed with large datasets. It is a common problem in epidemic datasets since historical records are seldom available. {In our experimental evaluation, we also employed other wavelet-based ensemble techniques with traditional ARIMA model and data-driven Transformers and NBeats methods in the combination phase as  WARIMA \cite{aminghafari2007forecasting}, W-Transformers \cite{sasal2022w}, and W-NBeats \cite{singhal2022fusion} models, respectively. Although the WARIMA \cite{aminghafari2007forecasting} method generates better epicasts for the long-term horizon, its overall rank of 9.79 (w.r.t. MASE score, ref Fig. \ref{MCB_Test}(b)) lags behind the proposed EWNet framework with an overall rank of 3.69 (w.r.t. MASE score, ref Fig. \ref{MCB_Test}(b)). This failure of the WARIMA model is primarily attributed to the inability of the linear ARIMA method to generalize well on nonlinear epidemic datasets. In the case of the recently proposed W-Transformers model \cite{sasal2022w}, the authors have extended the idea of EWNet by incorporating the attention-based Transformers model with the MODWT decomposed time series. As aptly pointed out by the authors in their manuscript, this approach works better with high-frequency datasets having several observations; however, for the epidemic datasets with fewer historical observations, this framework fails to generate satisfactory forecasts \cite{sasal2022w}. Moreover, the W-NBeats architecture utilizes the deep learning-based NBeats model in the ensemble framework. Since the NBeats model is a fully-connected deep neural network architecture based on backward and forward residual links, it is a benchmark method for large time series datasets with complex seasonalities \cite{oreshkin2019n}. However, real-world epidemic datasets exhibit irregularities and typically comprise of limited data (low-frequency), leading to the failure of the W-NBeats framework to generate satisfactory results in the epicasting task as compared to the proposed EWNet model.}

From Tables \ref{S-DL}-\ref{L-Trad}, we observe a significant improvement in epicasting by applying the proposed EWNet framework as reported by the RMSE, MASE, MAE, and sMAPE scores. Furthermore, the evaluation of the EWNet model on the crude incidence data of various diseases for diverse regions portrays that the proposed methodology can capture the long-range dependence of the series. 
Thus, based on the experimental evaluations, we can conclude that the framework proposed in this paper can potentially be used as an early warning system by public health officials and disease control programs to plan and prevent the outbreak with a substantial lead time.

\subsection{\label{robustness}Statistical Significance of the Results}

Next, we focus on determining the statistical significance of the forecasts obtained from our proposed model compared to its counterparts generated by other benchmark forecasters. We initially utilize multiple comparisons with the best (MCB) \cite{koning2005m3} procedure to determine the relative performance of different methods. For this non-parametric test, we compute the models' average ranks based on the RMSE, MASE, MAE, and sMAPE scores for different epidemic datasets and their corresponding critical distances. {The results of the MCB test presented in Fig. \ref{MCB_Test} can be interpreted as follows: The proposed EWNet model has the least rank ($3.57$), ($3.69$), ($3.82$), and ($4.31$); in terms of RMSE, MASE, MAE, and sMAPE scores. Moreover, the upper boundary of the critical distance for the EWNet model (marked by the shaded region) is the reference value for the test. Since all the benchmark forecasters have critical intervals (w.r.t. RMSE, MASE, and MAE scores) entirely above the reference value without overlap, they perform significantly worse than the proposed EWNet method. In the case of the sMAPE metric, there is a slight overlap between the critical intervals of the EWNet framework and the  RW model; however, the non-overlapping critical intervals for the other baseline forecasters indicate that their performance is significantly worse than the proposed method.}

\begin{figure}[H]
\centering
\includegraphics[width=0.46\textwidth]{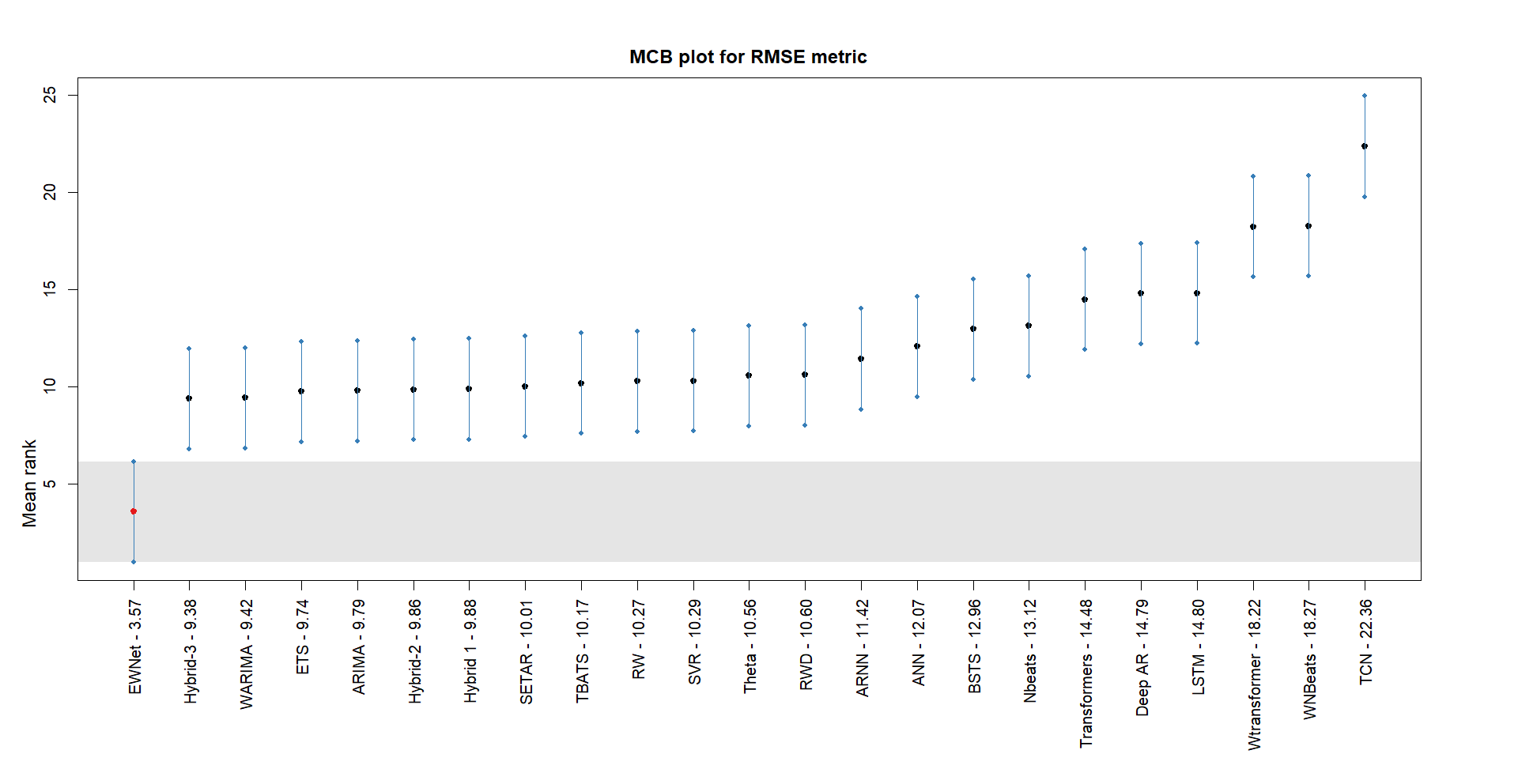}(a)
\includegraphics[width=0.46\textwidth]{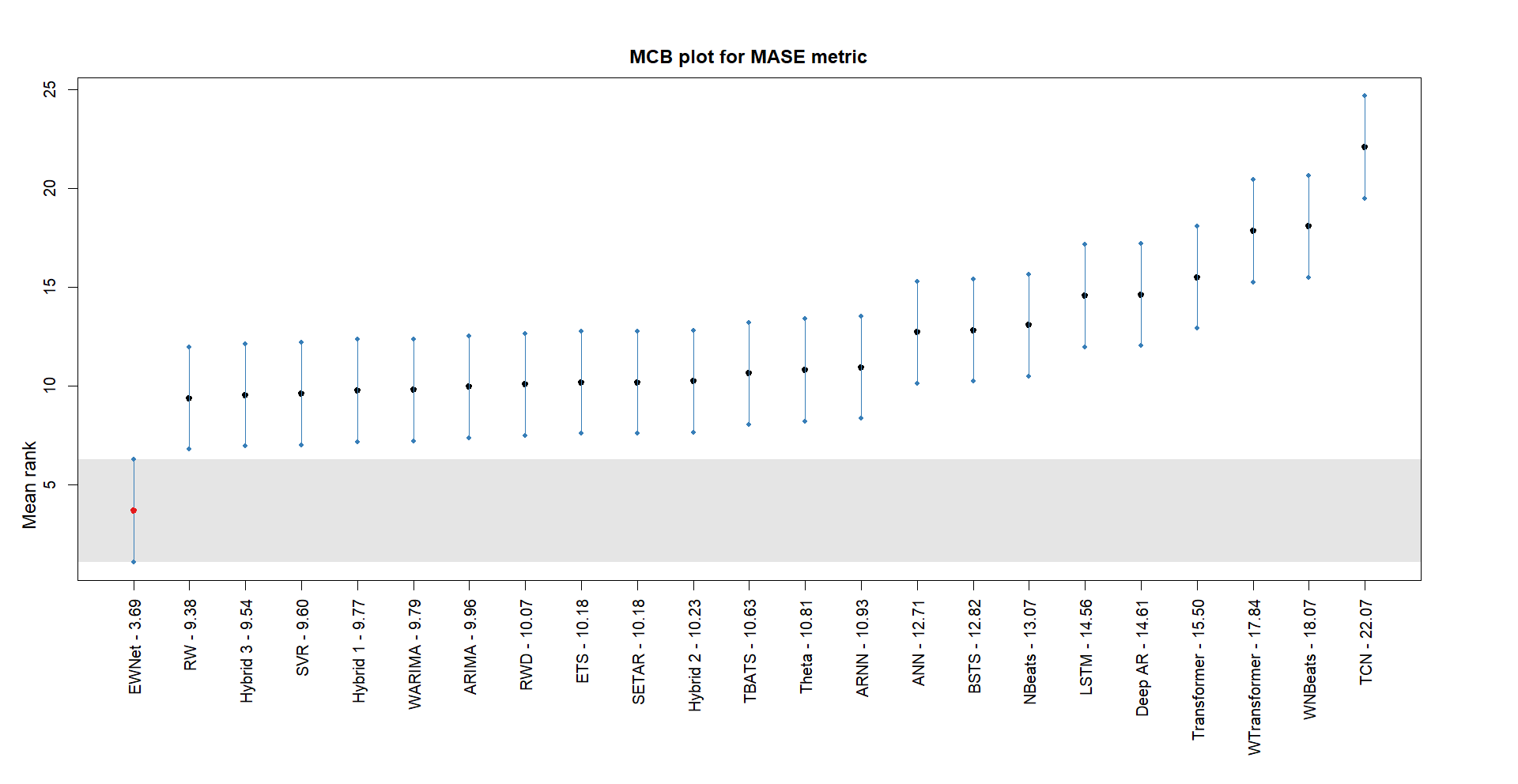}(b)\\
\includegraphics[width=0.46\textwidth]{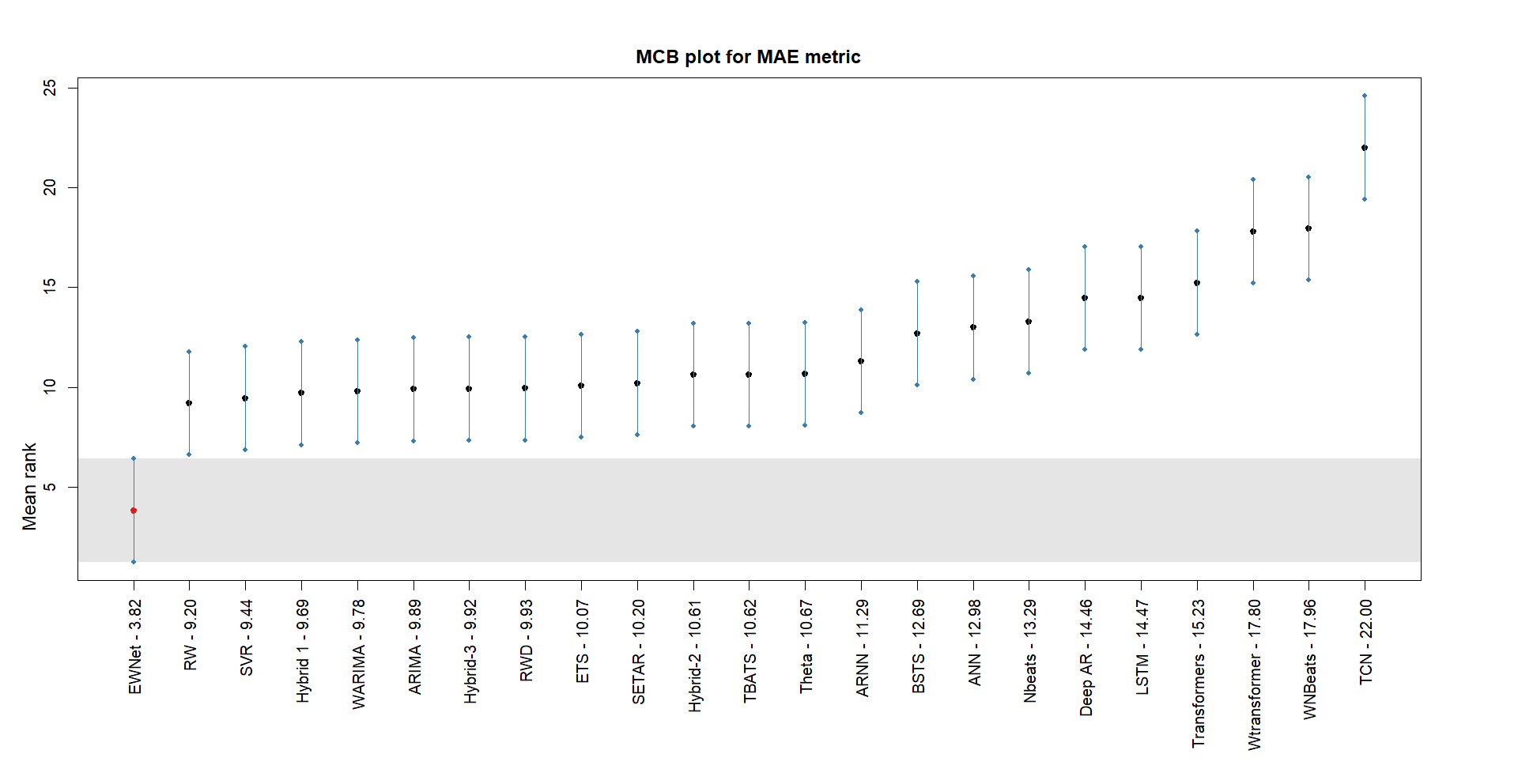}(c)
\includegraphics[width=0.46\textwidth]{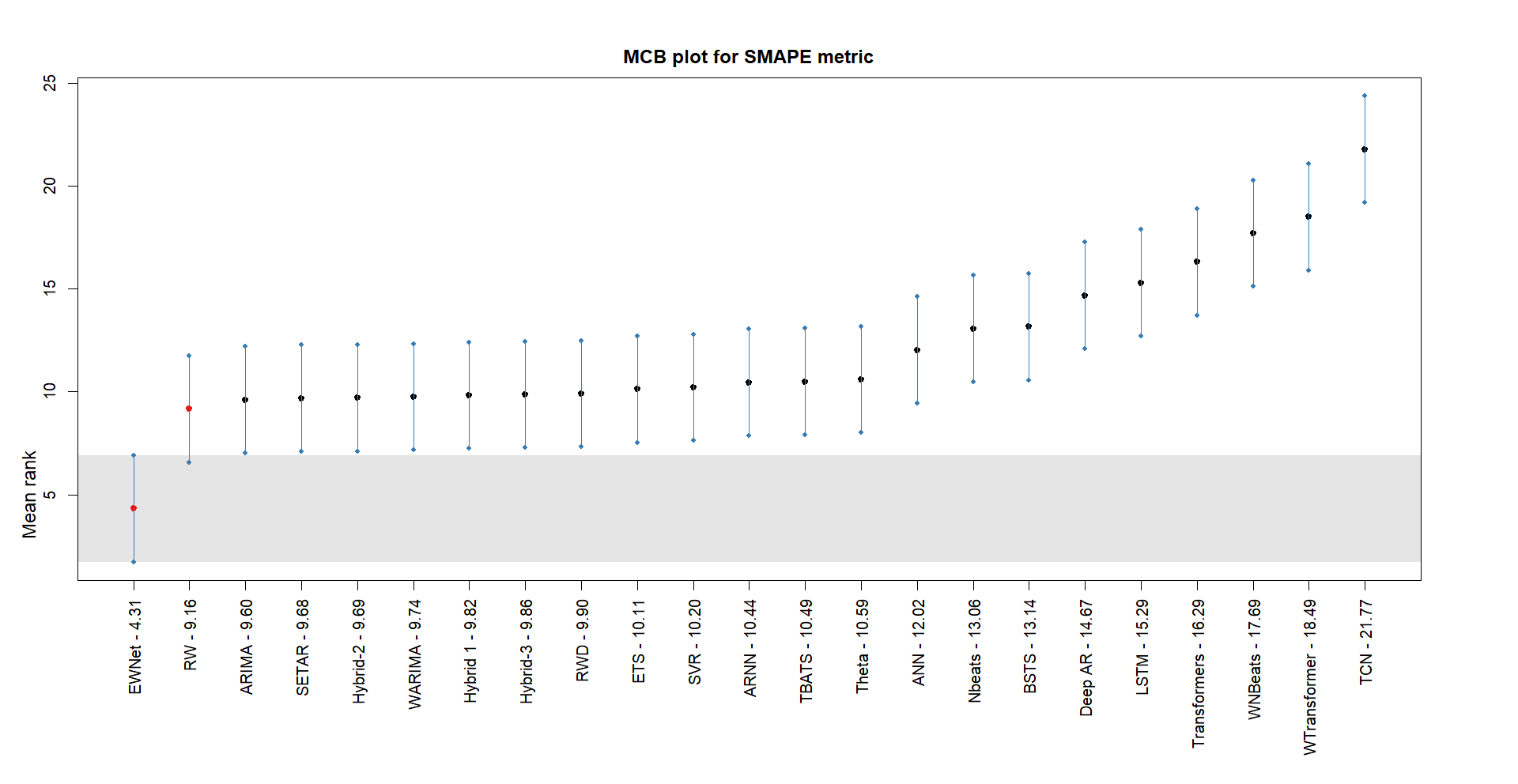}(d)
\caption{{Visualization of the multiple comparisons with the best (MCB) analysis. The figure demonstrates the MCB test results w.r.t. (a) RMSE metric, (b) MASE metric, (c) MAE metric, and (a) sMAPE metric. The vertical axis for each plot represents the average rank and the horizontal axis depicts the corresponding model such that {EWNet-3.57} in (a) indicates the average rank of the proposed EWNet model based on RMSE metric is $3.57$, and similar to others.}}\label{MCB_Test}
\end{figure}

Alongside the MCB test, we consider a non-parametric Friedman test \cite{friedman1937use, friedman1940comparison} for determining the robustness of our experimental evaluation. This statistical methodology tests the null hypothesis that all models are equivalent based on their rankings across various accuracy measures for different datasets. The ranking mechanism assigns rank 1 to the best-performing method, rank 2 to the second-best, and so on. The average of the ranks across all the datasets is then computed for different models. This distribution-free test rejects the null hypothesis of model equivalence if the value of the test statistic is greater than the critical value \cite{iman1980approximations}. Let $r_{m,d}$ denote the rank assigned to $m$-th model (out of $\tilde{M}$ models) for the $d$-th dataset (out of $\tilde{D}$ datasets). The Friedman test compares the average rank, computed using the following formula, among several algorithms: 
$ R_m = \frac{1}{\tilde{D}} \sum_d r_{m,d}. $
Under the null hypothesis, i.e., the ranks $R_m$ are equal for all $m = 1, 2, \ldots, \tilde{M}$, the Friedman statistic defined as:
\[\chi_F^2 = \frac{12 \tilde{D}}{\tilde{M}(\tilde{M}+1)} \left[\sum_m R_m^2 - \frac{\tilde{M}(\tilde{M}+1)^2}{4}\right],\]
follows $\chi^2$ distribution with $(\tilde{M}-1)$ degrees of freedom, when $\tilde{M}$ and $\tilde{D}$ are large. Owing to several difficulties with the Friedman statistic for a lesser number of datasets and algorithms, a modification of the test statistic was proposed in 1980 by Iman \cite{iman1980approximations} as 
\[
    F_F = \frac{(\tilde{D}-1)\chi_F^2}{\tilde{D} (\tilde{M}-1)-\chi_F^2},
\]
which is distributed as $F$ distribution with $(\tilde{M}-1)$ and $(\tilde{M}-1)(\tilde{D}-1)$ degrees of freedom. 

\begin{table}
\centering
\caption {Average rank of the algorithms corresponding to the performance measures (best-ranked model is \underline{\textbf{highlighted}})}
    \begin{tabular}{|c|c|c|c|c|}
    \hline
        Models & RMSE & MASE & MAE & sMAPE \\ \hline
        {RW} \cite{pearson1905problem} &  {10.27}	&	{9.378}	&	{9.200}	&	{9.156} \\ \hline
        {RWD} \cite{entorf1997random}  &  {10.60}	&	{10.07}	&	{9.933}	&	{9.889} \\ \hline
        ARIMA \cite{box2015time} &	9.786	&	9.961	&	9.887	&	9.604 \\ \hline
        ETS \cite{hyndman2008forecasting}   & 9.741	&	10.18	&	10.07	&	9.911 \\ \hline 
        Theta \cite{assimakopoulos2000theta} & 10.56	&	10.80	&	10.67	&	10.33  \\ \hline 
        WARIMA \cite{aminghafari2007forecasting} & 9.422	&	9.788	&	9.778	&	9.400 \\ \hline  
        SETAR \cite{tong2009threshold} & 10.01	&	10.18	&	10.20	&	9.244 \\ \hline 
        TBATS \cite{de2011forecasting} & 10.17	&	10.63	&	10.62	&	10.27 \\ \hline
        BSTS \cite{scott2014predicting} &	12.96	&	12.82	&	12.69	&	13.29 \\ \hline
        Hybrid-1 \cite{chakraborty2020real} &	9.876	&	9.766	&	9.689	&	9.711 \\ \hline
        ANN \cite{rumelhart1986learning} &	12.07	&	12.71	&	12.98	&	12.09 \\ \hline
        ARNN \cite{faraway1998time} &	11.42	&	10.93	&	11.27	&	10.56 \\ \hline
        {SVR} \cite{smola2004tutorial} &	{10.29}	&	{9.600}	&	{9.444}	&	{10.36}\\ \hline
        Hybrid-2 \cite{zhang2003time} &	9.862	&	10.22	&	10.61	&	9.822 \\ \hline
        Hybrid-3 \cite{chakraborty2019forecasting} &	9.380	&	9.511	&	9.889	&	9.978 \\ \hline
        {LSTM} \cite{hochreiter1997long} &	{14.80}	&	{14.56}	&	{14.47}	&	{15.44} \\ \hline
        NBeats \cite{oreshkin2019n} &	13.12	&	13.07	&	13.27	&	13.20 \\ \hline
        Deep AR \cite{salinas2020deepar} &	14.79	&	14.61	&	14.46	&	14.71 \\ \hline
        TCN \cite{chen2020probabilistic} &	22.36	&	22.07	&	22.00	&	21.73 \\ \hline
        Transformers \cite{wu2020deep} &	14.48	&	15.50	&	15.23	&	16.40 \\ \hline
        {W-NBeats} \cite{singhal2022fusion} &	{18.27}	&	{18.07}	&	{17.96}	&	{17.67} \\ \hline
        {W-Transformer} \cite{sasal2022w} &	{18.22}	&	{17.84}	&	{17.80}	&	{18.47} \\ \hline 
        \textcolor{blue}{Proposed EWNet} &	\underline{\textbf{3.573}}	&	\underline{\textbf{3.689}}	&	\underline{\textbf{3.822}}	&	\underline{\textbf{4.431}} \\ \hline
    \end{tabular}
    \label{rank_of_models}
\end{table}

\begin{table}
    \centering
    \caption{Values of Friedman Test statistic for various accuracy metrics}
    \begin{tabular}{|l|l|l|l|l|} \hline
    Test Statistic & RMSE & MASE & MAE & sMAPE \\ \hline
    $\chi_F^2$ & {316.60} & {306.88} & {302.00} & {311.42} \\ \hline
    $F_F$  &  {20.686} & {19.766} & {19.314} & {20.193} \\ \hline
    \end{tabular}
    \label{Chi and F statistic values}
\end{table}

Following the Friedman test procedure, we compute the ranks of various models for different epidemic datasets. Table \ref{rank_of_models} provides the average ranks of the models for different accuracy measures. From Table \ref{rank_of_models}, we can infer that the proposed EWNet model gets the upper hand in epicasting the disease dynamics over all other models. 
{Amongst several benchmarks, hybrid ARIMA-ARNN (Hybrid-3) (second best model w.r.t RMSE) and random walk (RW) (second best model w.r.t MASE, MAE, and sMAPE) perform better than other baselines. 
Moreover, we summarize the value of the Friedman test statistics $\chi_F^2$ and $F_F$ obtained for the 23 models across different test horizons of the 15 datasets in Table \ref{Chi and F statistic values}. Since the observed value of the statistic $F_F$ (as tabulated in Table \ref{Chi and F statistic values}) is greater than the critical value $F_{22,968} = 1.553$, so we reject the null hypothesis at $5\%$ level of significance and conclude that the performance of the algorithms considered in our study is significantly different across all the performance measures.}

Furthermore, we proceed to check whether the forecast performance of the proposed EWNet model is significantly different from other models by utilizing a post-hoc non-parametric Wilcoxon signed-rank test \cite{woolson2007wilcoxon}. This test checks the null hypothesis that no significant difference exists between the forecasts generated by the proposed EWNet model and state-of-the-art approaches at 95\% significance level. The distribution-free Wilcoxon signed-rank test procedure rejects the null hypothesis if the calculated p-value for the test is below 0.05 and concludes that there is a significant difference between the epicasting ability of the proposed model and other state-of-the-art methods. From the results obtained in this test, tabulated in Table \ref{p-values}, we can infer that the proposed EWNet model's performance is statistically significant compared to all other models considered in the analysis. Thus from the above performed statistical tests, we can infer at a 5\% significance level that the potential improvement in the epicasting performance of our proposed EWNet framework is robust and statistically significant. 

\begin{table}[t]
    \centering
    \caption{Statistical Significance values (p-values) for EWNet and other models for Wilcoxin Signed-rank test}
    \begin{tabular}{|c|c|c|c|c|}
    \hline
        ~ & RMSE & MASE & MAE & sMAPE \\ \hline
        {RW} \cite{pearson1905problem} &  {0.00012}	&	{0.00094} &	{0.00466}	&	{0.00200} \\ \hline
        {RWD} \cite{entorf1997random}  &  {0.00008}	&	{0.00084} &	{0.00452}	&	{0.00188} \\ \hline
        ARIMA \cite{box2015time} &	$<$ 0.00001	&	$<$ 0.00001	&	$<$ 0.00001	&	{0.00108} \\ \hline
        ETS \cite{hyndman2008forecasting}   & $<$ 0.00001	&	{0.00014}	&	{$<$ 0.00001}	&	{0.00138} \\ \hline 
        Theta \cite{assimakopoulos2000theta} & $<$ 0.00001	&	{$<$ 0.00001}	&	$<$ 0.00001	&	{0.00058}  \\ \hline 
        WARIMA \cite{aminghafari2007forecasting} & $<$ 0.00001	&	$<$ 0.00001	&	$<$ 0.00001	&	{0.00328} \\ \hline  
        SETAR \cite{tong2009threshold} & {$<$ 0.00001}	&	$<$ 0.00001	&	$<$ 0.00001	&	{0.00424} \\ \hline 
        TBATS \cite{de2011forecasting} & $<$ 0.00001	&	$<$ 0.00001	&	$<$ 0.00001	&	{0.00016} \\ \hline
        BSTS \cite{scott2014predicting} &	$<$ 0.00001	&	$<$ 0.00001	&	$<$ 0.00001	&	$<$ 0.00001 \\ \hline
        Hybrid-1 \cite{chakraborty2020real} &	$<$ 0.00001	&	$<$ 0.00001	&	$<$ 0.00001	&	{0.00228} \\ \hline
        ANN \cite{rumelhart1986learning} &	$<$ 0.00001	&	$<$ 0.00001	&	$<$ 0.00001	&	$<$ 0.00001 \\ \hline
        ARNN \cite{faraway1998time} &	$<$ 0.00001	&	$<$ 0.00001	&	$<$ 0.00001	&	{$<$ 0.00001} \\ \hline
        {SVR} \cite{smola2004tutorial} &	{0.00028}	&	{0.00194} &	{0.00052}	&	{0.00100}\\ \hline
        Hybrid-2 \cite{zhang2003time} &	$<$ 0.00001	&	$<$ 0.00001	&	$<$ 0.00001	&	$<$ 0.00001 \\ \hline
        Hybrid-3 \cite{chakraborty2019forecasting} &	$<$ 0.00001	&	$<$ 0.00001	&	$<$ 0.00001	&	$<$ 0.00001 \\ \hline
        {LSTM} \cite{hochreiter1997long} &	{$<$ 0.00001}	&	{$<$ 0.00001}	&	{$<$ 0.00001}	&	{$<$ 0.00001} \\ \hline
        NBeats \cite{oreshkin2019n}      &	$<$ 0.00001	&	$<$ 0.00001	&	$<$ 0.00001	&	$<$ 0.00001 \\ \hline
        Deep AR \cite{salinas2020deepar} &	$<$ 0.00001	&	$<$ 0.00001	&	$<$ 0.00001	&	$<$ 0.00001 \\ \hline
        TCN \cite{chen2020probabilistic} &	$<$ 0.00001	&	$<$ 0.00001	&	$<$ 0.00001	&	$<$ 0.00001 \\ \hline
        Transformers \cite{wu2020deep}   &	$<$ 0.00001	&	$<$ 0.00001	&	$<$ 0.00001	&	$<$ 0.00001 \\ \hline
        {W-NBeats} \cite{singhal2022fusion} &	{$<$ 0.00001}	&	{$<$ 0.00001} &	{$<$ 0.00001}	&	{$<$ 0.00001} \\ \hline
        {W-Transformer} \cite{sasal2022w} &	{$<$ 0.00001}	&	{$<$ 0.00001}	&	{$<$ 0.00001}	&	{$<$ 0.00001} \\ \hline 
    \end{tabular}
    \label{p-values}
\end{table}

\subsection{\label{val_perf_measures}Validation of Data, Results, and Performance Measures}

Our analysis is based on fifteen epidemic datasets (influenza, dengue, malaria, and hepatitis B) collected from publicly available sources. The dengue datasets have been used multiple times in various studies for formulating better epicasting techniques \cite{chakraborty2019forecasting, deb2022ensemble, johnson2018phenomenological, johansson2019open}. Our chosen datasets are diverse in nature, representing several diseases from distinct locations, with varied lengths, frequency, and statistical characteristics, which generalizes our findings. However, further investigations on some other infectious disease datasets are essential in future work. We did not consider Covid-19 datasets in our study due to their dubious nature, and thus forecasting Covid-19 majorly failed due to lack of transparency, errors, and lack of determinacy \cite{ioannidis2020forecasting}. In our study, RMSE, MASE, MAE, and sMAPE are considered as the key performance indicator \cite{box2015time, hyndman2008forecasting}. Different accuracy measures are available in the time series forecasting literature, and the metric's choice may influence the forecasters' performance. Although we considered both absolute, percentage, and scaled error measures for computing the epicasters' performance, several other measures can be considered for studying the effectiveness of different models. 
The proposed EWNet overall performed well compared with twenty-two statistical, machine learning, and deep learning models. However, epidemic outbreaks sometimes vary with respect to climatic, social, environmental, biological, and human factors. In this study, we have only studied the past observations of epidemic datasets and extrapolated the forecasts based on past dependency to provide valuable insights into the disease dynamics. 

\section{\label{sec:Discussion}Conclusions and Discussions}
Infectious disease outbreaks play an essential role in global morbidity and mortality. Real-time epidemic forecasting provides an opportunity to predict geographic disease spread and case counts to inform public health interventions better when outbreaks occur. Providing actionable insights, such as accurate forecasting of case counts with reliable uncertainty quantification, is critical for resource allocation and preparedness planning. Epidemic forecasting (called `epicasting') is beginning to be integrated into infectious disease outbreak response decision-making processes. We propose an EWNet model that could accelerate the adoption of forecasting among public health practitioners, improve epidemic management, save lives, and reduce the economic impact of outbreaks. We investigated our proposed model on the laboratory-confirmed cases of influenza, dengue, hepatitis - B, and malaria datasets for different regions. The majority of these datasets exhibit assertive nonlinear and non-stationary behavior. We proposed a new variant of the wavelet-based forecasting technique using the ARNN model and outperformed several statistical, machine learning, and deep learning models on average. Additionally, we have shown theoretical results and derived their appropriate probabilistic bands, which back the success of the proposed EWNet model. {Based on the experimental results with epidemic datasets, the proposed EWNet model is well-suited to extrapolate the future dynamics of non-stationary and nonlinear epidemic datasets due to the hybridization of wavelet decomposition and ARNN framework. The proposed EWNet model can be deployed as an early warning system that can be monitored and automatically retrained with crude incidence data of the infectious disease in an incremental training or batch training procedure. Additionally, the theoretical basis for selecting the model's hyperparameters significantly reduces its run-time complexity compared to state-of-the-art deep learners. It enables the proposed epicaster to generate real-time forecasts. These real-time forecasts backed with reliable prediction intervals will allow health officials to monitor infectious disease dynamics and aid in designing effective disease-combatting policies.} However, several factors can be identified as essential components in establishing a good prediction for an epidemic or disease risk. For example, the accuracy of EWNet can be improved if we include geographical scale, temperature, rainfall, or other attributes that impact individual epidemics. These limitations of outbreak prediction will ensure the adoption of predictive tools by public health officials, operations managers, and healthcare practitioners. Forecasting the epidemic outbreak based on certain auxiliary variables may be considered a future scope of work to further improve the EWNet model for multivariate set-up. Another interesting future direction would be to explore the EWNet model in various other applied forecasting research.  


\subsection*{Conflict of interest}
The authors declare that they have no conflict of interest.

\subsection*{Data availability statement}
The datasets and the codes for implementing the proposed EWNet model are made available at \url{https://github.com/mad-stat/Epicasting}.

\bibliographystyle{plain}
\biboptions{square}
\bibliography{Bibliography}

\end{document}